\documentclass[11pt,letterpaper]{article}
\usepackage[top=1in,bottom=1in,left=1in,right=1in]{geometry}
\usepackage[utf8]{inputenc}
\usepackage[T1]{fontenc}
\usepackage{microtype}
\usepackage{xcolor,graphicx,booktabs}
\usepackage{amsmath,amssymb}
\usepackage{natbib}
\setcitestyle{authoryear,square,comma}
\usepackage{xurl,hyperref}
\usepackage{authblk}
\usepackage{subcaption}
\renewcommand{\arraystretch}{2}
\allowdisplaybreaks
\usepackage{multirow,makecell,tabularx}
\usepackage{flafter,enumitem}
\usepackage[most]{tcolorbox}
\usepackage{appendix}
\usepackage{float,placeins,etoolbox}
\usepackage{etoc}
\usepackage{tikz}

\usepackage{amsmath,amsfonts,bm}

\def\eqref#1{equation~\ref{#1}}

\def\1{\bm{1}}

\DeclareMathAlphabet{\mathsfit}{\encodingdefault}{\sfdefault}{m}{sl}
\SetMathAlphabet{\mathsfit}{bold}{\encodingdefault}{\sfdefault}{bx}{n}

\usepackage{amsmath,amssymb,amsfonts,amsthm,mathtools}
\usepackage{xcolor}
\usepackage{enumitem}
\usepackage{hyperref}
\usepackage[nameinlink,capitalize,noabbrev]{cleveref}

\hypersetup{
  hypertexnames=false,
  colorlinks=true,
  linkcolor=blue!60!black,
  citecolor=blue!60!black,
  urlcolor=blue!60!black
}

\newtheorem{theorem}{Theorem}[section]
\newtheorem{lemma}[theorem]{Lemma}
\newtheorem{proposition}[theorem]{Proposition}
\newtheorem{corollary}[theorem]{Corollary}
\theoremstyle{definition}
\newtheorem{definition}[theorem]{Definition}

\theoremstyle{remark}

\crefname{theorem}{Theorem}{Theorems}
\Crefname{theorem}{Theorem}{Theorems}
\crefname{lemma}{Lemma}{Lemmas}
\Crefname{lemma}{Lemma}{Lemmas}
\crefname{proposition}{Proposition}{Propositions}
\Crefname{proposition}{Proposition}{Propositions}
\crefname{corollary}{Corollary}{Corollaries}
\Crefname{corollary}{Corollary}{Corollaries}
\crefname{definition}{Definition}{Definitions}
\Crefname{definition}{Definition}{Definitions}
\crefname{remark}{Remark}{Remarks}
\Crefname{remark}{Remark}{Remarks}
\crefname{equation}{Equation}{Equations}
\Crefname{equation}{Equation}{Equations}

\crefalias{lemma}{lemma}
\crefalias{proposition}{proposition}
\crefalias{corollary}{corollary}
\crefalias{definition}{definition}
\crefalias{remark}{remark}
\crefalias{example}{example}

\DeclarePairedDelimiterX{\inner}[2]{\langle}{\rangle}{#1,#2}

\providecommand{\CondAdaZeroPrimary}{0.458}
\providecommand{\CondAdaPointOnePrimary}{0.428}
\providecommand{\CondGruPrimary}{0.380}
\providecommand{\CondBestDelta}{+0.048}

\providecommand{\CondBestClosurePercent}{38.5}
\providecommand{\CondBestWorseGravityCount}{22}
\providecommand{\CondBlockWorseGravityCount}{25}
\providecommand{\CrossPredGruSource}{-0.107}
\providecommand{\CrossPredTfSource}{-0.103}

\definecolor{HighlightGreen}{HTML}{007F63}
\definecolor{HighlightGreenBg}{HTML}{F1F8F5}
\definecolor{TakeawayGreen}{HTML}{B5E300}
\definecolor{TakeawayGreenBg}{HTML}{F7FBEA}

\definecolor{HighlightBlue}{HTML}{046487}
\definecolor{HighlightBlueBg}{HTML}{DDEDEC}

\newtcolorbox{statusbox}[1][]{
  colframe=HighlightGreen,
  colback=HighlightGreenBg,
  boxrule=0.9pt,
  arc=3pt,
  left=6pt, right=6pt, top=5pt, bottom=5pt,
  title={#1},
  fonttitle=\bfseries\small
}
\usetikzlibrary{arrows.meta,backgrounds,calc,fit,positioning}
\newcommand{\stoptocentries}{\addtocontents{toc}{\protect\setcounter{tocdepth}{-1}}}
\newcommand{\starttocentries}{\addtocontents{toc}{\protect\setcounter{tocdepth}{2}}}

\title{Semigroup-JEPA: Latent Dynamics Consistency for\\
Zero-Shot Physics Generalization}
\author{%
Andy Zeyi Liu\textsuperscript{1,*}\quad
Haoran Sun\textsuperscript{1,*}\quad
Lucas Baker\textsuperscript{2}\\
Randall Balestriero\textsuperscript{3}\quad
John Sous\textsuperscript{1,\textdagger}\vspace{6pt}\\
\textsuperscript{1}Yale University\quad
\textsuperscript{2}Jump Trading\quad
\textsuperscript{3}Brown University\\
\textsuperscript{*}Equal contribution\quad
\textsuperscript{\textdagger}Correspondence:
\href{mailto:john.sous@yale.edu}{\texttt{john.sous@yale.edu}}\\

}
\date{}

\hypersetup{
  colorlinks=true,
  linkcolor=red!80!black,
  citecolor=blue,
  urlcolor=magenta,
  pdftitle={Semigroup-JEPA: Latent Dynamics Consistency for Zero-Shot Physics Generalization},
  pdfauthor={Andy Zeyi Liu, Haoran Sun, Lucas Baker, Randall Balestriero, John Sous}
}
\begin{document}
\stoptocentries

\maketitle
\vspace{-6mm}

\begin{abstract}
Joint-Embedding Predictive Architecture (JEPA) world models learn a compact latent representation of the world that supports prediction and planning, but their capability to learn physics and generate physically realistic dynamics remains hitherto untested. In this work, we introduce \textit{SemiGroup-JEPA (SG-JEPA)}, which extends the LeWorldModel framework by supplying the parameter governing the physics to the temporal model via action-conditioning and jointly training an encoder and predictor through an autoregressive latent rollout. To evaluate the model's ability to generalize out of distribution, we design dynamical tasks under different gravitational fields that, despite obeying the same physical law, exhibit qualitatively different dynamics, ranging from floating motion in weak gravitational fields to rapid bouncing in strong ones. In contrast to DINO-WM, SG-JEPA reduces open-loop prediction error by up to $2\times$ on two-dimensional datasets, and increases control success rate up to $2.5\times$ for three-dimensional robotic datasets, for which we train independent diffusion policies. To explain this advantage, we develop a linear feature model that separates local law-conditioned error from its recursive amplification under rollout. Guided by this model, we find that back-propagating the multi-step rollout loss into the representation trains the encoder to keep the features that the predictor can carry forward, and that those are the features the dynamics depend on, so most of the gain comes from the encoder learning better features rather than from the predictor learning better dynamics. See project page at \href{https://sg-jepa.github.io}{sg-jepa.github.io}.
\end{abstract}
\vspace{-2mm}

\begin{figure}[t]
    \centering
    \includegraphics[width=\linewidth]{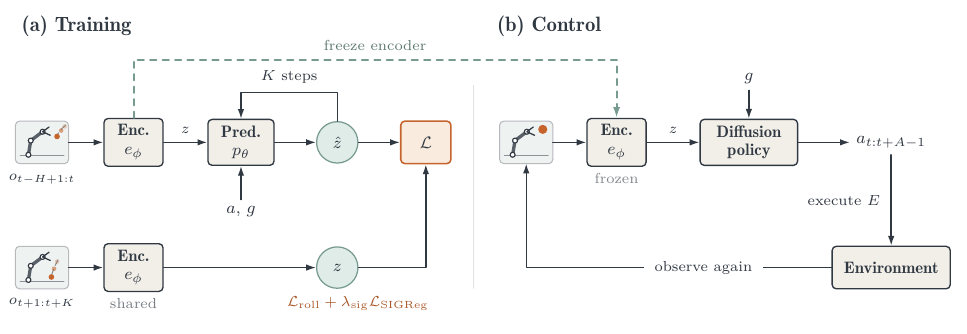}
    \caption{\textbf{SG-JEPA training and control.}
    (a) The encoder and gravity-conditioned predictor are jointly trained with a discounted $K$-step autoregressive rollout loss and SIGReg. Targets use the same trainable encoder. (b) A gravity-conditioned diffusion policy is trained on demonstrations from successful episodes with the encoder frozen. At inference, it generates $a_{t:t+A-1}$, executes the
    first $E$ actions, and replans from new observations.}
    \label{fig:glewm-framework}
\end{figure}

\section{Introduction}
The long-standing goal of world models is to understand dynamics of the environment in which the model agent is deployed so that the agent can predict subsequent states and consequences of its actions. Joint-Embedding Predictive Architecture (JEPA) emerged as a promising approach, whereby rather than reconstructing future pixels, it learns latent representations by jointly training an encoder and predictor \citep{lecun2022path,sobal2025jepa,zhou2025dinowm}. Recent proposals such as DINO-WM \citep{zhou2025dinowm} and LeWorldModel (LeWM) \citep{maes2026lewm} have demonstrated that these latent predictions support downstream planning and control under a fixed frame-to-frame transition law. We push this further by asking the question:
\begin{tcolorbox}[
  colframe=HighlightGreen,
  colback=HighlightGreenBg,
  boxrule=1.8pt,
  arc=4pt,
  left=7pt,
  right=7pt,
  top=6pt,
  bottom=4pt,
  width=\linewidth
]
\vspace{0em}
\begin{center}
\textit{Has the model actually learned the physical dynamics when its learned update is composed over long horizons? Does it generalize to out-of-distribution (OOD) dynamics?}
\end{center}
\vspace{-1em}
\end{tcolorbox}
\noindent If the answer is yes, the consequences are significant. A world model that genuinely learns dynamics would enable an agent to adapt in new environments, e.g. a model trained on Earth dynamics generalizing to Martian dynamics.

In this work, we take gravitational dynamics as the primary testbed for these questions, and introduce \textit{Semigroup-JEPA (SG-JEPA)}, a gravity-conditioned extension of LeWM with recursive physical prediction (Fig.~\ref{fig:glewm-framework}). SG-JEPA supplies gravity alongside action and jointly trains its encoder and predictor through a discounted autoregressive rollout loss, while SIGReg~\citep{balestriero2025lejepa} regularizes the latent space. On various MuJoCo datasets covering two-dimensional (2D) rigid body freefall, three-dimensional (3D) projectile motion and robot-arm control, we train SG-JEPA on a narrow range of gravity values and evaluate on a wider grid that extends far beyond it. We study two aspects of generalization, namely transfer to OOD gravity values and accuracy under long-horizon rollout. Our results show that the dynamics SG-JEPA learns within a narrow gravity range remain accurate over long horizons at OOD gravity values, demonstrating zero-shot physics generalization.

Across these gravitational environments we find that how well the encoder preserves dynamical information determines the ability to generalize out of distribution. To interpret this behavior, we mechanistically isolate the encoder from the predictor and measure how performance depends on the rollout horizon and on the distance of the gravity parameter from the training range. Starting from a trained checkpoint, we replace either the encoder or the predictor with a freshly initialized one and retrain jointly. Re-training the encoder leaves performance unchanged, whereas re-training the predictor degrades it. The gain is therefore associated with what the GRU predictor learns, and this advantage widens as gravity moves further from the training range and as the rollout horizon grows.

\paragraph{Related work.}
\label{sec:related}
JEPA models introduce a new paradigm of learning by predicting target representations rather than reconstructing pixels, and have been extended from images to video \citep{lecun2022path,assran2023ijepa,bardes2024vjepa,assran2025vjepa2}.
DINO-WM \citep{zhou2025dinowm} applies this idea with frozen DINOv2 features \citep{oquab2024dinov2}, while LeWM \citep{maes2026lewm} jointly trains its encoder and predictor with SIGReg \citep{balestriero2025lejepa}. Recent work probes physics generalization under held-out configurations and asks which physical parameters predictive latents retain \citep{xue2026acwmphys,tan2026physicalidentifiability}. Our setting differs in isolating gravity as a physical quantity governing the dynamics, which lets us systematically test the learned dynamics outside the training range. A separate line of work addresses rollout error by training on the model's own predictions or optimizing beyond one step \citep{bengio2015scheduledsampling,venkatraman2015dad,lambert2021longterm}. In this work, we address these two problems jointly and introduce SG-JEPA, which back-propagates a recursive latent rollout loss through both the encoder and predictor, enabling, as we show below, OOD generalization over long-horizon dynamics. The name reflects that, for action-free trajectories at a fixed gravity value, the repeated updates form a discrete semigroup. Related operator-based analyses of how local representation and predictor errors propagate appear in \citep{williams2015edmd,lusch2018deepkoopman}. Appendix~\ref{app:related} provides more discussion.

\section{Problem Setting} 
\label{sec:method}
In this section, we introduce Semigroup-JEPA (SG-JEPA). We first describe the architecture and training loss. We then introduce the customized, gravity-controlled MuJoCo environments and the baseline models we compare against. Finally we explain evaluation protocols for prediction of physical quantities and latent control.

\subsection{Model architecture}
\paragraph{Semigroup-JEPA.} In Fig.~\ref{fig:glewm-framework} we sketch the architecture of SG-JEPA. It keeps the two components of LeWM, an encoder and a predictor. At time step $t$, the model has observation $o_t$ and control $u_t$. The observation $o_t$ is then encoded using a Vision Transformer~\citep{dosovitskiy2021imageworth16x16words} into low-dimensional latents $z_t$. Unless stated otherwise, we use the ViT-Tiny configuration, with 12 Transformer layers, 3 attention heads, and hidden width 192. We use $8\times 8$ patches for $128\times 128$ resolution and $16\times 16$ for $256\times 256$ resolution. The encoder is trained from random initialization jointly with the predictor. We use the final layer $\texttt{[CLS]}$ token $h_t\in\mathbb{R}^{192}$ as a global summary of the observed frame $o_t$ and map it to $z_t\in\mathbb{R}^{256}$ using a one-hidden-layer projector. The latents $z_t$ are then fed into the predictor. Gravity enters the model through the action. Each episode has a signed gravity scalar parameter $g$, which we treat as an extra coordinate of the action, and maintain constant across frames,
\begin{equation}
    \tilde a_t = [u_t;g], \qquad c_t=q_\psi(\tilde a_t),
\end{equation}
where $q_{\psi}$ is the action encoder, a temporal convolution followed by a SiLU MLP, which yields $c_t\in\mathbb{R}^{256}$. The input $g$ is z-scored using statistics from the training split. In the case where there are no external actions, e.g. free-fall, we still input a single scalar $g$ in the action conditioning. 

We consider three choices of predictor, namely a Transformer, a GRU, and a state-space model (SSM). Unless otherwise noted, SG-JEPA refers to either the GRU or the SSM variant. The Transformer predictor has 6 layers and 16 attention heads, in line with the Original LeWM, with actions fed into every block through Adaptive Layer Normalization (AdaLN)~\citep{peebles2023scalablediffusionmodelstransformers}. The GRU predictor defaults to 3 residual layers of width 512. In each layer, the hidden state is concatenated with the projected action embedding, passed through an MLP, and then processed by a single-layer GRU. The SSM predictor uses the same depth, width, and action-concatenation scheme, replacing each GRU with a Mamba/S6-style selective state-space block~\citep{gu2023mamba}. The predictor takes as input a history of length $H$ and predicts the next latent
\begin{equation}
    \hat z_{t+1}
    =p_\theta\!\left(z_{t-H+1:t},c_{t-H+1:t}\right).
    \label{eq:glewm-predictor}
\end{equation}
Exact architectures and gravity-fusion ablations are given in Appendix~\ref{app:model-details}.

\begin{figure}[t]
    \centering
    \includegraphics[width=\linewidth]{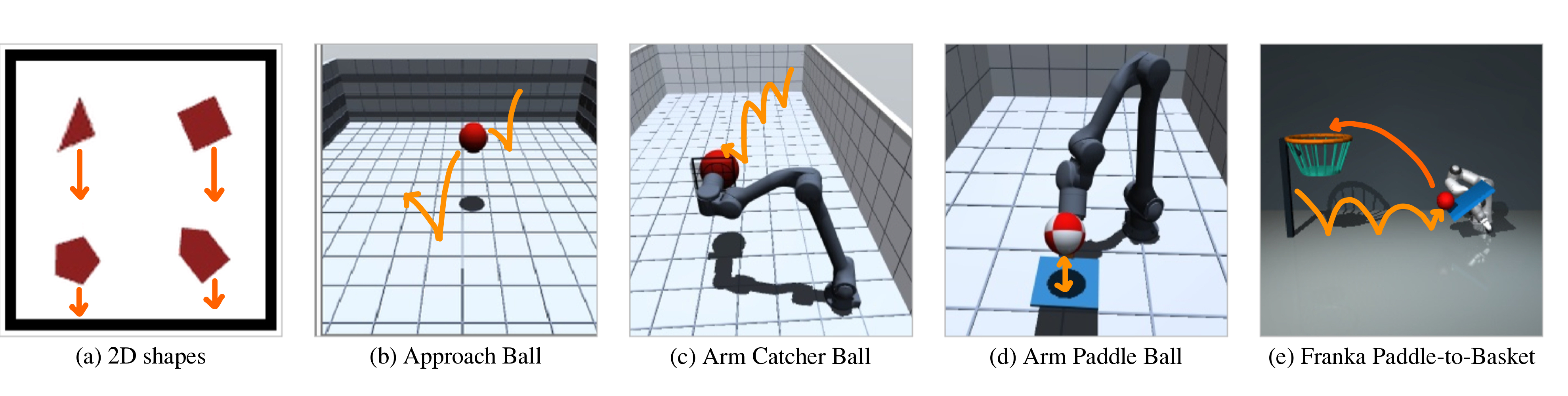}
    \caption{\textbf{Evaluation environments.}  (a) Planar shapes: each object is given an
    impulse at \(t=0\) and then fall and collide freely after inside the box.  (b) Approach
    Ball: projectile flight from the far end of the scene.  (c) Arm Catcher Ball: a
    robotic arm catches a ball with a catcher.  (d) Arm Paddle Ball: the
    same arm sustaining bounces with a tilting paddle.  (e) Franka
    Paddle-to-Basket: a Panda arm strikes an incoming ball into a
    basket. Orange arrows sketch the motion.}
    \label{fig:environments}
\end{figure}

\paragraph{Training objective.}
For fixed \(g\) and an action-free trajectory, iterating the shared history update \(F_{\theta,g}\) gives \(S_{\theta,g}(k)=F_{\theta,g}^{\circ k}\), with \(S_{\theta,g}(0)=I\) and \(S_{\theta,g}(k+\ell)=S_{\theta,g}(\ell)\circ S_{\theta,g}(k)\). These iterates form a discrete semigroup on the latent history. Without controls, consecutive update blocks compose into this semigroup. With controls, they still compose, but each block is modified by its action (Appendix~\ref{app:theory-history}).

SG-JEPA combines a \(K\)-step autoregressive rollout loss with SIGReg. Unlike LeWM's one-step teacher-forced objective, in which the predictor is always given true encoded latents as input rather than its own predictions, it starts from \(H\) encoded frames and recursively predicts \(K\) future latents, inserting each prediction into the next length-\(H\) history window. With \(z_s^{\mathrm{roll}}=z_s\) for \(s\leq t\) and \(z_s^{\mathrm{roll}}=\hat z_s\) for \(s>t\), the rollout is $\hat z_{t+k}=p_\theta\!\left(z^{\mathrm{roll}}_{t+k-H:t+k-1}, c_{t+k-H:t+k-1}\right)$, with $k=1,\ldots,K$. We use the normalized rollout loss
\begin{equation*}
\mathcal{L}_{\mathrm{roll}}
=
\sum_{k=1}^{K} w_k
\left\|\hat z_{t+k}-z_{t+k}\right\|_2^2,
\quad\text{where}\quad
w_k
=
\frac{\gamma^{k-1}}
{\sum_{j=1}^{K}\gamma^{j-1}}.
\end{equation*}
By construction, a composition law emerges from repeatedly applying the shared predictor. For \(\gamma<1\), later predictions receive less weight, so accumulated errors do not dominate the objective. The full regularized objective is
\begin{equation}
\mathcal{L}
=
\mathcal{L}_{\mathrm{roll}}
+
\lambda_{\mathrm{sig}}\mathcal{L}_{\mathrm{SIGReg}}.
\end{equation}
SIGReg acts only on encoded latents and encourages their random one-dimensional projections to match a standard Gaussian. Target latents come from the same trainable encoder, without stop-gradient.

\paragraph{Training setup.} For training, we use a hybrid Muon/AdamW optimizer. Muon~\citep{jordan2024muon} applies an orthogonalized update to matrix-valued parameters, so all 2D tensors are optimized with Muon at a learning rate of
$1\times 10^{-4}$ and all other tensors with AdamW at $5\times 10^{-5}$. In Appendix~\ref{sec:muon_vs_adam} we perform a controlled comparison against pure AdamW~\citep{adamw}, finding that Muon lowers the validation loss more rapidly and uses input data more efficiently. We train all models for 20 epochs. Unless stated otherwise, we use a history of $H=20,K=5$. The history-length ablation in Appendix~\ref{app:history-depth-sweep} demonstrates that this choice yields the best downstream performance and efficiency tradeoff. Appendix~\ref{app:gamma-decay-sweep} shows that $\gamma=0.95$ produces the strongest overall position and velocity performance. We study the effect of changing $\lambda_{\mathrm{sig}}$ in Appendix~\ref{app:sigreg-sweep}. Appendix~\ref{app:model-details} provides the complete training configuration.

\subsection{Datasets}
\label{sec:datasets}

We generate eight datasets in MuJoCo \citep{todorov2012mujoco}, covering passive rigid-body motion (freefall), projectile motion, and robot-arm control. We use six of them as environments for the experiments in the main text and the other two (2D pentagon and house) in the ablation studies in Appendix~\ref{app:house-shape-generalization}. Fig.~\ref{fig:environments} gives an overview and illustrates the ball dynamics. For training, $g$ is sampled from a narrow Gaussian, $\mathcal{N}(4,0.5^2)$ for the 2D planar shapes and Arm Catcher Ball, and $\mathcal{N}(9.8,2.0^2)$ for the remaining 3D datasets. Held-out test sets use a wider grid of $g$ values, including OOD ones. Five datasets test prediction: four planar shapes receive one initial impulse and then move freely inside a box, while Approach Ball contains 3D projectile motion. Three datasets test control: Arm Catcher Ball, Arm Paddle Ball, and Franka Paddle-to-Basket. Appendix~\ref{app:dataset-generation} gives the exact data splits, action schemas and other dataset details.

\paragraph{Baselines.}
We compare the performance of SG-JEPA against the following baselines. We first compare against the \emph{Original LeWM}, the vanilla model of \citep{maes2026lewm}, which uses a one-step latent prediction loss, SIGReg
on the encoded latents, and a causal Transformer predictor. We also compare against \textit{DINO-WM}~\citep{zhou2025dinowm}, for which we keep the pretrained DINOv2 encoder frozen and train an action-conditioned Transformer predictor, with action conditioning performed in the same way as in LeWM. We label the GRU- and SSM-predictor variants of SG-JEPA \emph{SG-JEPA (GRU)} and \emph{SG-JEPA (SSM)}, respectively. 

\subsection{Downstream evaluation}
\label{sec:method-eval}

\paragraph{Prediction.} For prediction tasks, we freeze the trained world model and fit an MLP probe $r_\eta$ that, given a short latent window, predicts the physical state $s_t$ (position, velocity, and rotation in 2D; position and velocity
in 3D) at time $t$.  The probe is trained only on training episodes (one episode is a video sample).  On held-out episodes we apply it to both encoded ground-truth frames and open-loop autoregressive rollouts at forecast horizons $\mathfrak{h}$ up to $\mathfrak{h}_{\max}=44$ frames. Let $\ell_r$ denote the probe-window length.  We report the excess physical-state error
\begin{equation}
    \frac{1}{\mathfrak{h}}\sum_{k=1}^{\mathfrak{h}}
    \left(
        \operatorname{NMSE}\!\left(
          r_\eta(z^{\mathrm{roll}}_{t+k-\ell_r+1:t+k}),s_{t+k}\right)
        -\operatorname{NMSE}\!\left(
          r_\eta(z_{t+k-\ell_r+1:t+k}),s_{t+k}\right)
    \right), 
    \label{eq:method-excess-endpoint}
\end{equation}
where $\operatorname{NMSE}$ denotes the normalized mean-squared error, with state coordinates normalized using statistics from the probe's training set. Subtracting the probe error on encoded ground-truth latents isolates the error introduced by the rollout.

\begin{figure}[t]
    \centering
    \includegraphics[width=\linewidth]{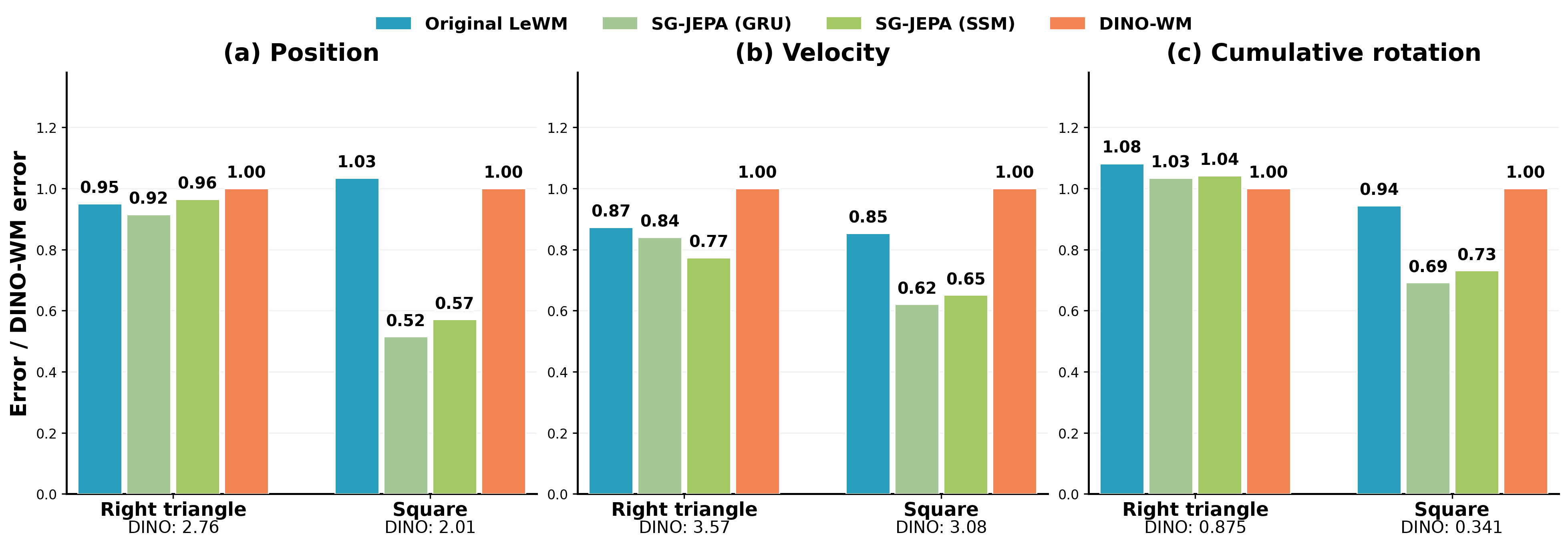}
    \caption{Long-horizon prediction at 44 rollout steps for 2D datasets, evaluated with MLP probes. Bars show position and velocity $L_2$ error and cumulative-rotation mean-absolute error (MAE), normalized by the DINO-WM error for each shape and metric. The absolute DINO-WM errors are shown below each group. Results are averaged over held-out episodes for each shape across 25 gravity values from $-2$ to 10, with training gravity sampled from $\mathcal{N}(4,0.5^2)$. Lower is better.}
    \label{fig:main-2d}
\end{figure}

\paragraph{Control.} For each control task and source model, we freeze the visual encoder and train a separate gravity-conditioned Diffusion Policy~\citep{chi2023diffusionpolicy} on its features. SG-JEPA and LeWM provide projected CLS latents, while DINO-WM provides mean-pooled frozen DINOv2 patch features. Fig.~\ref{fig:glewm-framework}(b) shows the resulting control
loop. A two-layer causal GRU compresses the most recent $H$ feature vectors into $\bar h_t=\operatorname{LayerNorm}\!\left(\operatorname{GRU}(z_{t-H+1:t})_{\mathrm{final}}\right)$, where $\operatorname{LayerNorm}$ denotes layer normalization, and the policy is conditioned on $\xi=[\bar h_t;g_{\mathrm{pol}}]$, with $g_{\mathrm{pol}}$ the z-scored gravity. The noise predictor $\epsilon_\omega$ is a conditional 1D U-Net. At diffusion step $\tau$, a control block $\mathbf{u}=u_{t:t+A-1}$ of $A$ steps from the training trajectories is corrupted as $\mathbf{u}^{(\tau)}=\sqrt{\bar\alpha_\tau}\,\mathbf{u}+\sqrt{1-\bar\alpha_\tau}\,\epsilon$, where $\epsilon\sim\mathcal{N}(0,I)$ and $\bar\alpha_\tau$ is the cumulative coefficient of the cosine noise schedule. The U-Net is trained to predict the added noise from $\mathbf{u}^{(\tau)}$, $\tau$, and the conditioning vector $\xi$ by minimizing
\begin{equation}
    \mathcal{L}_{\mathrm{DP}}
    =\mathbb{E}_{\tau,\epsilon}\!\left[
      \left\|\epsilon_\omega(\mathbf{u}^{(\tau)},\tau\mid\xi)
      -\epsilon\right\|_2^2\right].
    \label{eq:diffusion-policy-loss}
\end{equation}
We train the GRU and U-Net while keeping the visual encoder frozen. All policies use $H=20$, $A=16$, a 256-dimensional GRU, and a 100-step cosine diffusion schedule. At inference, the policy predicts $A$  controls, executes the first $E$, and replans from the new observation. We use $E=8$ for Arm Catcher Ball and Franka Paddle-to-Basket and $E=4$ for Arm Paddle Ball. Note that this execution interval is distinct from the world-model training rollout length $K$.

Our tasks differ from static-goal benchmarks such as Two Rooms and Push-T in that the ball moves, so the policy must infer its future motion from the observation history and gravity. Using a goal-conditioned Cross-Entropy Method planner in the same way would require a future target frame, which would give away the ball's future position. Our policies therefore do not use one.

\begin{figure}[t]
  \centering
  \includegraphics[width=\textwidth]{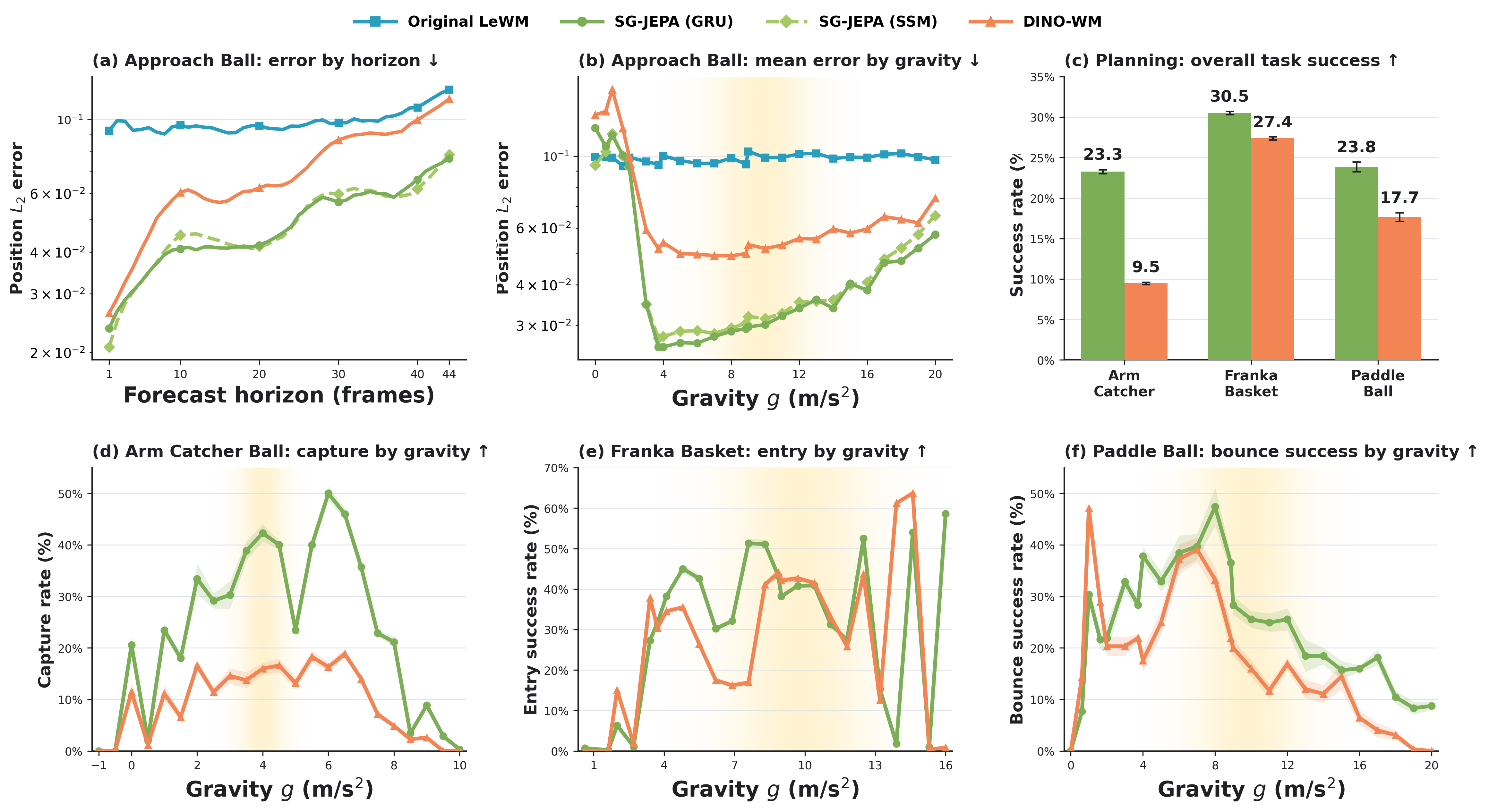}
  \caption{3D prediction and control tasks across held-out gravity values. {\emph{Open loop}}(a)--(b): Approach Ball position $L_2$ error on held-out episodes with a trained probe. {\emph{Closed loop}}(c)--(f): Evaluation results for Diffusion Policies trained on frozen world-model features, aggregated by task and resolved by gravity (higher is better). Results are averaged over five seeds. Arm Catcher Ball and Franka Paddle-to-Basket use $A=16$, $E=8$; Arm Paddle Ball uses $A=16$, $E=4$. Yellow shading marks the training-gravity distribution, $\mathcal{N}(4,0.5^2)$ for Arm Catcher Ball and $\mathcal{N}(9.8,2.0^2)$ for the other two.}
  \label{fig:main-3d}
\end{figure}

\section{Empirical results}
\label{sec:main_results}

In this section, we compare SG-JEPA against LeWM and DINO-WM across our 2D and 3D datasets. SG-JEPA improves long-horizon prediction and achieves higher average control success rates than the Original LeWM and DINO-WM, particularly at OOD gravity values.

\paragraph{2D datasets.} In Fig.~\ref{fig:main-2d} we present long-horizon physical-state prediction results. Each episode contains 64 frames and our models take a history of $H=20$, so at test time we roll out for 44 steps and probe the position, velocity, and rotation. Note that rotation is computed by integrating the probed angular velocity over the rollout, and its error is measured in number of turns. Complete per-gravity and error-by-horizon results are detailed in Appendix~\ref{sec:2d_full_results}. On the square object, we find that both variants of SG-JEPA have a clear advantage: relative to DINO-WM, SG-JEPA (GRU) reduces the three error types by 31--48\%, with SG-JEPA (SSM) close behind, while Original LeWM remains near the DINO-WM baseline. The advantage is most evident over longer rollouts (Appendix Table~\ref{tab:official-2d-shape-rollout-short-long}). An SG-JEPA variant leads every square metric from 5 through 44 rollout-steps.  On triangles, although DINO-WM wins by a small margin for short-horizon position and velocity, SG-JEPA achieves the lowest 44-rollout-step errors on both. This behavior is consistent with rollout training reducing the accumulation of errors when predictions are fed back into the model. The triangle dataset remains a challenging case: DINO-WM has slightly lower cumulative-rotation error, which we attribute mostly to asymmetric contacts amplifying small errors after collisions. As for OOD generalization, SG-JEPA has the lowest error across most of the held-out gravity grid on the square dataset, with significant advantages in velocity and rotation for large gravity parameter values (Appendix Fig.~\ref{fig:2d-per-gravity-results}).  Beyond gravity, we also test generalization to an unseen shape. We train SG-JEPA on triangles and squares and evaluate frozen-model rollouts on a ``house'' dataset, formed by attaching the right triangle to the square along the triangle's longest side, using a probe trained on house states. The frozen model still yields a useful representation of the house state, and its 44-rollout-step position and velocity predictions transfer reasonably well, particularly with a larger ViT encoder. A detailed breakdown is given in Appendix~\ref{app:house-shape-generalization}.

\paragraph{3D datasets.}
\phantom{xxx} 

\noindent{\textit{Prediction.}} On the Approach Ball dataset,  we find that the GRU and SSM SG-JEPA variants reduce mean position error by approximately $34\%$ relative to DINO-WM and $50\%$ relative to Original LeWM (Appendix Table~\ref{tab:approach-ball-aggregate}). From Fig.~\ref{fig:main-3d}(a), DINO-WM performs comparably well at short horizons but drifts more quickly during rollout. Both SG-JEPA variants remain more accurate over longer rollouts. The GRU and SSM variants perform similarly and alternate as the best method across horizons, suggesting that the improvement does not depend on a single temporal architecture. From Fig.~\ref{fig:main-3d}(b) we see that this pattern holds across gravity values. SG-JEPA is best at 22 of the 25 test values, with the exceptions being for very small gravity values, where the Original LeWM performs best. In Appendix~\ref{app:approach-ball-comparison} we present more results and observe similarly that  SG-JEPA retains its advantage for velocity generalization. 

\smallskip

\noindent{\textit{Control.}} For control tasks we evaluate each frozen encoder by training a task-specific Diffusion Policy on its latents. The policy runs in a receding-horizon loop using real observations. It uses the latest $H=20$ history frames to generate $A$ actions, executes the first $E$, and then observes the scene again before generating the next action sequence. We use $A=16,E=8$ for Arm Catcher Ball and Franka Paddle-to-Basket, and $A=16,E=4$ for Arm Paddle Ball. Control is thus open-loop within each action sequence but closed-loop over the episode. We use task-specific physical events to define success criteria. Arm Catcher Ball requires the ball to enter the central area of the catcher and be latched before the episode ends. Franka Paddle-to-Basket requires a valid paddle hit on the ball, followed by the ball landing in the basket. Arm Paddle Ball requires the robot arm to manipulate the paddle so that the ball never hits the floor throughout the rollout.  Moreover, we require it to complete at least one paddle bounce whose apex reaches \(0.28\,\mathrm{m}\). This is to avoid the degenerate case where the ball stops bouncing and stays on the paddle. 

In Fig.~\ref{fig:main-3d}(c) we show the aggregate results for the three tasks and (d)--(f) show their per-gravity breakdown. Each episode is evaluated with 5 policy rollouts, and we plot the mean and standard deviation. 
To keep the main figure readable, we present only the SG-JEPA (GRU) vs.\ DINO-WM comparison, as DINO-WM provides a stronger baseline for these tasks. Full results including Original LeWM can be found in the relevant appendices. Below we summarize our main observations.
\vspace{-2mm}
\begin{itemize}[noitemsep, leftmargin=1em, labelsep=0.45em]
    \item The largest and most consistent gain occurs for Arm Catcher Ball, where SG-JEPA (GRU) raises capture success from $9.5\%$ to $23.3\%$. The gain peaks around the training-set values and extends from $g=0$ to $g=9$. A success peak is observed at $g=0$ likely because for this gravitational field the ball is not bouncing so the trajectory is relatively easy to predict. On the other hand, for strong gravitational fields the ball bounces frequently,  making it difficult to accurately predict the dynamics. 
    
    \item The Franka Paddle-to-Basket results are less uniform across gravity values. The advantage of SG-JEPA (GRU) appears after impact. DINO-WM achieves a slightly higher paddle-hit success rate (both achieve above $95\%$ hit rates; Appendix Fig.~\ref{fig:basket-appendix}) but SG-JEPA (GRU) converts more contacts into basket entries, raising average success from $27.4\%$ to $30.5\%$. Thus the main difficulty lies in producing the post-contact position and velocity that send the ball into the basket. At low $g$ the struck ball can rise for too long and miss the basket, whereas at high $g$ a stronger and more precise impulse is needed to reach it.
    
    \item For Arm Paddle Ball, SG-JEPA (GRU) raises success from $17.7\%$ to $23.8\%$. SG-JEPA (GRU) both avoids floor contact more often and converts more retained episodes into qualifying bounces. At $g=0$, both policies enable the arm to retain the ball but not to complete a bounce because for this value of $g$ an upward-moving ball never falls back down. For high $g$, the flight time $2v_z/g$ and apex height $v_z^2/(2g)$ both shrink, leaving less time to reposition the paddle and making the $0.28\,\mathrm{m}$ apex threshold harder to reach. 
\end{itemize}

Overall, gravity exposes different bottlenecks in each task: interception timing, post-impact targeting, or repeated contact regulation. Together, they strengthen our claim that SG-JEPA's representations better preserve the dynamics and generalize better OOD  than existing baselines. Complete 3D dataset results are in Appendices~\ref{app:arm-catcher-ball-control}, \ref{app:basket-results}, and \ref{app:arm-paddle-ball-control-details}.

\begin{tcolorbox}[
  colframe=HighlightGreen,
  colback=HighlightGreenBg,
  boxrule=1.8pt,
  arc=4pt,
  left=7pt,
  right=7pt,
  top=6pt,
  bottom=4pt,
  width=\linewidth
]
 \textbf{Takeaway.} We propose \textit{Semigroup-JEPA} (SG-JEPA), a gravity-conditioned latent world model that jointly trains the encoder and predictor over recursive latent rollouts, demonstrating performance gains over Original LeWM and DINO-WM. Relative to DINO-WM, for example, it reduces position prediction error by $34\%$ on \textit{Approach Ball} and improves closed-loop capture success from $9.5\%$ to $23.3\%$ on \textit{Arm Catcher Ball} across a range of in-distribution and OOD gravity values.
\end{tcolorbox}

\section{Where Does the Long-Horizon Advantage Come From?}
\label{sec:dynamics-advantage}

SG-JEPA's long-horizon OOD advantage can arise at two stages. A difference may already be present in the one-step prediction (which we refer to as the local transition in the theoretical model below) at an unseen gravity value, before any prediction is fed back, and recursive rollout can then amplify or shrink that difference. In this section we use a linear feature model to separate these stages, which tells us what to measure to locate the advantage.

\begin{figure*}[t]
  \centering
  \includegraphics[width=\textwidth]{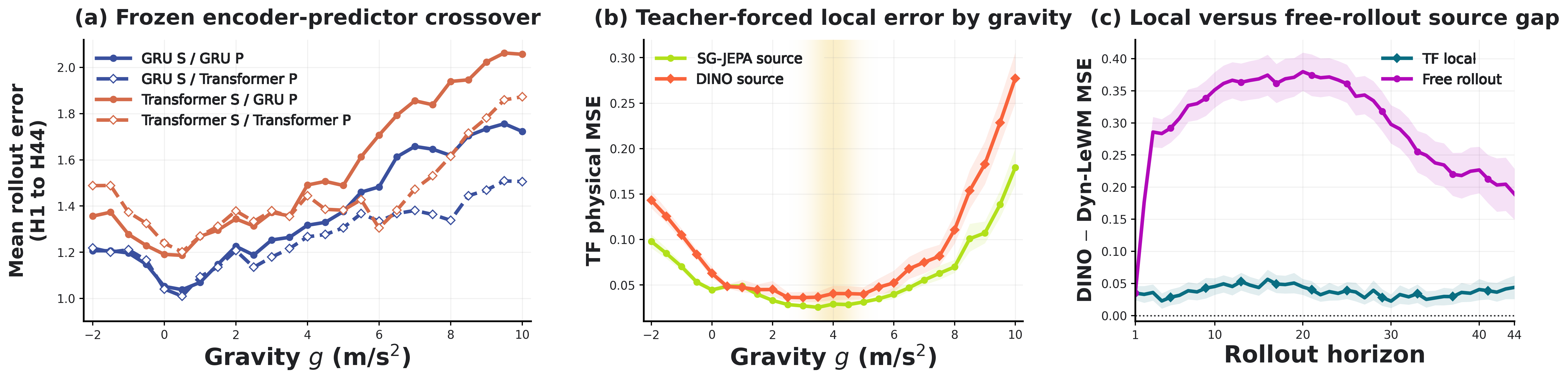}
  \caption{(a) Frozen-source predictor crossover on held-out 2D square episodes. Color denotes source; line style denotes fresh predictor. (b) Teacher-forced physical MSE from true $H=20$ context, with three predictor seeds and pointwise 95\% bootstrap intervals. (c) DINO-minus-Dyn teacher-forced and free-rollout gaps over 2,800 paired OOD episodes, with 95\% simultaneous intervals. Positive values favor SG-JEPA.}
  \label{fig:dynamics-advantage}
\end{figure*}

\paragraph{Theoretical model.} Let $\phi_t\in\mathbb R^{d_\phi}$ be a feature state with dynamics $\phi_{t+1}=T(g)\phi_t + \xi_{t+1}$, where $T(g)$ is the one-step transition at gravitational parameter $g$ and $\mathbb E[\xi_{t+1}\mid\phi_t,g]=0$. Let $z_t=W\phi_t\in\mathbb R^{d_z}$ be the learned latent. We assume $d_z\leq d_\phi$ and $W\in\mathbb R^{d_z\times d_\phi}$ is row-orthonormal, with $WW^\top=I_{d_z}$, and define $A_W(g)=WT(g)W^\top$ and $C_W(g)=WT(g)-A_W(g)W$. When a teacher-forced predictor $\widehat{A}(g)$ starts from the true latent, its conditional-mean defect is
\begin{equation}
  \delta_t(g)=C_W(g)\phi_t+[A_W(g)-\widehat A(g)]z_t.
  \label{eq:adv-local-defect}
\end{equation}
The closure term $C_W(g)\phi_t$ is the part of the next latent that still depends on features discarded by $W$. If $C_W(g)=0$, the representation is predictively closed at gravity $g$: the current latent contains all information needed for the conditional mean of the next latent. The realized teacher-forced error is $z_{t+1}-\widehat z_{t+1}^{\rm TF}=\delta_t(g)+W\xi_{t+1}$, where $\widehat z_{t+1}^{\rm TF}=\widehat A(g)z_t$. Composition at a fixed gravity and transfer across gravity values are separate questions. Suppose $T(g)=\sum_{k=1}^m\psi_k(g)T_k$ and the projected and learned operators share this finite law basis. Let $\psi(g)=(\psi_1(g),\dots,\psi_m(g))^\top$, $M_\psi=\mathbb E_{g\sim P_{\rm tr}}[\psi(g)\psi(g)^\top]\succ0$, and $\mathcal L_{\rm law}(g_\star)=\psi(g_\star)^\top M_\psi^{-1}\psi(g_\star)$, which measures how well the test gravity value $g_\star$ is covered by the training distribution.
Appendix~\ref{app:theory-law-transfer} proves that 
\begin{equation}
  \lVert\delta_t(g_\star)\rVert_2
  \leq \sqrt{\mathcal L_{\rm law}(g_\star)}
  \left(B_z\epsilon_{\rm op}+B_\phi\epsilon_{\rm cl}\right),
  \label{eq:adv-law-bound}
\end{equation}
where $B_z$ and $B_{\phi}$ bound the latent and feature states, while $\epsilon_{\rm op}$ and $\epsilon_{\rm cl}$ are the training-average predictor and closure errors. For free flight, gravity enters the transition affinely,  so $\psi(g)=(1,g)^\top$ and $\mathcal L_{\rm law}(g_\star) = 1+\frac{(g_\star-\mu_{\rm tr})^2}{\sigma_{\rm tr}^2}$. The bound therefore grows with the distance of $g_\star$ from the training gravity mean. Its main implication is that gravity conditioning alone does not guarantee transfer: unseen-gravity accuracy also depends on law coverage, predictor fit, and representation closure. 

\smallskip

\noindent \textit{From local to long-horizon error.} At fixed $g$, $S_g(\mathfrak{h})=T(g)^{\mathfrak{h}}$ and $\widehat{S}_g(\mathfrak{h})=\widehat{A}(g)^{\mathfrak{h}}$ form the true and learned discrete evolution semigroups. Starting from the same latent, the free-rollout error $e_{\mathfrak{h}}=z_{\mathfrak{h}}-\widehat z_{\mathfrak{h}}$ satisfies
\begin{equation}
  e_{\mathfrak{h}}
  = \sum_{j=0}^{\mathfrak{h}-1}
  \widehat A(g)^{\mathfrak{h}-1-j}
  \left[\delta_j(g)+W\xi_{j+1}\right].
  \label{eq:adv-error-recursion}
\end{equation}
An error introduced at step $j$ is therefore propagated through the $\mathfrak{h}-1-j$ learned updates that follow it. Importantly, one-step errors of the same size can have very different consequences after rollout, depending on how the learned updates transform them. A one-step objective is blind to this difference, whereas a multi-step objective penalizes it directly. Appendix~\ref{app:theory-local-defect} formalizes this through a semigroup-intertwining defect theorem, and Appendix~\ref{app:theory-events} treats dynamical regime changes at events such as collisions.

The theoretical model suggests two empirical tests. First, if SG-JEPA's advantage lies in the encoder's representation, it should survive replacing the original predictor with a freshly trained one, and it should already be visible under teacher forcing, before any predicted latent is fed back. Second, if feeding predictions back matters, the performance gap between encoders should differ between teacher forcing and free rollout, although the sign of that difference and how it varies with horizon are empirical questions.

\paragraph{Empirical observations.} Fig.~\ref{fig:dynamics-advantage} applies these two tests to the 2D square dataset. Panel (a) freezes 
the encoders learned jointly with the GRU and Transformer predictors, then fits a new GRU and a new Transformer predictor to each encoder. Panel (b) freezes the SG-JEPA (GRU) and DINO-WM encoders, fits a fresh GRU predictor of matched size to each, and measures teacher-forced mean-squared error (MSE) on collision-free transitions. Panel (c) compares this teacher-forced gap with the gap obtained when predictions are fed back recursively. The main takeaways are as follows.

\begin{itemize}[
    label={\textbullet},
    leftmargin=1.6em,
    labelsep=0.45em,
    topsep=0pt,
    itemsep=2pt,
    parsep=0pt,
    partopsep=0pt
]
    \item
    \textcolor{HighlightBlue}{\textbf{The advantage lies in the GRU-trained representation.}}
    Panel~\ref{fig:dynamics-advantage}(a) discards the original predictors and fits the same fresh GRU and Transformer architectures to each frozen encoder. With either predictor, the GRU-trained encoder lowers mean rollout error by about 12\% relative to the Transformer-trained encoder: $1.376$ versus $1.555$ with a fresh GRU, and $1.269$ versus $1.453$ with a fresh Transformer. That the ranking of the two encoders does not depend on which fresh predictor is fitted shows that the gain follows the representation learned during GRU training, rather than the original GRU predictor.

    \item
    \textcolor{HighlightBlue}{\textbf{SG-JEPA is better before recursive feedback.}}
    Panel~\ref{fig:dynamics-advantage}(b) tests each local transition before predicted latents are reused. For each frozen encoder, we fit a fresh GRU and restore the true $H=20$ context before every collision-free prediction. SG-JEPA has the lower local-error point estimate at all 25 test gravity values. On the far-OOD gravity values $g\leq2$ or $g\geq6$, it reduces local error by about $32\%$ relative to DINO-WM. Applying the same physical-state probe to the true and predicted next latents removes the probe error common to both models. The advantage therefore arises before rollout error accumulates.

    \item
    \textcolor{HighlightBlue}{\textbf{Recursive rollout widens the advantage.}}
    In Panel~\ref{fig:dynamics-advantage}(c), positive values favor SG-JEPA. From horizon 2 onward, the free-rollout gap is several times larger than the teacher-forced gap, peaking near $0.38$ around horizon 20, while the local gap remains below $0.06$. This is consistent with Eq.~\ref{eq:adv-error-recursion}, which says that once predictions are fed back, a small one-step difference between the encoders grows into a much larger gap. That gap shrinks again at long horizons, so the amplification is not monotone.
\end{itemize}

\paragraph{Ablations.} Ablations are presented in the appendices.
Appendix~\ref{app:gravity-cf} tests whether the predictor actually uses gravity through an evaluation-time counterfactual, in which we hold the physical trajectories fixed and feed the predictor a gravity value that differs from the true one. For both GRU and Transformer predictors, the correct gravity minimizes rollout error, while larger mismatches generally increase it. Appendix~\ref{app:sparse-gravity-posttraining} studies adaptation from sparse post-training data. We post-train on disjoint support episodes at $\mathcal G_{\mathrm{sup}}=\{0,2,6,8\}$ and evaluate on 13 unseen interpolation gravity values. We find that post-training reduces SG-JEPA (GRU) error by $13.6\%$ on the square and $20.8\%$ on the triangle. Across all eight model-shape settings, it improves on the pretrained model and outperforms post-training on the target gravity alone, averaging a $14.1\%$ reduction versus $6.9\%$.

\begin{tcolorbox}[
  colframe=HighlightGreen,
  colback=HighlightGreenBg,
  boxrule=1.8pt,
  arc=4pt,
  left=7pt,
  right=7pt,
  top=6pt,
  bottom=4pt,
  width=\linewidth
]
\textbf{Takeaway.} SG-JEPA's OOD advantage lies in the representation the encoder learns during GRU training. We know this because the advantage survives replacing the predictor and is already present before any feedback, where it lowers far-OOD one-step error by $32\%$. The linear feature model explains why such a small one-step edge matters. Gravity conditioning alone does not guarantee transfer, and once predictions are fed back, a small one-step difference between encoders grows into a much larger long-horizon gap, which is what we see under recursive rollout.
\end{tcolorbox}

\section{Conclusion}

In this work, we introduce Semigroup-JEPA (SG-JEPA), which combines gravity conditioning with joint encoder--predictor training through recursive latent rollouts. To test whether a world model learns physical dynamics rather than a single transition law, we build eight MuJoCo datasets in which gravity is sampled from a narrow band at training time and from a much wider grid at test time. On 2D and 3D prediction and robot control, SG-JEPA outperforms Original LeWM and DINO-WM both at longer rollout horizons and at OOD gravity values, with the largest gains at long horizons and far from the training gravity. The same frozen representation transfers to closed-loop control, where a Diffusion Policy trained on SG-JEPA features raises success rates on all three manipulation tasks. To explain these gains, we develop a linear feature model that separates local law-conditioned error from its amplification under rollout and bounds the one-step error at an unseen gravity value by how well the training gravities cover it. Guided by this model, we train fresh predictors on frozen checkpoints and find that SG-JEPA's advantage lies in the encoder's representation, and that recursive rollout amplifies this local advantage over the horizon. Ablations further show that the predictor actually uses the supplied gravity, and that sparse post-training at a few new gravity values improves accuracy at the unseen values between them.

\paragraph{Limitations.} Several important directions remain for future work. First, our experiments vary a single scalar, gravity. It would be of interest to study a setting with vector-valued physical variables, or even to have the model infer dynamical parameters directly from observation. Second, transfer across object shapes is uneven. Training on 2D triangle and square datasets transfers much of the house's translational dynamics but not its rotation, and more complex shapes such as the pentagon remain challenging for the current setup, which may require a generalist model trained on diverse data. Finally, our theory uses a linear feature model, whereas the neural predictor is nonlinear and history dependent, and contacts can perturb transition branches. Understanding the behavior of JEPA-type models calls for new theoretical models that incorporate nonlinearities as well as action-perturbing effects.

\bibliographystyle{unsrtnat}
\bibliography{references}

@inproceedings{zhou2025dinowm,
  author    = {Zhou, Gaoyue and Pan, Hengkai and LeCun, Yann and Pinto, Lerrel},
  title     = {{DINO-WM}: World Models on Pre-trained Visual Features Enable Zero-shot Planning},
  booktitle = {Proceedings of the 42nd International Conference on Machine Learning},
  series    = {Proceedings of Machine Learning Research},
  volume    = {267},
  pages     = {79115--79135},
  publisher = {PMLR},
  year      = {2025}
}

@article{balestriero2025lejepa,
  author  = {Balestriero, Randall and LeCun, Yann},
  title   = {{LeJEPA}: Provable and Scalable Self-Supervised Learning Without the Heuristics},
  journal = {arXiv preprint arXiv:2511.08544},
  year    = {2025},
}

@article{maes2026lewm,
  author  = {Maes, Lucas and Le Lidec, Quentin and Scieur, Damien and LeCun, Yann and Balestriero, Randall},
  title   = {{LeWorldModel}: Stable End-to-End Joint-Embedding Predictive Architecture from Pixels},
  journal = {arXiv preprint arXiv:2603.19312},
  year    = {2026},
}

@article{xue2026acwmphys,
  author  = {Xue, Haotian and Chen, Yipu and Ma, Liqian and Zhao, Zelin and Moukheiber, Lama and Zhu, Yuchen and Chen, Yongxin},
  title   = {{ACWM-Phys}: Investigating Generalized Physical Interaction in Action-Conditioned Video World Models},
  journal = {arXiv preprint arXiv:2605.08567},
  year    = {2026},
}

@article{tan2026physicalidentifiability,
  author  = {Tan, Kaizhen and Xu, Xin and Tao, Siru and Hong, Hanzhe and Feng, Yang and Du, Heqing},
  title   = {What Can Latent World Models Know? Physical Parameter Identifiability in Multimodal Predictive Representations},
  journal = {arXiv preprint arXiv:2607.27017},
  year    = {2026},
}

@inproceedings{chi2023diffusionpolicy,
  author    = {Chi, Cheng and Feng, Siyuan and Du, Yilun and Xu, Zhenjia and Cousineau, Eric and Burchfiel, Benjamin and Song, Shuran},
  title     = {Diffusion Policy: Visuomotor Policy Learning via Action Diffusion},
  booktitle = {Proceedings of Robotics: Science and Systems},
  year      = {2023}
}

@inproceedings{todorov2012mujoco,
  author    = {Todorov, Emanuel and Erez, Tom and Tassa, Yuval},
  title     = {{MuJoCo}: A Physics Engine for Model-Based Control},
  booktitle = {2012 IEEE/RSJ International Conference on Intelligent Robots and Systems},
  pages     = {5026--5033},
  year      = {2012},
  doi       = {10.1109/IROS.2012.6386109}
}

@misc{lecun2022path,
  author       = {LeCun, Yann},
  title        = {A Path Towards Autonomous Machine Intelligence},
  howpublished = {OpenReview},
  year         = {2022},
  note         = {Version 0.9.2},
}

@inproceedings{assran2023ijepa,
  author    = {Assran, Mahmoud and Duval, Quentin and Misra, Ishan and Bojanowski, Piotr and Vincent, Pascal and Rabbat, Michael and LeCun, Yann and Ballas, Nicolas},
  title     = {Self-Supervised Learning from Images with a Joint-Embedding Predictive Architecture},
  booktitle = {Proceedings of the IEEE/CVF Conference on Computer Vision and Pattern Recognition},
  pages     = {15619--15629},
  year      = {2023},
  doi       = {10.1109/CVPR52729.2023.01499}
}

@article{bardes2024vjepa,
  author  = {Bardes, Adrien and Garrido, Quentin and Ponce, Jean and Chen, Xinlei and Rabbat, Michael and LeCun, Yann and Assran, Mido and Ballas, Nicolas},
  title   = {Revisiting Feature Prediction for Learning Visual Representations from Video},
  journal = {Transactions on Machine Learning Research},
  year    = {2024}
}

@article{assran2025vjepa2,
  author  = {Assran, Mahmoud and Bardes, Adrien and Fan, David and Garrido, Quentin and Howes, Russell and Komeili, Mojtaba and Muckley, Matthew and Rizvi, Ammar and Roberts, Claire and Sinha, Koustuv and Zholus, Artem and Arnaud, Sergio and Gejji, Abha and Martin, Ada and Robert Hogan, Francois and Dugas, Daniel and Bojanowski, Piotr and Khalidov, Vasil and Labatut, Patrick and Massa, Francisco and Szafraniec, Marc and Krishnakumar, Kapil and Li, Yong and Ma, Xiaodong and Chandar, Sarath and Meier, Franziska and LeCun, Yann and Rabbat, Michael and Ballas, Nicolas},
  title   = {{V-JEPA} 2: Self-Supervised Video Models Enable Understanding, Prediction and Planning},
  journal = {arXiv preprint arXiv:2506.09985},
  year    = {2025},
}

@article{oquab2024dinov2,
  author  = {Oquab, Maxime and Darcet, Timoth{\'e}e and Moutakanni, Th{\'e}o and Vo, Huy V. and Szafraniec, Marc and Khalidov, Vasil and Fernandez, Pierre and Haziza, Daniel and Massa, Francisco and El-Nouby, Alaaeldin and Assran, Mido and Ballas, Nicolas and Galuba, Wojciech and Howes, Russell and Huang, Po-Yao and Li, Shang-Wen and Misra, Ishan and Rabbat, Michael and Sharma, Vasu and Synnaeve, Gabriel and Xu, Hu and J{\'e}gou, Herv{\'e} and Mairal, Julien and Labatut, Patrick and Joulin, Armand and Bojanowski, Piotr},
  title   = {{DINOv2}: Learning Robust Visual Features Without Supervision},
  journal = {Transactions on Machine Learning Research},
  year    = {2024}
}

@inproceedings{ha2018worldmodels,
  author    = {Ha, David and Schmidhuber, J{\"u}rgen},
  title     = {Recurrent World Models Facilitate Policy Evolution},
  booktitle = {Advances in Neural Information Processing Systems},
  volume    = {31},
  year      = {2018}
}

@inproceedings{hafner2019planet,
  author    = {Hafner, Danijar and Lillicrap, Timothy and Fischer, Ian and Villegas, Ruben and Ha, David and Lee, Honglak and Davidson, James},
  title     = {Learning Latent Dynamics for Planning from Pixels},
  booktitle = {Proceedings of the 36th International Conference on Machine Learning},
  series    = {Proceedings of Machine Learning Research},
  volume    = {97},
  pages     = {2555--2565},
  publisher = {PMLR},
  year      = {2019}
}

@article{hafner2023dreamerv3,
  author  = {Hafner, Danijar and Pasukonis, Jurgis and Ba, Jimmy and Lillicrap, Timothy},
  title   = {Mastering Diverse Control Tasks Through World Models},
  journal = {Nature},
  volume  = {640},
  number  = {8059},
  pages   = {647--653},
  year    = {2025},
  doi     = {10.1038/s41586-025-08744-2}
}

@inproceedings{hansen2024tdmpc2,
  author    = {Hansen, Nicklas and Su, Hao and Wang, Xiaolong},
  title     = {{TD-MPC2}: Scalable, Robust World Models for Continuous Control},
  booktitle = {International Conference on Learning Representations},
  year      = {2024}
}

@article{riochet2018intphys,
  author  = {Riochet, Ronan and Ynocente Castro, Mario and Bernard, Mathieu and Lerer, Adam and Fergus, Rob and Izard, V{\'e}ronique and Dupoux, Emmanuel},
  title   = {{IntPhys} 2019: A Benchmark for Visual Intuitive Physics Understanding},
  journal = {IEEE Transactions on Pattern Analysis and Machine Intelligence},
  volume  = {44},
  number  = {9},
  pages   = {5016--5025},
  year    = {2022},
  doi     = {10.1109/TPAMI.2021.3083839}
}

@inproceedings{bear2021physion,
  author    = {Bear, Daniel and Wang, Elias and Mrowca, Damian and Binder, Felix and Tung, Hsiao-Yu and Pramod, R. T. and Holdaway, Cameron and Tao, Sirui and Smith, Kevin and Sun, Fan-Yun and Li, Fei-Fei and Kanwisher, Nancy and Tenenbaum, Joshua and Yamins, Daniel and Fan, Judith},
  title     = {{Physion}: Evaluating Physical Prediction from Vision in Humans and Machines},
  booktitle = {Proceedings of the Neural Information Processing Systems Track on Datasets and Benchmarks},
  volume    = {1},
  year      = {2021}
}

@article{garrido2025intuitivephysics,
  author  = {Garrido, Quentin and Ballas, Nicolas and Assran, Mahmoud and Bardes, Adrien and Najman, Laurent and Rabbat, Michael and Dupoux, Emmanuel and LeCun, Yann},
  title   = {Intuitive Physics Understanding Emerges from Self-Supervised Pretraining on Natural Videos},
  journal = {arXiv preprint arXiv:2502.11831},
  year    = {2025},
}

@inproceedings{sanchez2020learningtosimulate,
  author    = {Sanchez-Gonzalez, Alvaro and Godwin, Jonathan and Pfaff, Tobias and Ying, Rex and Leskovec, Jure and Battaglia, Peter W.},
  title     = {Learning to Simulate Complex Physics with Graph Networks},
  booktitle = {Proceedings of the 37th International Conference on Machine Learning},
  series    = {Proceedings of Machine Learning Research},
  volume    = {119},
  pages     = {8459--8468},
  publisher = {PMLR},
  year      = {2020}
}

@inproceedings{tobin2017domainrandomization,
  author    = {Tobin, Josh and Fong, Rachel and Ray, Alex and Schneider, Jonas and Zaremba, Wojciech and Abbeel, Pieter},
  title     = {Domain Randomization for Transferring Deep Neural Networks from Simulation to the Real World},
  booktitle = {2017 IEEE/RSJ International Conference on Intelligent Robots and Systems},
  pages     = {23--30},
  year      = {2017},
  doi       = {10.1109/IROS.2017.8202133}
}

@inproceedings{yu2017universalpolicy,
  author    = {Yu, Wenhao and Tan, Jie and Liu, C. Karen and Turk, Greg},
  title     = {Preparing for the Unknown: Learning a Universal Policy with Online System Identification},
  booktitle = {Proceedings of Robotics: Science and Systems},
  year      = {2017},
  doi       = {10.15607/RSS.2017.XIII.048}
}

@inproceedings{kumar2021rma,
  author    = {Kumar, Ashish and Fu, Zipeng and Pathak, Deepak and Malik, Jitendra},
  title     = {{RMA}: Rapid Motor Adaptation for Legged Robots},
  booktitle = {Proceedings of Robotics: Science and Systems},
  year      = {2021},
  doi       = {10.15607/RSS.2021.XVII.011}
}

@inproceedings{bengio2015scheduledsampling,
  author    = {Bengio, Samy and Vinyals, Oriol and Jaitly, Navdeep and Shazeer, Noam},
  title     = {Scheduled Sampling for Sequence Prediction with Recurrent Neural Networks},
  booktitle = {Advances in Neural Information Processing Systems},
  volume    = {28},
  year      = {2015}
}

@inproceedings{talvitie2014hallucinated,
  author    = {Talvitie, Erik},
  title     = {Model Regularization for Stable Sample Rollouts},
  booktitle = {Proceedings of the Thirtieth Conference on Uncertainty in Artificial Intelligence},
  pages     = {780--789},
  year      = {2014}
}

@inproceedings{venkatraman2015dad,
  author    = {Venkatraman, Arun and Hebert, Martial and Bagnell, J. Andrew},
  title     = {Improving Multi-step Prediction of Learned Time Series Models},
  booktitle = {Proceedings of the Twenty-Ninth AAAI Conference on Artificial Intelligence},
  pages     = {3024--3030},
  year      = {2015},
  doi       = {10.1609/aaai.v29i1.9590}
}

@inproceedings{lambert2021longterm,
  author    = {Lambert, Nathan O. and Wilcox, Albert and Zhang, Howard and Pister, Kristofer S. J. and Calandra, Roberto},
  title     = {Learning Accurate Long-term Dynamics for Model-based Reinforcement Learning},
  booktitle = {2021 60th IEEE Conference on Decision and Control},
  pages     = {2880--2887},
  year      = {2021},
  doi       = {10.1109/CDC45484.2021.9683134}
}

@inproceedings{lambert2020objectivemismatch,
  author    = {Lambert, Nathan and Amos, Brandon and Yadan, Omry and Calandra, Roberto},
  title     = {Objective Mismatch in Model-based Reinforcement Learning},
  booktitle = {Proceedings of the 2nd Conference on Learning for Dynamics and Control},
  series    = {Proceedings of Machine Learning Research},
  volume    = {120},
  pages     = {761--770},
  publisher = {PMLR},
  year      = {2020}
}

@inproceedings{grimm2020valueequivalence,
  author    = {Grimm, Christopher and Barreto, Andr{\'e} and Singh, Satinder and Silver, David},
  title     = {The Value Equivalence Principle for Model-Based Reinforcement Learning},
  booktitle = {Advances in Neural Information Processing Systems},
  volume    = {33},
  year      = {2020}
}

@article{schrittwieser2020muzero,
  author  = {Schrittwieser, Julian and Antonoglou, Ioannis and Hubert, Thomas and Simonyan, Karen and Sifre, Laurent and Schmitt, Simon and Guez, Arthur and Lockhart, Edward and Hassabis, Demis and Graepel, Thore and Lillicrap, Timothy and Silver, David},
  title   = {Mastering {Atari}, {Go}, Chess and Shogi by Planning with a Learned Model},
  journal = {Nature},
  volume  = {588},
  pages   = {604--609},
  year    = {2020},
  doi     = {10.1038/s41586-020-03051-4}
}

@inproceedings{nair2022r3m,
  author    = {Nair, Suraj and Rajeswaran, Aravind and Kumar, Vikash and Finn, Chelsea and Gupta, Abhinav},
  title     = {{R3M}: A Universal Visual Representation for Robot Manipulation},
  booktitle = {Conference on Robot Learning},
  series    = {Proceedings of Machine Learning Research},
  volume    = {205},
  pages     = {892--909},
  year      = {2022}
}

@inproceedings{majumdar2023vc1,
  author    = {Majumdar, Arjun and Yadav, Karmesh and Arnaud, Sergio and Ma, Jason and Chen, Claire and Silwal, Sneha and Jain, Aryan and Berges, Vincent-Pierre and Wu, Tingfan and Vakil, Jay and Abbeel, Pieter and Malik, Jitendra and Batra, Dhruv and Lin, Yixin and Maksymets, Oleksandr and Rajeswaran, Aravind and Meier, Franziska},
  title     = {Where Are We in the Search for an Artificial Visual Cortex for Embodied Intelligence?},
  booktitle = {Advances in Neural Information Processing Systems},
  volume    = {36},
  year      = {2023},
  doi       = {10.52202/075280-0031}
}

@inproceedings{cho2014gru,
  author    = {Cho, Kyunghyun and van Merri{\"e}nboer, Bart and Gulcehre, Caglar and Bahdanau, Dzmitry and Bougares, Fethi and Schwenk, Holger and Bengio, Yoshua},
  title     = {Learning Phrase Representations using {RNN} Encoder--Decoder for Statistical Machine Translation},
  booktitle = {Proceedings of the 2014 Conference on Empirical Methods in Natural Language Processing},
  pages     = {1724--1734},
  publisher = {Association for Computational Linguistics},
  year      = {2014},
  doi       = {10.3115/v1/D14-1179}
}

@inproceedings{vaswani2017attention,
  author    = {Vaswani, Ashish and Shazeer, Noam and Parmar, Niki and Uszkoreit, Jakob and Jones, Llion and Gomez, Aidan N. and Kaiser, Lukasz and Polosukhin, Illia},
  title     = {Attention Is All You Need},
  booktitle = {Advances in Neural Information Processing Systems},
  volume    = {30},
  year      = {2017}
}

@inproceedings{gu2023mamba,
  author    = {Gu, Albert and Dao, Tri},
  title     = {{Mamba}: Linear-Time Sequence Modeling with Selective State Spaces},
  booktitle = {First Conference on Language Modeling},
  year      = {2024}
}

@article{williams2015edmd,
  author  = {Williams, Matthew O. and Kevrekidis, Ioannis G. and Rowley, Clarence W.},
  title   = {A Data-Driven Approximation of the {Koopman} Operator: Extending Dynamic Mode Decomposition},
  journal = {Journal of Nonlinear Science},
  volume  = {25},
  number  = {6},
  pages   = {1307--1346},
  year    = {2015},
  doi     = {10.1007/s00332-015-9258-5}
}

@article{lusch2018deepkoopman,
  author  = {Lusch, Bethany and Kutz, J. Nathan and Brunton, Steven L.},
  title   = {Deep Learning for Universal Linear Embeddings of Nonlinear Dynamics},
  journal = {Nature Communications},
  volume  = {9},
  pages   = {4950},
  year    = {2018},
  doi     = {10.1038/s41467-018-07210-0}
}

@article{heemels2001equivalence,
  author  = {Heemels, W. P. M. H. and De Schutter, Bart and Bemporad, Alberto},
  title   = {Equivalence of Hybrid Dynamical Models},
  journal = {Automatica},
  volume  = {37},
  number  = {7},
  pages   = {1085--1091},
  year    = {2001},
  doi     = {10.1016/S0005-1098(01)00059-0}
}

@inproceedings{sobal2025jepa,
  author    = {Sobal, Uladzislau and Zhang, Wancong and Cho, Kyunghyun and Balestriero, Randall and Rudner, Tim G. J. and LeCun, Yann},
  title     = {Learning from Reward-Free Offline Data: A Case for Planning with Latent Dynamics Models},
  booktitle = {Advances in Neural Information Processing Systems},
  volume    = {38},
  year      = {2025},
  doi       = {10.52202/085713-1465}
}

@inproceedings{killian2017hiddenparameter,
  author    = {Killian, Taylor W. and Daulton, Samuel and Konidaris, George and Doshi-Velez, Finale},
  title     = {Robust and Efficient Transfer Learning with Hidden Parameter Markov Decision Processes},
  booktitle = {Advances in Neural Information Processing Systems},
  volume    = {30},
  pages     = {6250--6261},
  year      = {2017}
}

@inproceedings{lee2020cadm,
  author    = {Lee, Kimin and Seo, Younggyo and Lee, Seunghyun and Lee, Honglak and Shin, Jinwoo},
  title     = {Context-aware Dynamics Model for Generalization in Model-Based Reinforcement Learning},
  booktitle = {Proceedings of the 37th International Conference on Machine Learning},
  series    = {Proceedings of Machine Learning Research},
  volume    = {119},
  pages     = {5757--5766},
  publisher = {PMLR},
  year      = {2020}
}

@inproceedings{kirchmeyer2022coda,
  author    = {Kirchmeyer, Matthieu and Yin, Yuan and Dona, Jeremie and Baskiotis, Nicolas and Rakotomamonjy, Alain and Gallinari, Patrick},
  title     = {Generalizing to New Physical Systems via Context-Informed Dynamics Model},
  booktitle = {Proceedings of the 39th International Conference on Machine Learning},
  series    = {Proceedings of Machine Learning Research},
  volume    = {162},
  pages     = {11283--11301},
  publisher = {PMLR},
  year      = {2022}
}

@article{wang2024protocad,
  author  = {Wang, Junjie and Zhang, Qichao and Mu, Yao and Li, Dong and Zhao, Dongbin and Zhuang, Yuzheng and Luo, Ping and Wang, Bin and Hao, Jianye},
  title   = {Prototypical Context-Aware Dynamics for Generalization in Visual Control With Model-Based Reinforcement Learning},
  journal = {IEEE Transactions on Industrial Informatics},
  volume  = {20},
  number  = {9},
  pages   = {10717--10727},
  year    = {2024},
  doi     = {10.1109/TII.2024.3396525}
}

@inproceedings{ball2021augwm,
  author    = {Ball, Philip J. and Lu, Cong and Parker-Holder, Jack and Roberts, Stephen},
  title     = {Augmented World Models Facilitate Zero-Shot Dynamics Generalization From a Single Offline Environment},
  booktitle = {Proceedings of the 38th International Conference on Machine Learning},
  series    = {Proceedings of Machine Learning Research},
  volume    = {139},
  pages     = {619--629},
  publisher = {PMLR},
  year      = {2021}
}

@inproceedings{lanier2026redraw,
  author    = {Lanier, JB and Kim, Kyungmin and Karamzade, Armin and Liu, Yifei and Sinha, Ankita and He, Kat and Corsi, Davide and Fox, Roy},
  title     = {Adapting World Models with Latent-State Dynamics Residuals},
  booktitle = {Proceedings of the 8th Annual Learning for Dynamics and Control Conference},
  series    = {Proceedings of Machine Learning Research},
  volume    = {331},
  pages     = {117--144},
  publisher = {PMLR},
  year      = {2026}
}

@inproceedings{li2020visualgrounding,
  author    = {Li, Yunzhu and Lin, Toru and Yi, Kexin and Bear, Daniel and Yamins, Daniel and Wu, Jiajun and Tenenbaum, Joshua and Torralba, Antonio},
  title     = {Visual Grounding of Learned Physical Models},
  booktitle = {Proceedings of the 37th International Conference on Machine Learning},
  series    = {Proceedings of Machine Learning Research},
  volume    = {119},
  pages     = {5927--5936},
  publisher = {PMLR},
  year      = {2020}
}

@inproceedings{motamed2026physicsiq,
  author    = {Motamed, Saman and Culp, Laura and Swersky, Kevin and Jaini, Priyank and Geirhos, Robert},
  title     = {Do Generative Video Models Understand Physical Principles?},
  booktitle = {Proceedings of the IEEE/CVF Winter Conference on Applications of Computer Vision},
  pages     = {948--958},
  year      = {2026},
  doi       = {10.1109/WACV61042.2026.00099}
}

@article{wang2026gauge,
  author  = {Wang, Shuai and Feng, Yaxin and Jiang, Xuekun and Tian, Shihan and Yan, Ningyu and Shen, Xing and Lyu, Chaoyang and Wang, Hui and Zhou, Yunsong and Wang, Hanqing and Pang, Jiangmiao and Xiang, Yang and Gao, Xing and Shen, Chunhua and Zhang, Weinan},
  title   = {{GAUGE}: A Measurement-Grounded Benchmark for Physical Fidelity in Simulation Engines and Video World Models},
  journal = {arXiv preprint arXiv:2608.05948},
  year    = {2026},
}

@inproceedings{peebles2023scalablediffusionmodelstransformers,
  author    = {Peebles, William and Xie, Saining},
  title     = {Scalable Diffusion Models with Transformers},
  booktitle = {Proceedings of the IEEE/CVF International Conference on Computer Vision},
  pages     = {4195--4205},
  year      = {2023},
  doi       = {10.1109/ICCV51070.2023.00387}
}

@inproceedings{dosovitskiy2021imageworth16x16words,
  author    = {Dosovitskiy, Alexey and Beyer, Lucas and Kolesnikov, Alexander and Weissenborn, Dirk and Zhai, Xiaohua and Unterthiner, Thomas and Dehghani, Mostafa and Minderer, Matthias and Heigold, Georg and Gelly, Sylvain and Uszkoreit, Jakob and Houlsby, Neil},
  title     = {An Image Is Worth 16x16 Words: Transformers for Image Recognition at Scale},
  booktitle = {International Conference on Learning Representations},
  year      = {2021}
}

@misc{jordan2024muon,
  author       = {Jordan, Keller and Jin, Yuchen and Boza, Vlado and You, Jiacheng and Cesista, Franz and Newhouse, Laker and Bernstein, Jeremy},
  title        = {Muon: An Optimizer for Hidden Layers in Neural Networks},
  howpublished = {Online},
  year         = {2024},
  note         = {Blog post},
}

@article{liu2025muonscalablellmtraining,
  author  = {Liu, Jingyuan and Su, Jianlin and Yao, Xingcheng and Jiang, Zhejun and Lai, Guokun and Du, Yulun and Qin, Yidao and Xu, Weixin and Lu, Enzhe and Yan, Junjie and Chen, Yanru and Zheng, Huabin and Liu, Yibo and Liu, Shaowei and Yin, Bohong and He, Weiran and Zhu, Han and Wang, Yuzhi and Wang, Jianzhou and Dong, Mengnan and Zhang, Zheng and Kang, Yongsheng and Zhang, Hao and Xu, Xinran and Zhang, Yutao and Wu, Yuxin and Zhou, Xinyu and Yang, Zhilin},
  title   = {Muon Is Scalable for {LLM} Training},
  journal = {arXiv preprint arXiv:2502.16982},
  year    = {2025},
}

@inproceedings{adamw,
  author    = {Loshchilov, Ilya and Hutter, Frank},
  title     = {Decoupled Weight Decay Regularization},
  booktitle = {International Conference on Learning Representations},
  year      = {2019}
}

\clearpage
\appendix

\newif\ifappendixfloatbarriers
\appendixfloatbarrierstrue
\pretocmd{\section}{\ifappendixfloatbarriers\FloatBarrier\fi}{}{}

\setcounter{topnumber}{4}
\setcounter{bottomnumber}{2}
\setcounter{totalnumber}{6}
\renewcommand{\topfraction}{0.90}
\renewcommand{\bottomfraction}{0.75}
\renewcommand{\textfraction}{0.08}
\renewcommand{\floatpagefraction}{0.72}

\starttocentries
\tableofcontents

\section{Additional related work}
\label{app:related}

This appendix expands Section~\ref{sec:related}.

\paragraph{Latent world models and physical prediction.}
Joint-embedding predictive architectures learn to predict features rather than
pixels \citep{lecun2022path,assran2023ijepa,bardes2024vjepa,assran2025vjepa2}.
DINO-WM fits dynamics over a frozen DINOv2 encoder
\citep{zhou2025dinowm,oquab2024dinov2}, while LeWM trains its encoder and
predictor jointly with the SIGReg regularizer from LeJEPA
\citep{balestriero2025lejepa,maes2026lewm}.  Such latent models can also plan
from reward-free offline data \citep{sobal2025jepa}; reconstruction-based world
models retain a pixel or observation decoder
\citep{ha2018worldmodels,hafner2019planet,hafner2023dreamerv3,hansen2024tdmpc2}.
Physical benchmarks provide a complementary test.  IntPhys and Physion evaluate
physical events or human judgments \citep{riochet2018intphys,bear2021physion},
whereas Physics-IQ and GAUGE measure trajectories and physical quantities
\citep{motamed2026physicsiq,wang2026gauge}.  Other work studies intuitive physics
in self-supervised video representations \citep{garrido2025intuitivephysics},
structured particle simulators \citep{sanchez2020learningtosimulate}, visual
inference of physical properties \citep{li2020visualgrounding}, and prediction
under held-out physical configurations
\citep{xue2026acwmphys,tan2026physicalidentifiability}.

\paragraph{Generalization across physical systems.}
Hidden-parameter MDPs represent related environments with a low-dimensional
dynamics variable \citep{killian2017hiddenparameter}.  CaDM and CoDA infer this
context from trajectories \citep{lee2020cadm,kirchmeyer2022coda}, and ProtoCAD
extends context-conditioned dynamics to high-dimensional visual observations
\citep{wang2024protocad}.  Universal policies, rapid motor adaptation, and
augmented world models also infer or adapt to changing dynamics
\citep{yu2017universalpolicy,kumar2021rma,ball2021augwm}; residual latent
dynamics provide a related approach to sim-to-real adaptation
\citep{lanier2026redraw}.  Domain randomization instead trains across parameter
variations without exposing the parameter to the policy
\citep{tobin2017domainrandomization}.  SG-JEPA removes system identification
from the experiment: the scene and control interface remain fixed, gravity is
given to the model, and only that scalar varies.  The evaluation therefore
isolates interpolation and extrapolation along a known physical coordinate.

\paragraph{Rollout training and downstream use.}
A one-step model accumulates error when its predictions become future inputs
\citep{bengio2015scheduledsampling}.  Existing remedies feed predictions back
during training \citep{talvitie2014hallucinated}, correct rollout states with
training data \citep{venkatraman2015dad}, supervise several latent horizons
\citep{hafner2019planet}, or optimize long-horizon accuracy directly
\citep{lambert2021longterm}.  SG-JEPA backpropagates its rollout loss through
both predictor and encoder, and the frozen-encoder crossover separates their
contributions across recurrent, state-space, and attention predictors
\citep{cho2014gru,vaswani2017attention,gu2023mamba}.  Because model loss need not
track control performance \citep{lambert2020objectivemismatch}, related work has
used value-equivalent objectives \citep{grimm2020valueequivalence,
schrittwieser2020muzero} and frozen representations for imitation learning
\citep{nair2022r3m,majumdar2023vc1,chi2023diffusionpolicy}.  We likewise report
physical prediction and downstream policy performance, while keeping the data
and policy training fixed within each controlled comparison.  The accompanying
analysis uses lifted linear dynamics and a piecewise-affine contact model
\citep{williams2015edmd,lusch2018deepkoopman,heemels2001equivalence}; these are
surrogate analyses, not claims that the learned predictors are linear.

\section{Datasets}
\label{app:dataset-generation}

We evaluate eight datasets generated with MuJoCo~\citep{todorov2012mujoco}:
four planar rigid-body prediction tasks, one 3D projectile prediction task, and
three 3D robotic control tasks.  Table~\ref{tab:dataset-summary} summarizes
their observation, action, and state records.

\paragraph{Planar rigid-body prediction.}
The four 2D datasets contain one red rigid body, a right triangle, square, pentagon, or "house" (concatenation of the right triangle and the square along the leg), moving and colliding inside a planar box.  We randomize its initial pose and apply a single planar impulse.  The eight-dimensional state is $[x,z,v_x,v_z,\vartheta,\omega,x_{\mathrm{anchor}},z_{\mathrm{anchor}}]$.
The physics record also stores the object mass and size, shape identifier, and
box dimensions.

\paragraph{3D prediction and control.}
\textit{Approach Ball} isolates projectile prediction: a ball is launched with randomized initial velocity and travels freely without actions. \textit{Arm Catcher Ball} and \textit{Arm Paddle Ball} use a Unitree Z1 arm.  The catcher task moves a catcher to intercept the bouncing ball, while the paddle task controls position and tilt to sustain repeated bounces.  Their per-frame records include the commanded and realized catcher or paddle state, contact flags, arm joint positions and velocities, and task success. \textit{Franka Paddle-to-Basket} instead uses a Franka arm with a blue paddle attached to strike an incoming ball toward a basket. Success requires a valid blade contact followed by the ball entering the basket. In addition to the ball, paddle, contact, physics, and arm records, this dataset stores basket geometry, task events, a strike diagnostic, and a failure label. The scripted outcome cycle contains 70\% successes and 5\% from each of six
informative failure modes.

\begin{table*}[!htbp]
    \centering
    \small
    \setlength{\tabcolsep}{4.2pt}
    \renewcommand{\arraystretch}{1.12}
    \caption{
        Dataset summary. Each episode consists of 64 frames at 16\,Hz.
        State records contain the task-relevant object and robot states used for
        supervision and evaluation.
    }
    \label{tab:dataset-summary}
    \resizebox{\textwidth}{!}{%
    \begin{tabular}{lllcl}
        \toprule
        Dataset & Role & RGB resolution & Action record & Principal state record \\
        \midrule
        Right Triangle
            & Prediction & $128^2$
            & $(J_x, J_z, g)$
            & Rigid body (8) \\
        Square
            & Prediction & $128^2$
            & $(J_x, J_z, g)$
            & Rigid body (8) \\
        Pentagon
            & Prediction & $128^2$
            & $(J_x, J_z, g)$
            & Rigid body (8) \\
        House
            & Prediction & $128^2$
            & $(J_x, J_z, g)$
            & Rigid body (8) \\
        \midrule
        Approach Ball
            & Prediction & $256^2$
            & $g$
            & Ball (16) \\
        Arm Catcher Ball
            & Control & $256^2$
            & $(g, \Delta x, \Delta y, \Delta z)$
            & Ball (16), catcher (9), Z1 arm (18) \\
        Arm Paddle Ball
            & Control & $256^2$
            & $(g, \Delta x, \Delta y, \Delta z, \phi, \theta)$
            & Ball (16), paddle (9), Z1 arm (18) \\
        Franka Paddle-to-Basket
            & Control & $256^2$
            & $(g, \Delta x, \Delta y, \Delta z, \phi, \theta)$
            & Ball (16), paddle (9), Panda arm (21), basket (6) \\
        \bottomrule
    \end{tabular}%
    }
\end{table*}

\paragraph{Gravity sampling and input encoding.}
The gravity $g$ is constant within each episode.  For the 2D planar tasks, training gravity follows $g=\max(\mathcal{N}(4,0.5^2),0.1)$ with 8000 episodes each, and the held-out test split uses the 25-point grid $\{-2,-1.5,\ldots,10\}$ with 200 episodes per value. For 3D datasets, \textit{Approach Ball}, \textit{Arm Paddle Ball} and \textit{Franka Paddle-to-Basket} again samples 8000 episodes from $g=\max(\mathcal{N}(9.8,2^2),0)$ for training; and 25-value held-out test set $\{0,1,\ldots,20,0.62,1.63,3.72,8.87\}$, with the last 4 values being the gravity values in Pluto, Moon, Mars and Venus respectively. \textit{Arm Catcher Ball} uses $g=\max(\mathcal{N}(4,0.5^2),0.1)$ and the 23-value grid $\{-1,-0.5,\ldots,10\}$.  

Gravity is stored in physical units in the physics record and as an action coordinate.  The planar action is $(J_x,J_z,g)$, where the two impulse channels are nonzero only at $t=0$.  \textit{Approach Ball} receives only $g$.  The arm tasks append Cartesian catcher or paddle commands from a scripted expert, and \textit{Arm Paddle Ball} and \textit{Franka Paddle-to-Basket} append two paddle-orientation commands. Before a learned component receives gravity, we z-score it as $(g-\mu_{g,\mathrm{train}})/\sigma_{g,\mathrm{train}}$.  We compute these statistics only on the corresponding training split and keep them fixed for validation, test, and post-training evaluation.  Evaluation data never enter the normalization statistics.

\paragraph{Format and storage.}
Each episode contains 64 frames at 16\,Hz over 4 seconds.  The 2D videos are $128\!\times\!128$, and the 3D videos are $256\!\times\!256$.  Action $a_t$ is aligned with the transition from frame $t$ to frame $t+1$.  The Lance datasets contain one row per frame, including episode and step indices, split
identifier, JPEG-encoded RGB, simulator state, aligned action, reward, a compact
physics vector, raw gravity, and source-episode index.  Controlled tasks add
JSON episode metadata and the task-specific records above.  Headline 2D plots
use fixed named subsets of the larger test split, while some 3D probe diagnostics
use separate evaluation manifests; each result states its cohort size.

Figure~\ref{fig:dataset-trajectory-overview} gives representative trajectories
from all eight datasets, grouped by their prediction or control role.
Figure~\ref{fig:decoded-rollout-id-ood} compares ground-truth frames with
decoded open-loop predictions for one 2D task and two 3D tasks under both
in-distribution and out-of-distribution gravity.  For the decoded rollouts, we
attach a separate online CNN decoder to each world model.  The decoder maps the
256-dimensional latent to RGB from an $8\!\times\!8$ feature map, with 256
base channels, a 32-channel minimum, and one residual block.  We train it on
detached features with RGB MSE, using Muon at $5\times10^{-4}$ for eligible
matrix parameters and AdamW at $10^{-4}$ for the remaining parameters.  The
reconstruction loss therefore does not update the world model.  The decoder is
used only for visualization; quantitative prediction results use frozen
physical-state probes.

\begin{figure*}[!htbp]
    \centering
    \includegraphics[width=0.90\textwidth]{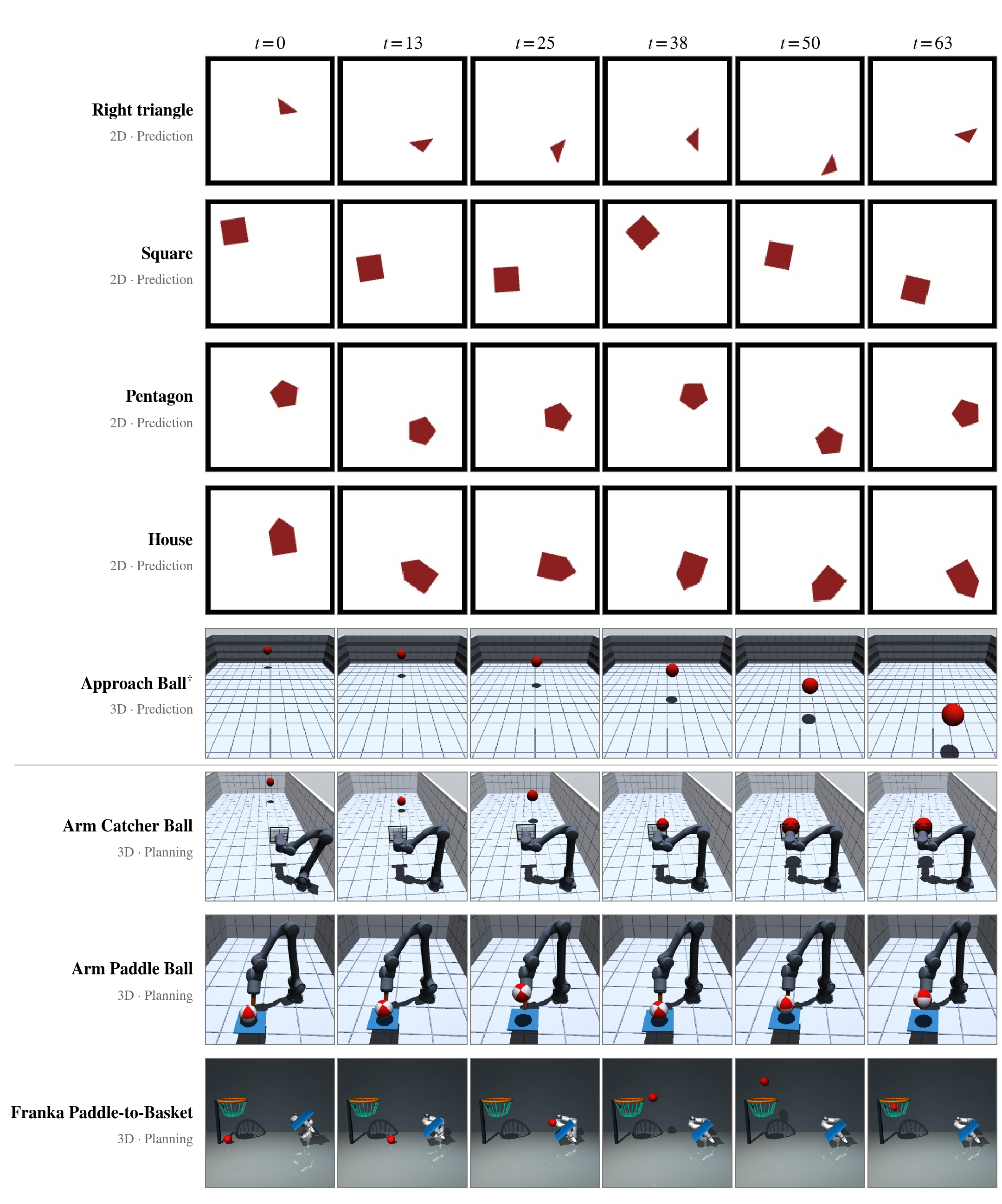}
    \caption{Representative evolution of one episode from each dataset.  We
    show frames $t\in\{0,13,25,38,50,63\}$.  Rows above the divider are passive prediction
    environments; rows below it require arm control and are used for control.}
    \label{fig:dataset-trajectory-overview}
\end{figure*}

\begin{figure*}[!htbp]
    \centering
    \includegraphics[width=\textwidth]{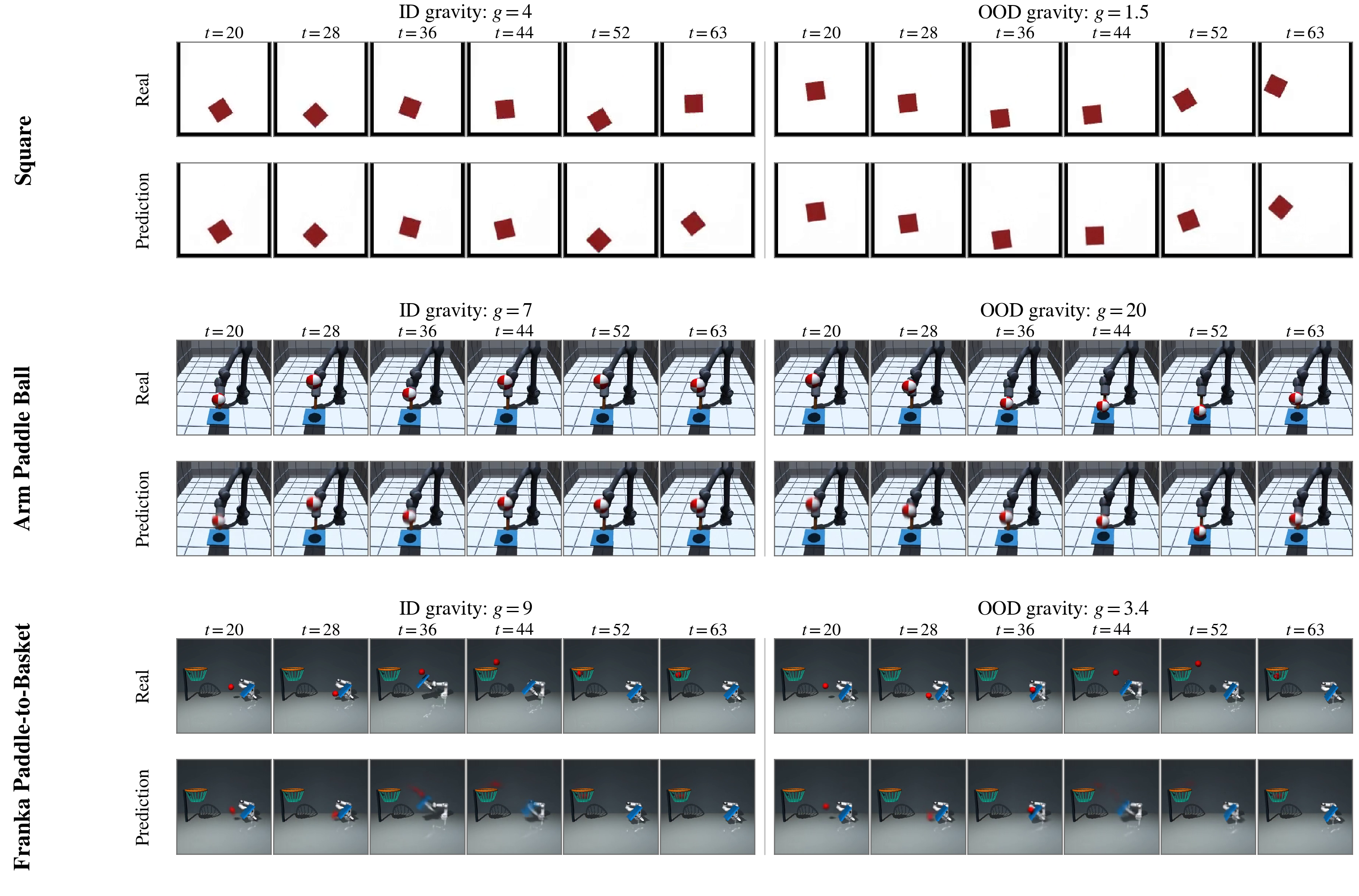}
    \caption{Decoded open-loop rollouts under in-distribution (ID) and
    out-of-distribution (OOD) gravity.  For each dataset, the upper row shows
    ground-truth and the lower row decodes the autoregressively
    predicted latents.  The model receives 20 context frames and then rolls out
    through $t=63$.}
    \label{fig:decoded-rollout-id-ood}
\end{figure*}

\appendixfloatbarriersfalse
\section{Model and training details}
\label{app:model-details}

\subsection{World Model}
\appendixfloatbarrierstrue

Section~\ref{sec:method} gives the model architecture and training objective; in this section, we elaborate on the implementation details needed to reproduce the reported runs. The ViT-Tiny CLS token is projected from 192 to 256 dimensions. The GRU and SSM predictors both use three residual layers of width 512, an MLP width of 2048, dropout 0.1, and action concatenation at every layer. As with the other action channels, gravity $g$ is z-scored before entering the action encoder.

During rollout training, each predicted latent is fed back into the predictor. For the next step, the latent and action histories shift forward together, retaining only their most recent \(H\) entries. Every final SG-JEPA model uses \(H=20\), a five-step training rollout \(K=5\), discount \(\gamma=0.95\), and trainable target latents without stop-gradient since SIGReg already regulates the latent space and avoids collapse. SIGReg is evaluated on encoded latents only, using 1024 random projections and 17 knots.  The coefficient is selected separately for each task family, as summarized in Table~\ref{tab:model-details-dims}. All models are trained for 20 epochs with batchsize 64. Ablation studies for these hyperparameters are presented in the next sections. 

\begin{table*}[!htbp]
\centering
\footnotesize
\caption{SG-JEPA configurations used in the main textheadline experiments. All reported models use the fixed epoch-20
checkpoint.}
\label{tab:model-details-dims}
\begin{tabular}{@{}p{0.34\textwidth}p{0.19\textwidth}p{0.21\textwidth}p{0.15\textwidth}@{}}
\toprule
\textbf{Task family} & \textbf{Image / patch} & \textbf{Predictor} &
\(\boldsymbol{\lambda_{\mathrm{sig}}}\) \\
\midrule
Planar rigid bodies & \(128^2 / 8\) & GRU, SSM & 0.72 \\
Approach Ball & \(256^2 / 16\) & GRU, SSM & 0.18 \\
Arm Catcher Ball & \(256^2 / 16\) & GRU & 0.18 \\
Arm Paddle Ball & \(256^2 / 16\) & GRU & 0.72 \\
Franka Paddle-to-Basket & \(256^2 / 16\) & GRU & 0.09 \\
\bottomrule
\end{tabular}
\end{table*}

All rows use the same hybrid optimizer.  Every trainable two-dimensional
parameter tensor is optimized by Muon at \(10^{-4}\); the remaining parameters use AdamW at \(5\times10^{-5}\) with weight decay \(10^{-3}\).  Muon uses Nesterov momentum 0.95, five Newton--Schulz steps, and the adjustment with scale 0.4~\cite{liu2025muonscalablellmtraining}.

\paragraph{Baseline configurations.}
Original LeWM retains the same ViT-Tiny encoder, 256-dimensional projector,
action preprocessing, and \(H=20\) context, but uses a causal Transformer with six layers, width 256, 16 attention heads, MLP width 2048, and dropout 0.1.  It is trained with the teacher-forced one-step latent MSE and SIGReg coefficient 0.09. These runs use the same hybrid Muon--AdamW recipe above and fixed epoch-20 checkpoints. 

DINO-WM freezes a pretrained DINOv2 ViT-S/14 encoder and extracts a grid of 64 normalized patch tokens, each 384-dimensional, from \(112\times112\) inputs. Its action-conditioned, frame-causal Transformer uses \(H=20\), six layers, 16 attention heads, MLP width 2048, dropout 0.1, and a 10-dimensional action embedding.  The predictor is trained by teacher forcing with one-step MSE to the next frozen patch-token grid.  The reported DINO-WM predictors are trained for 20 epochs with AdamW at \(5\times10^{-4}\), weight decay \(10^{-2}\), batch size 32. 

\subsection{Downstream model}
\label{app:downstream-training}

\paragraph{State probes.}
For each frozen source representation, we fit a separate MLP probe. The probe layer-normalizes the concatenated latent window, followed by two GELU layers of widths 512 and 256 with dropout 0.05 and a linear readout.  We z-score each target coordinate using statistics from the probe-training split and optimize the probe with AdamW at \(10^{-3}\), weight decay \(10^{-5}\), and batch size 256 for at most 50 epochs, retaining the checkpoint with the lowest validation NMSE. Probes are trained only on encoded ground-truth sequences and are then frozen before being applied to predicted rollouts.

\paragraph{Diffusion policies.}
We train a separate policy for every frozen encoder and control task.  In addition to the latent-history GRU described in Section~\ref{sec:method-eval}, the noise 
network is a conditional 1D U-Net with channel widths \(256,512,1024\), kernel size 5, eight GroupNorm groups, and a 256-dimensional diffusion-step embedding. Each policy is trained for 300k updates with batch size 256 using AdamW at \(10^{-4}\), weight decay \(10^{-6}\). The learning rate warms up linearly for 500 updates and then decays with a cosine schedule. The visual encoder remains frozen throughout. In particular, the DINO-WM policy mean-pools frozen patch 
tokens, and neither it nor the SG-JEPA policies query a world-model predictor 
online.

\section{Ablation studies}
\subsection{Optimizer and hyperparameter choices}

\subsubsection{Learning rate sweep}
\label{app:lr-sweep}

We run learning rate sweep on the 2D \(128\times128\) square dataset using a
Transformer predictor.  The 8,000 source episodes are partitioned by episode into 7,200 training and 800 validation episodes; panels (d)--(f) use extra held-out test split.  We vary the AdamW learning rate over
\(\{10^{-5},3\times10^{-5},5\times10^{-5},10^{-4},2\times10^{-4},
5\times10^{-4}\}\), with weight decay \(10^{-3}\), batch size 64,
\(H=20\), \(K=5\), \(\gamma=0.95\), and \(\lambda_{\mathrm{SIG}}=0.72\).  Each run trains for at most 20 epochs.  Early stopping monitors the validation objective with patience three, and we retain the checkpoint with the lowest validation objective.  We then fit temporal-window-4 linear and MLP probes to each checkpoint.

\begin{figure}[!htbp]
    \centering
    \includegraphics[width=\linewidth]{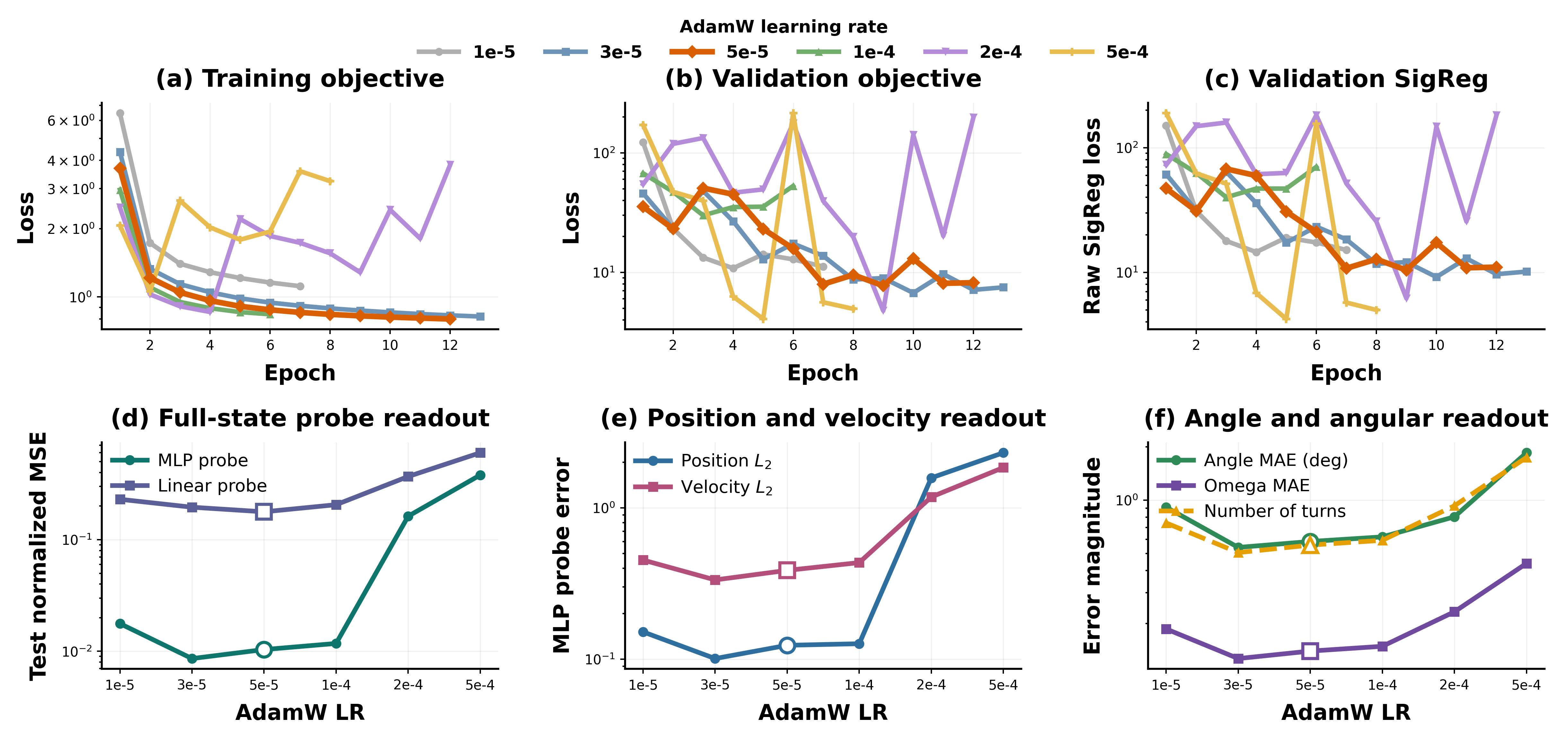}
    \caption{\textbf{AdamW learning-rate sweep on 2D square dataset}}
    \label{fig:lr-sweep}
\end{figure}

\paragraph{Results} The lowest validation objective occurs at \(5\times10^{-4}\), but its poor
probe readout shows that the objective alone is not a reliable measure of
representation quality.  The \(5\times10^{-5}\) setting nearly matches the
best MLP-probe validation NMSE and gives the best linear-probe NMSE; the
held-out test results in Figure~\ref{fig:lr-sweep} show the same trend.  We
therefore use \(5\times10^{-5}\) for the AdamW parameters.

\subsubsection{Muon versus AdamW}
\label{sec:muon_vs_adam}

\begin{figure*}[!htbp]
    \centering
    \includegraphics[width=0.8\textwidth]{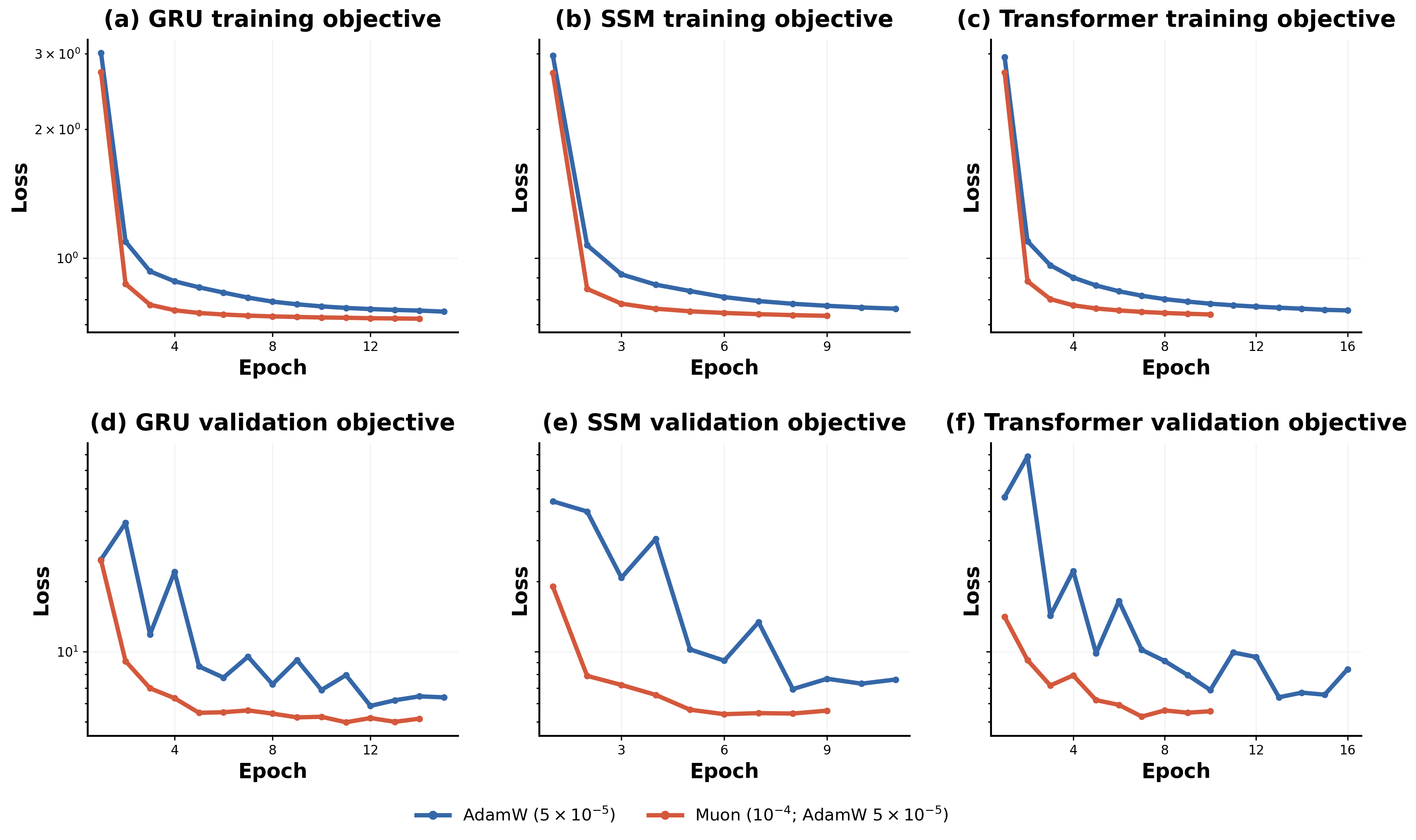}
    \caption{\textbf{AdamW versus hybrid Muon/AdamW}
    Panels (a-c) show training objectives and panels (d-f) show validation
    objectives for the GRU, SSM, and Transformer predictors. Lower is better.}
    \label{fig:muon_adam}
\end{figure*}
\FloatBarrier

In Fig.~\ref{fig:muon_adam}, we compare AdamW with the hybrid Muon/AdamW on the \(128\times128\) square dataset.  For each of the GRU, SSM, and
Transformer predictors, both conditions use a 256-dimensional latent,
\(H=20\), \(K=5\), \(\gamma=0.95\), \(\lambda_{\mathrm{SIG}}=0.72\), batch size
64.  Training is capped at 20 epochs, with early
stopping after three epochs without validation improvement.  AdamW uses
\(5\times10^{-5}\) for all parameters.  The hybrid recipe assigns every
two-dimensional parameter tensor to Muon at \(10^{-4}\) and the remaining
parameters to AdamW at \(5\times10^{-5}\).  Muon uses Nesterov momentum 0.95,
five Newton-Schulz iterations, and per-matrix RMS-matching multiplier
\(0.4\sqrt{\max(A,B)}\).

Across all three predictors, Muon lowers the training objective
faster and reaches a lower best validation objective: \(4.990\) versus \(5.864\) for the GRU, \(5.403\) versus \(6.922\) for the SSM, and \(5.278\) versus \(6.384\) for the Transformer.  These are relative reductions of \(14.9\%\), \(21.9\%\), and \(17.3\%\), respectively.  The selected checkpoints also occur earlier, at epochs 11, 6, and 7 instead of 12, 8, and 13.

\subsubsection{SIGReg coefficient sweep}
\label{app:sigreg-sweep}

In this case we vary \(\lambda_{\mathrm{SIG}}\in\{0.09,0.18,0.36,0.72,1.44\}\) on 2D square dataset. Rollout metrics and decoded examples use a 5,000-episode selection
cohort spanning 25 gravity values from \(-2\) to \(10\); effective rank uses a 2,000-episode well-spread subset. We select SIGReg coefficient based on the downstream rollout performance.

\begin{figure*}[!htbp]
    \centering
    \includegraphics[width=0.8\textwidth]{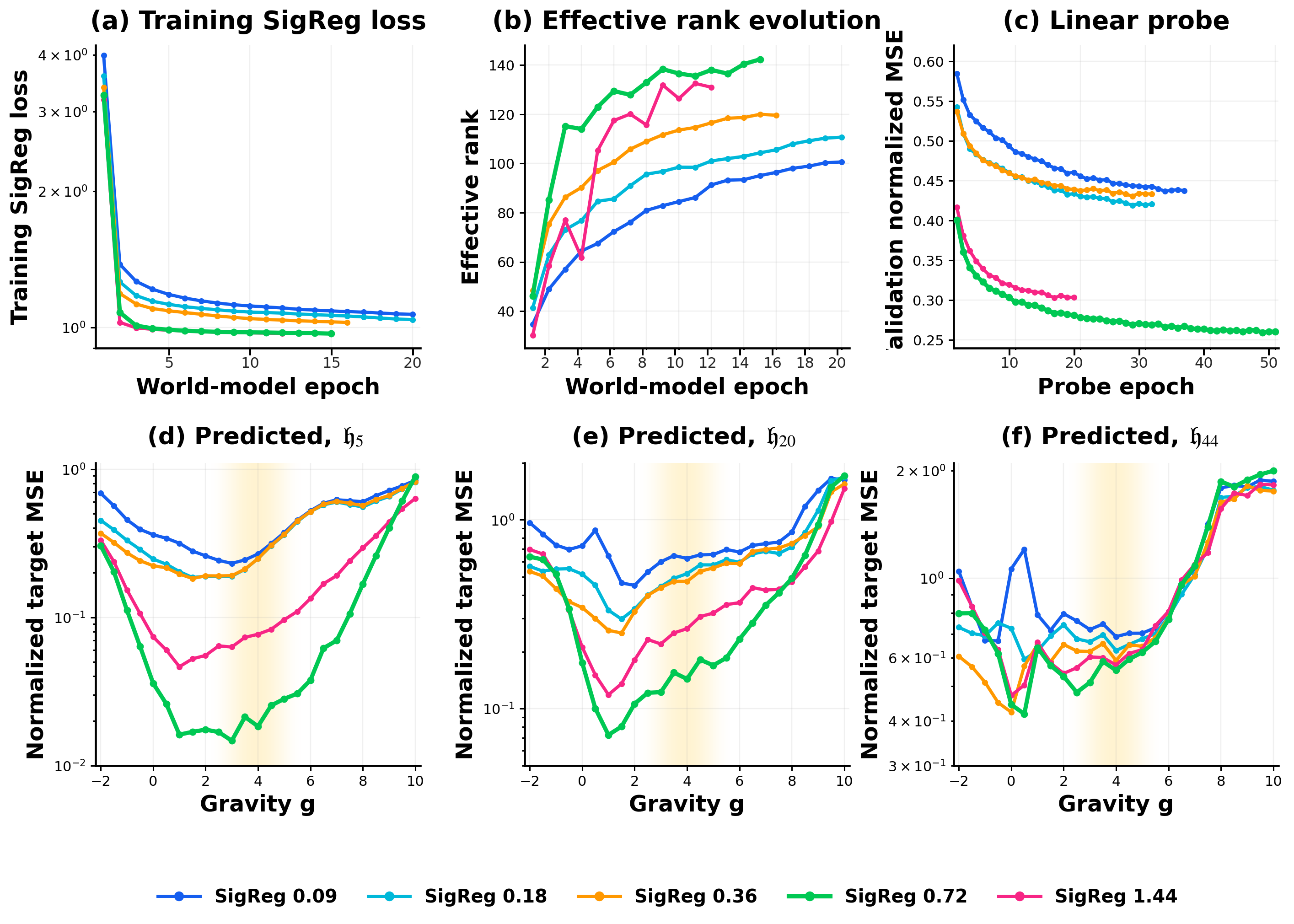}
    \caption{\textbf{SIGReg coefficient sweep.}
    (a) Raw training SIGReg loss; (b) Effective rank on the same 2,000 episodes at every checkpoint; (c) Linear-probe validation NMSE; and (d--f) MLP-probe NMSE after 5, 20, and 44 autoregressive steps across \(g\in[-2,10]\). Panels (c--f) use the same epoch checkpoints. The yellow shade in the background indicates training distribution. Lower is better except in (b).}
    \label{fig:sigreg-sweep}
\end{figure*} 
We note that raw SIGReg loss measures agreement with a standard Gaussian under random one-dimensional projections, not dimensional collapse. We therefore also report the effective rank of the final-context latent covariance. For \(z_i\in\mathbb{R}^{256}\) and \(N=2000\),
\begin{equation}
    \bar z=\frac{1}{N}\sum_{i=1}^{N}z_i,
    \qquad
    C=\frac{1}{N-1}\sum_{i=1}^{N}
    (z_i-\bar z)(z_i-\bar z)^{\top}.
    \label{eq:sigreg-covariance}
\end{equation}
Let \(\lambda_1\geq\cdots\geq\lambda_D\geq0\) denote the eigenvalues of
\(C\). We summarize the spectrum by its effective rank,
\begin{equation}
    p_j=\frac{\lambda_j}{\sum_k\lambda_k},\qquad
    r_{\mathrm{eff}}(C)
    =\exp\!\left(-\sum_{j:p_j>0}p_j\log p_j\right).
    \label{eq:sigreg-effective-rank}
\end{equation}
Fig.~\ref{fig:sigreg-sweep} separates optimization behavior from
representation quality. Although \(0.72\) and \(1.44\) reach similarly low
raw SIGReg losses in panel (a), panels (b) and (c) favor \(0.72\): the effective rank is \(138.0\), compared with \(131.0\) for \(1.44\), and it gives the lowest linear-probe validation NMSE. Panels (d) and (e) show that \(0.72\) also gives the best short- and medium-horizon readout over most of the gravity range. Early rank remains a useful screening signal,
with epoch-5 correlations of \(\rho_s=-0.90\) and \(-1.00\) against later linear- and MLP-probe NMSE, respectively.

\begin{figure*}[!htbp]
    \centering
    \includegraphics[width=\textwidth]{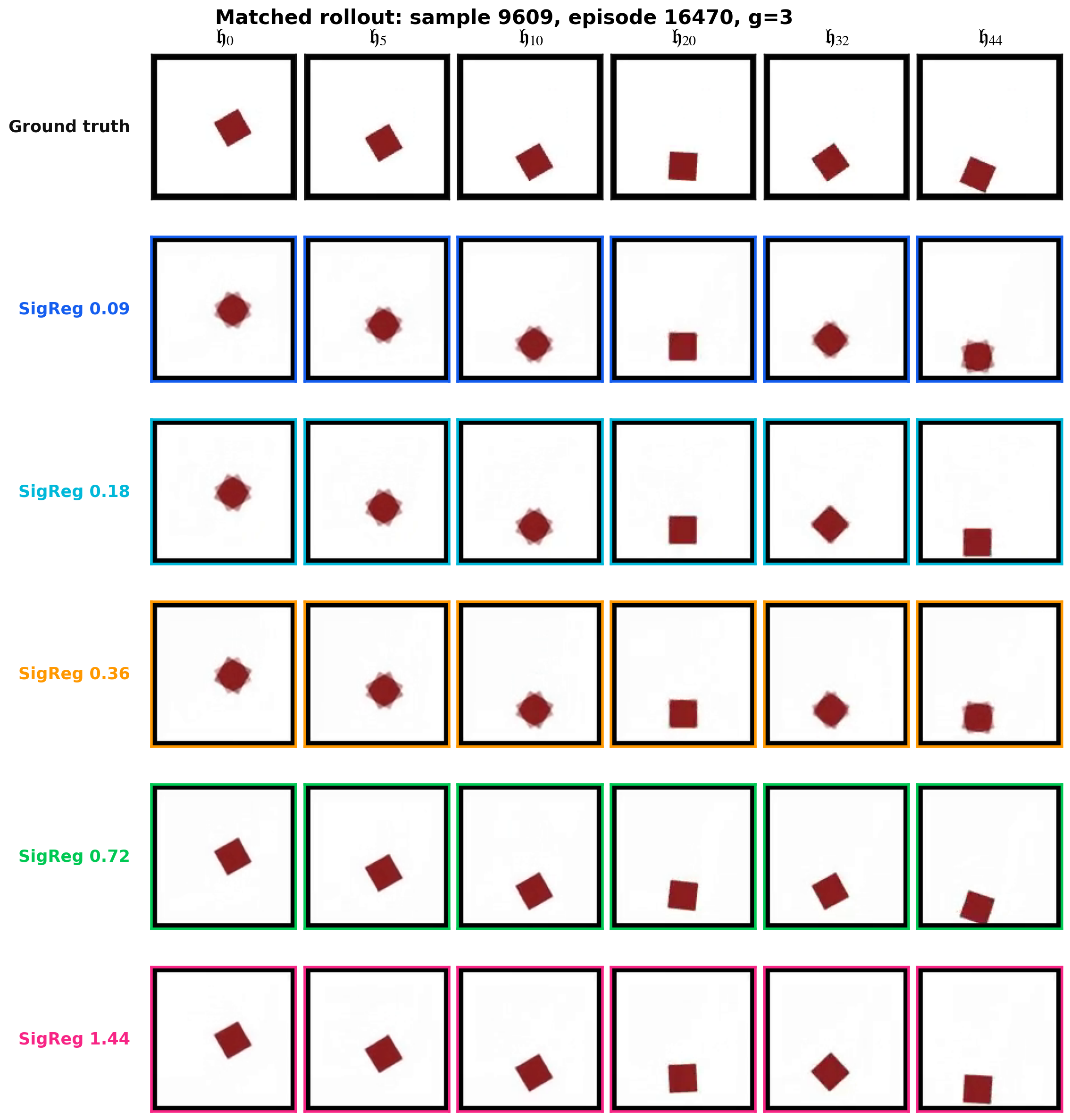}
    \caption{\textbf{Qualitative comparison of the SIGReg coefficient.}
    Matched rollout predictions for one representative square episode at \(g=3\). The top row shows ground-truth frames, and the remaining rows show decoded predicted latents for each value of \(\lambda_{\mathrm{SIG}}\). Columns show the autoregressive rollout horizons 5, 10, 20, 32 and 44. Each prediction row is rendered using the online decoder trained
    jointly with that model.}
    \label{fig:sigreg-example}
\end{figure*}

Fig.~\ref{fig:sigreg-example} shows the same pattern. The representations obtained with \(\lambda_{\mathrm{SIG}}=0.09,0.18,0.36\) decode the square as a rounded or blurred object and accumulate visible pose and position errors. At \(\lambda_{\mathrm{SIG}}=0.72\), the higher-rank representation most clearly preserves corners, orientation, and trajectory over the rollout. Increasing the coefficient to \(1.44\) retains relatively sharp geometry but introduces larger long-horizon trajectory errors. Finally we select \(\lambda_{\mathrm{SIG}}=0.72\) for our 2D experiments because it gives the best balance of effective rank, frozen-probe accuracy, and rollout quality.

\subsection{History length and rollout horizon ablations}

To choose a common temporal configuration without running a separate sweep for every dataset, we perform detailed ablations on one planar task (2D Square) and one 3D task (Approach Ball). The 2D study varies history length and predictor depth, whereas the 3D study also varies the SIGReg coefficient (2D we've done in the section above). Both tasks favor a 20-frame history,
and Approach Ball favors a five-step training rollout. We therefore use \(H=20\) and \(K=5\) across all datasets for a consistent experimental setup. The detailed results are as follows.

\subsubsection{2D square: history length and predictor depth}
\label{app:history-depth-sweep}

On the 2D square dataset, we cross \(H\in\{5,10,20,30\}\) with predictor depth \(L\in\{2,3,4\}\) ($H=40$ causes OOM error on one B200).  All runs train for at most 10 epochs. Fig.~\ref{fig:history-depth-sweep} shows prediction results from trained MLP probes. We see that \(H=20\) gives the best long-horizon accuracy across depths.  At this history length, \(L=4\) is numerically best, with position, orientation, and signed-turn errors of
\(0.745\), \(13.91^\circ\), and \(0.153\) turns.  The \(L=3\) model is within
\(2\%\) to \(4\%\) on all three metrics, while reducing the predictor from \(19.30\)M to \(14.57\)M parameters. We therefore use \(H=20,L=3\) as a balance between performance and training cost.

The real-latent controls explain why we do not use \(H=30\).  Their orientation error rises from at most \(0.66^\circ\) for \(H\leq20\) to about \(11.1^\circ\), and rotation error rises from at most \(0.052\) to \(0.446\) turns.  Thus, the degradation is present in the learned
representation before autoregressive error accumulates, even though the \(H=30\) runs attain lower training objectives.

\begin{figure*}[!htbp]
    \centering
    \includegraphics[width=\textwidth]{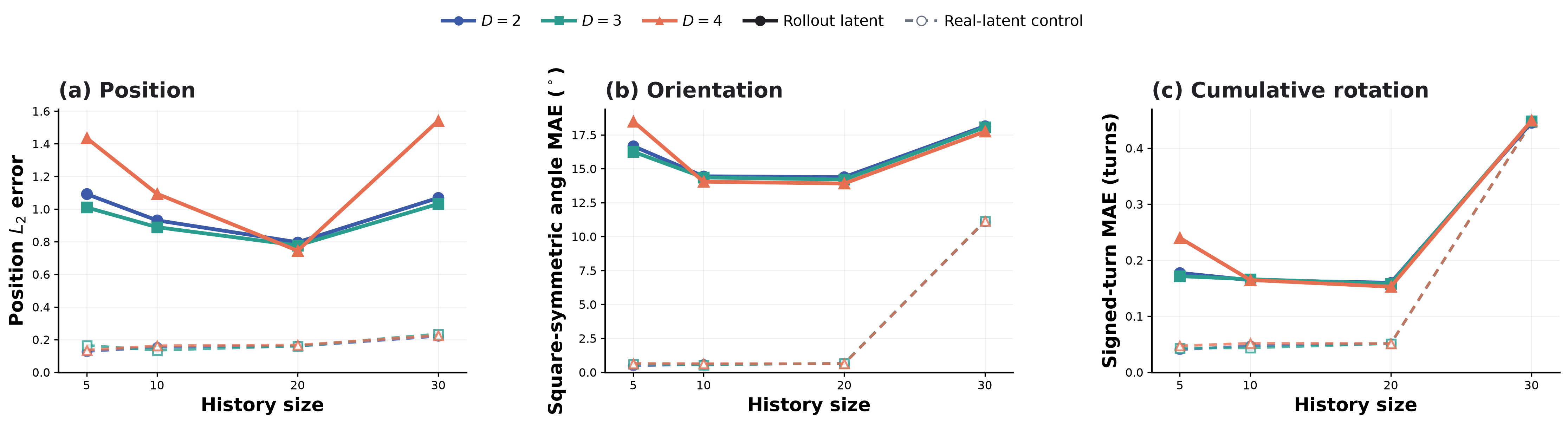}
    \caption{History-length and predictor-depth ablation on 2D quare
    dataset at a common 32-step rollout horizon. Solid
    lines use predicted rollout latents; dashed lines use real encoder latents
    and isolate representation and probe error.  Color and marker shape denote
    predictor depth.  Lower is better.}
    \label{fig:history-depth-sweep}
\end{figure*}

\subsubsection{Approach Ball: history and training rollout horizon}
\label{app:approach-ball-horizon-sigreg}

In this case we jointly vary $(H,K)\in\{(20,5),(20,10),(30,10),(30,20)\}$ and
$\lambda_{\mathrm{SIG}}\in\{0.18,0.36,0.72\}$. For each model, an MLP probe predicts the ball state $(x,y,z,v_x,v_y,v_z)$.  Within each $(H,K)$ pair, we select $\lambda_{\mathrm{SIG}}$ by probe-validation NMSE. Table~\ref{tab:approach-ball-horizon-sigreg} and
Fig.~\ref{fig:approach-ball-ablation} show that $(H,K,\lambda_{\mathrm{SIG}})=(20,5,0.18)$ gives the lowest probe-validation NMSE, direct-probe test NMSE, and 20-step rollout NMSE. Its 32-step rollout NMSE is $0.185$, within $1.3\%$ of the best
value of $0.182$ from $(20,10,0.18)$, while using 61.3 rather than 73.5 GiB of peak host RAM. The
best $H=30$ variants are both less accurate and more expensive.  We therefore
use the task-specific setting $H=20,K=5$, and
$\lambda_{\mathrm{SIG}}=0.18$ for Approach Ball.  The sweep uses one seed,
and the resource values describe training-time host memory.

\begin{table*}[!htbp]
    \centering
    \scriptsize
    \setlength{\tabcolsep}{5pt}
    \begin{tabular}{@{}ccc ccc rr@{}}
        \toprule
        \multicolumn{3}{c}{Training configuration} &
        \multicolumn{3}{c}{Normalized MSE $\downarrow$} &
        \multicolumn{2}{c}{Training resource} \\
        \cmidrule(lr){1-3}\cmidrule(lr){4-6}\cmidrule(l){7-8}
        $H$ & $K$ & $\lambda_{\mathrm{SIG}}$ &
        Probe val. & Probe test & Rollout@32 &
        \shortstack{Decoder frames\\per batch} &
        \shortstack{Peak host\\RAM (GiB)} \\
        \midrule
        \textbf{20} & \textbf{5} & \textbf{0.18} & \textbf{0.078} &
        \textbf{0.110} & \textbf{0.185} & \textbf{800} & \textbf{61.3} \\
        20 & 5  & 0.36 & 0.118 & 0.148 & 0.232 & 800  & 69.4 \\
        20 & 5  & 0.72 & 0.192 & 0.225 & 0.363 & 800  & 67.2 \\
        \midrule
        20 & 10 & 0.18 & 0.080 & 0.114 & \underline{0.182} & 960 & 73.5 \\
        20 & 10 & 0.36 & 0.092 & 0.125 & 0.198 & 960 & 80.4 \\
        20 & 10 & 0.72 & 0.110 & 0.143 & 0.237 & 960 & 66.4 \\
        \midrule
        30 & 10 & 0.18 & 0.137 & 0.166 & 0.347 & 1280 & 81.9 \\
        30 & 10 & 0.36 & 0.135 & 0.167 & 0.236 & 1280 & 75.5 \\
        30 & 10 & 0.72 & 0.152 & 0.185 & 0.263 & 1280 & 87.6 \\
        \midrule
        30 & 20 & 0.18 & 0.316 & 0.325 & 0.392 & 1600 & 111.7 \\
        30 & 20 & 0.36 & 0.321 & 0.330 & 0.401 & 1600 & 113.8 \\
        30 & 20 & 0.72 & 0.150 & 0.182 & 0.260 & 1600 & 116.3 \\
        \bottomrule
    \end{tabular}
    \caption{Approach Ball temporal ablation.}
    \label{tab:approach-ball-horizon-sigreg}
\end{table*}

\begin{figure*}[!htbp]
    \centering
    \includegraphics[width=0.8\textwidth]{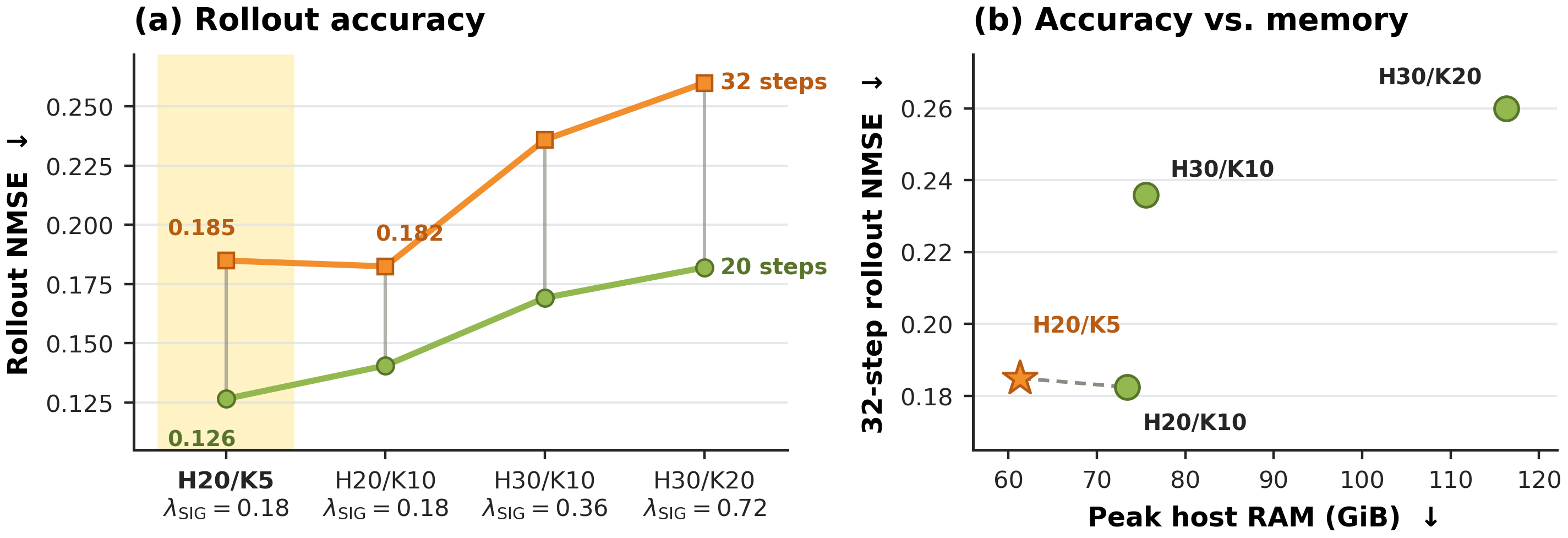}
    \caption{Approach Ball accuracy and resource tradeoffs after selecting
    $\lambda_{\mathrm{SIG}}$ by probe-validation NMSE within each $(H,K)$
    pair.  (a) Rollout NMSE at horizons 20 and 32; (b) peak host RAM and
    decoder frames per microbatch; and (c) the SIGReg sweep for $H=20,K=5$.
    The highlighted configuration is selected.  Lower is better.}
    \label{fig:approach-ball-ablation}
\end{figure*}

\subsection{Rollout discount}
\label{app:gamma-decay-sweep}

We take Arm Catcher Ball dataset and sweep $\gamma\in\{0.90,0.95,0.98,1.00,1.05\}$.  All runs are trained to 20 epochs.  We freeze the epoch-20 representation and evaluate an MLP state probe on $\mathfrak{h}_{44}$ predicted latents. Table~\ref{tab:gamma-decay-sweep} shows that $\gamma=0.95$ gives the lowest joint NMSE and velocity error at horizon 44.  Its position error is $0.122$, close to the best value of $0.118$ at $\gamma=1.00$.  We therefore use
$\gamma=0.95$ as the default across datasets. 

\begin{table}[!htbp]
    \centering
    \small
    \setlength{\tabcolsep}{4pt}
    \caption{Arm Catcher Ball rollout-discount ablation at horizon 44. }
    \label{tab:gamma-decay-sweep}
    \begin{tabular}{@{}cccc@{}}
        \toprule
        $\gamma$ & Joint NMSE $\downarrow$ & Position $L_2$ $\downarrow$ &
        Velocity $L_2$ $\downarrow$ \\
        \midrule
        0.90 & 0.753 & 0.134 & 0.398 \\
        \textbf{0.95} & \textbf{0.600} & 0.122 & \textbf{0.346} \\
        0.98 & 0.683 & 0.138 & 0.351 \\
        1.00 & 0.610 & \textbf{0.118} & 0.354 \\
        1.05 & 0.730 & 0.145 & 0.377 \\
        \bottomrule
    \end{tabular}
\end{table}

\subsection{Counterfactual gravity conditioning}
\label{app:gravity-cf}

In this section, we want to test whether the model actually uses the input gravity. We conduct experiments by supplying wrong gravity values in the action, while holding other coordinates fixed. We again train and evaluate on the 2D square dataset. For each \(g_{\mathrm{true}}\in\{2,4,6\}\), we use the same 200 episodes under
\(g_{\mathrm{in}}\in\{0,2,4,6,8\}\). The full-state probe MSE is normalized
by the training-set variance of each target coordinate. Let
\((\mathfrak{h}_1,\ldots,\mathfrak{h}_7)=(1,3,5,9,20,32,44)\) denote the
evaluated forecast steps.
We summarize rollout error using the horizon-normalized area under the NMSE
curve, computed by the trapezoidal rule:
\[
    \mathrm{AUC}(g_{\mathrm{in}})
    =
    \frac{1}{\mathfrak{h}_7-\mathfrak{h}_1}
    \sum_{i=1}^{6}\frac{\mathfrak{h}_{i+1}-\mathfrak{h}_i}{2}
    \left[
        \mathrm{NMSE}_{\mathfrak{h}_i}(g_{\mathrm{in}})
        +\mathrm{NMSE}_{\mathfrak{h}_{i+1}}(g_{\mathrm{in}})
    \right].
\]
We report the paired difference
\(\Delta\mathrm{AUC}
=\mathrm{AUC}(g_{\mathrm{in}})
-\mathrm{AUC}(g_{\mathrm{true}})\); positive values indicate that supplying
the mismatched gravity increases rollout error.

\begin{figure*}[!htbp]
    \centering
    \includegraphics[width=\textwidth]{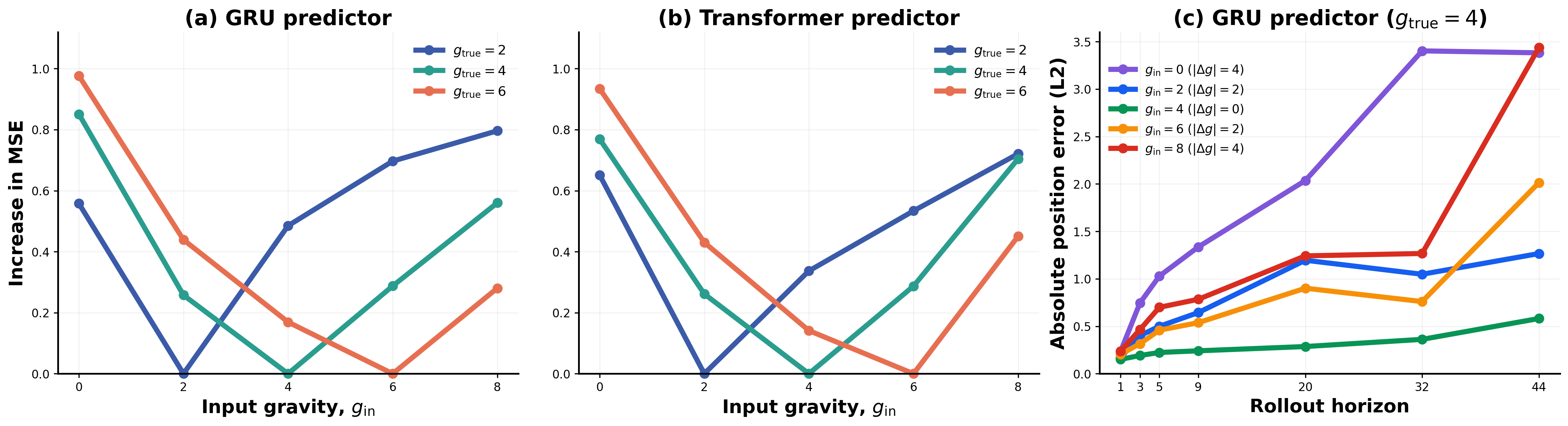}
    \caption{\textbf{Counterfactual gravity conditioning} Only the gravity supplied through the action channel is changed. Panels (a) and (b) report the increase in normalized full-state probe NMSE AUC for the GRU and Transformer predictors. Panel (c) reports the mean GRU position \(L_2\) error in the \(x\)-\(z\) plane across rollout horizons for \(g_{\mathrm{true}}=4\).}
    \label{fig:gravity-cf}
\end{figure*}
\paragraph{Results} From the figure we see that for both predictors, the error is minimized when \(g_{\mathrm{in}}=g_{\mathrm{true}}\), while mismatched inputs increase the rollout error. Larger gravity mismatches generally produce larger increases, although the response is not symmetric around the true value. At \(g_{\mathrm{true}}=4\), correct conditioning also gives the lowest GRU position error at every reported horizon. These interventions show that the predictor does use the supplied gravity when forecasting dynamics.

\subsection{Where does GRU advantage come from}
\label{app:transformer-gru-predictor}

In this section we investigate where the jointly trained GRU advantage comes from: the recurrent predictor itself, the representation it shapes, or a weakness in Transformer
conditioning? To this end we conduct three controlled studies. Matching Transformer action conditioning and initialization narrows, but does not close the gap. Under encoder--predictor crossover, the advantage follows the source representation. Reducing history has little source-specific effect over the first five forecasts, but the GRU-trained representation is less sensitive to short context once recursive errors accumulate over a 44-step rollout. Together, these results identify the representation learned via GRU predictor during joint training as the main source of the gain. Detailed results are as follows.

All comparisons use the same seed and the same 5,000 held-out square episodes, with 200 episodes at each of 25 gravity values from \(-2\) to \(10\). We first average across target dimensions, then across horizons within each episode, and finally across episodes. Subtracting the error obtained from the real encoder latent controls for source-specific probe readability. These intervals measure evaluation-episode variability; all world-model
and predictor training uses one seed.

\subsubsection{Can Transformer conditioning close the joint-training gap?}
\label{app:transformer-conditioning}

The first experiment asks whether the joint-training gap comes from how actions are supplied to the Transformer or from the initialization of its conditioning layers. For GRU we concatenate each latent with its action embedding at the recurrent input. For the Transformer, we test three alternatives: concatenating each latent state with its action once before the first block, reintroducing the action in the attention and MLP paths of every block, or using the action to modulate every block through AdaLN. For AdaLN, we also initialize the conditioning gate to \(0\), \(0.1\), or \(1\), giving five Transformer variants in total. All variants use \(H=20\), \(K=5\), \(\gamma=0.95\), SIGReg coefficient \(0.72\), batch size 64, 20 epochs. More details are in Table~\ref{tab:transformer-gru-conditioning-architectures}

\begin{table*}[!htbp]
    \centering
    \scriptsize
    \setlength{\tabcolsep}{3pt}
    \caption{Predictor architectures in the action conditioning study.}
    \label{tab:transformer-gru-conditioning-architectures}
    \begin{tabular}{@{}lllrrrr@{}}
        \toprule
        Method & Action conditioning & Initialization & Depth & Width & MLP width & Parameters \\
        \midrule
        GRU & Action concatenated at recurrent input & residual 0.1 & 3 & 512 & -- & 14.57M \\
        AdaLN-0 & AdaLN in every block & gate 0 & 6 & 256 & 2048 & 14.98M \\
        AdaLN-0.1 & AdaLN in every block & gate 0.1 & 6 & 256 & 2048 & 14.98M \\
        AdaLN-1 & AdaLN in every block & gate 1 & 6 & 256 & 2048 & 14.98M \\
        Input concat. & One pre-stack action fusion & residual 0.1 & 6 & 256 & 2776 & 14.98M \\
        Per-block concat. & Action fusion in attention and MLP paths & residual 0.1 & 6 & 256 & 2304 & 14.97M \\
        \bottomrule
    \end{tabular}
\end{table*}

\begin{figure*}[!htbp]
    \centering
    \includegraphics[width=\textwidth]{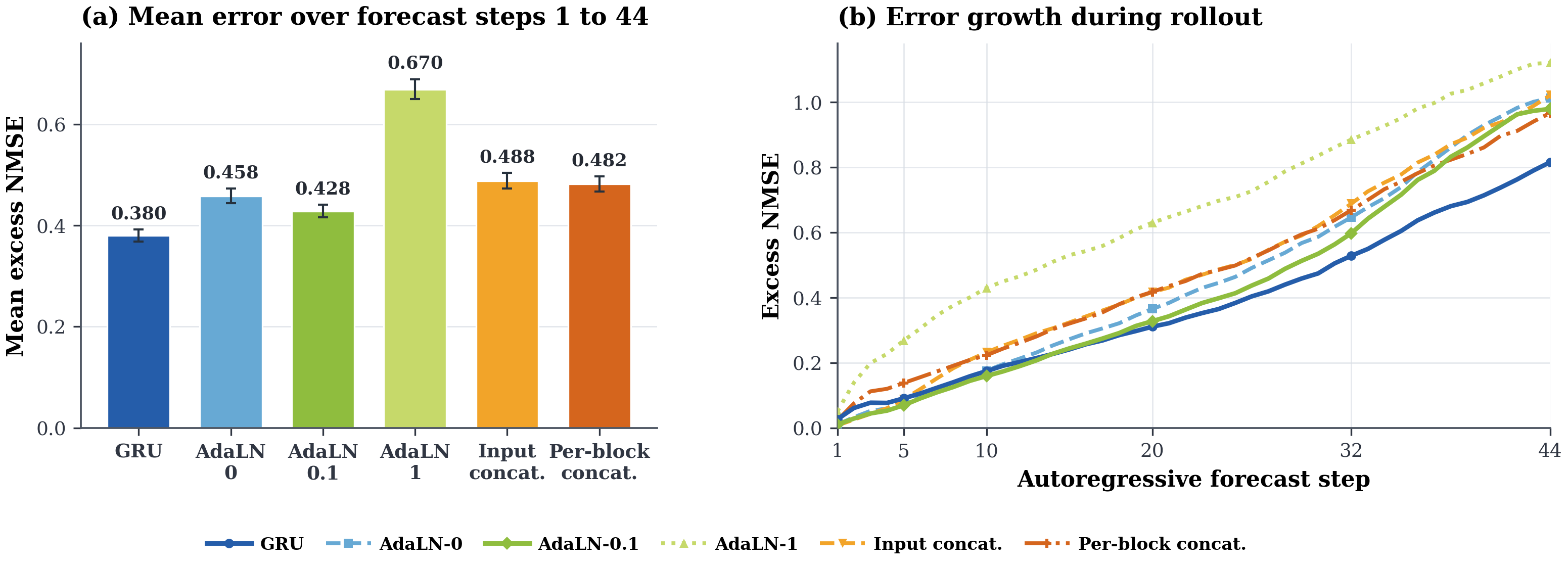}
    \caption{Performance comparison on held-out test set. (a) Mean excess normalized target MSE over forecast steps 1--44; error bars are 95\% episode-bootstrap intervals. (b) Excess error by autoregressive forecast step. All models use the same episode IDs and gravity assignments. Lower is better.}
    \label{fig:transformer-conditioning-results}
\end{figure*}

\paragraph{Results} Fig.~\ref{fig:transformer-conditioning-results} shows the results. Initializing the AdaLN gate to \(0.1\) lowers the Transformer endpoint from
\CondAdaZeroPrimary\ to \CondAdaPointOnePrimary\ and closes \CondBestClosurePercent\% of the gap to the GRU. It remains worse than GRU performance of \CondGruPrimary, however, with a paired difference of \CondBestDelta\. A gate of \(1.0\) degrades the rollout, while input and per-block action concatenation do not improve the
aggregate endpoint over AdaLN-0. The best Transformer is worse than the GRU at
\CondBestWorseGravityCount\ of 25 gravity values, and per-block concatenation
is worse at \CondBlockWorseGravityCount\ of 25. Some variants improve
individual physical quantities, so the conclusion applies to aggregate
rollout error: neither direct action injection nor gate initialization closes
the jointly trained GRU gap.

\subsubsection{Encoder--predictor crossover: full results}
\label{app:frozen-encoder-crossover}

For the second experiment we freeze the encoder from the jointly trained GRU and Transformer with AdaLN, then fit a fresh GRU and Transformer predictor to each latent source. The training setups and hyperparameters are the same except for the predictor. In Section~\ref{sec:dynamics-advantage} we report the overall crossover result and the main interpretation. Here we provide the full metric breakdown in Table~\ref{tab:frozen-encoder-crossover-results}. The representation learned jointly with the GRU gives lower overall physical-state error. The new Transformer modestly reduces error on both sources, by \CrossPredGruSource\ on the GRU-trained representation and \CrossPredTfSource\ on the Transformer-trained representation, yet the GRU-trained representation remains better. This result shows that encoder trained with the GRU predictor produces latents that preserve dynamics-relevant information that either predictor can use. The lower Euclidean latent MSE of the Transformer-trained representation does not translate into better physical-state prediction, especially for velocity. This contrast confirms that preserving dynamics-relevant state is more informative than raw latent distance.

\begin{table*}[!htbp]
    \centering
    \scriptsize
    \setlength{\tabcolsep}{4.2pt}
    \caption{Numerical results for the frozen-representation crossover.  The
    primary endpoint and latent MSE are excess errors averaged over horizons 1
    to 44; physical errors use predicted latents at horizon 44.}
    \label{tab:frozen-encoder-crossover-results}
    \begin{tabular}{@{}llrrrrrr@{}}
        \toprule
        Frozen source & Fresh predictor & Primary $\downarrow$ & Latent MSE $\downarrow$ & Position $\downarrow$ & Velocity $\downarrow$ & Orientation ($^\circ$) $\downarrow$ & $\omega$ $\downarrow$ \\
        \midrule
        GRU-trained & GRU & 1.376 & 2.247 & 3.004 & 3.803 & 22.88 & \textbf{2.116} \\
        GRU-trained & Transformer & \textbf{1.269} & 2.511 & 3.036 & \textbf{3.769} & \textbf{21.77} & 2.139 \\
        Transformer-trained & GRU & 1.555 & \textbf{1.638} & \textbf{2.828} & 5.343 & 23.15 & 2.282 \\
        Transformer-trained & Transformer & 1.453 & 1.732 & 2.854 & 4.849 & 22.57 & 2.401 \\
        \bottomrule
    \end{tabular}
\end{table*}

\subsubsection{History dependence of the frozen representations}
\label{app:history-markov-sufficiency}

The crossover shows that the advantage follows better representation. We next ask whether the GRU-trained representation is less dependent on long context. For each frozen source, we fit new GRU and Transformer predictors with \(m\in\{2,4,8,20\}\) context frames, where \(m\) is the number of most recent latents available to the predictor. Each model predicts the same five target frames and is evaluated through 44 autoregressive steps on held-out episodes. The primary comparison shortens the context from \(m=20\) (H20) to \(m=4\)
(H4). Following Equation~\ref{eq:method-excess-endpoint}, we measure how much the
excess physical-state NMSE changes when history is shortened relative to H20.
A positive source difference means that the Transformer-trained representation loses more
from shorter context. Intervals use 10,000 paired episode bootstrap resamples.

\begin{figure*}[!htbp]
    \centering
    \includegraphics[width=\textwidth]{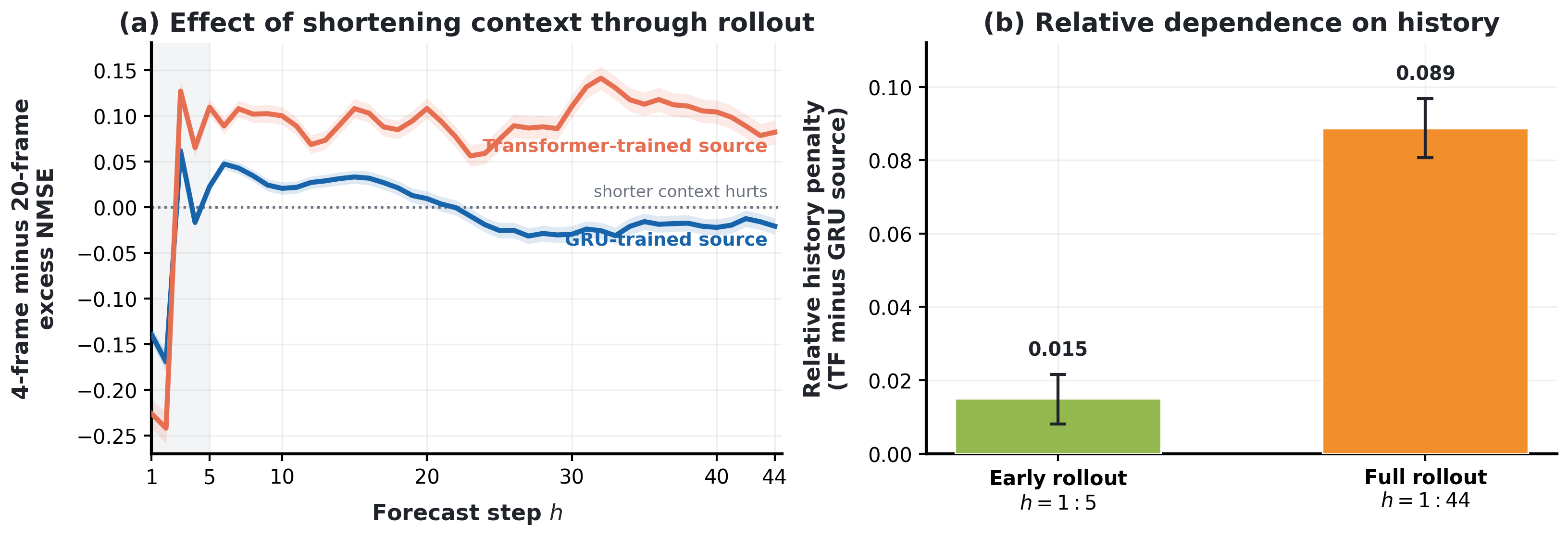}
    \caption{History dependence of the frozen representations. (a) Change in
    excess physical-state NMSE at each forecast step when context is shortened
    from H20 to H4, averaged over newly trained GRU and Transformer predictors.
    Positive values mean that shorter context increases error. The gray region
    marks the first five forecasts; bands show 95\% confidence intervals over
    paired episode differences. (b) Transformer-source minus GRU-source history
    penalty over the first five forecasts and the full 44-step rollout.
    Positive values mean that the Transformer-trained representation depends
    more on long context. }
    \label{fig:history-markov-sufficiency}
\end{figure*}

\paragraph{Results} Fig.~\ref{fig:history-markov-sufficiency} shows the results. (a) plots the change in error at each forecast step when the predictor receives four rather than 20 context frames. Negative values favor H4; positive values mean that shortening the context is
harmful. We see that both representations benefit from H4 during the first two forecasts. From the third forecast onward, however, the Transformer-trained source remains above zero, while the GRU-trained source stays closer to zero and becomes consistently negative after \(\mathfrak{h}=22\). The GRU-trained representation therefore retains its rollout accuracy with substantially less context. Panel (b) summarizes this divergence. The Transformer-source minus GRU-source history penalty is only \(0.015\) over the first five forecasts, but grows to
\(0.089\) over the full 44-step rollout. The gap is therefore small at the start
of prediction and becomes pronounced only after predicted latents are fed back
recursively. This long-rollout robustness provides a concrete explanation for
the stronger dynamics performance of the GRU-trained representation.

\subsection{Performance on conventional control tasks}
\label{sec:conventional-tasks}

\begin{figure}[!htbp]
  \centering
  \includegraphics[width=0.5\linewidth]{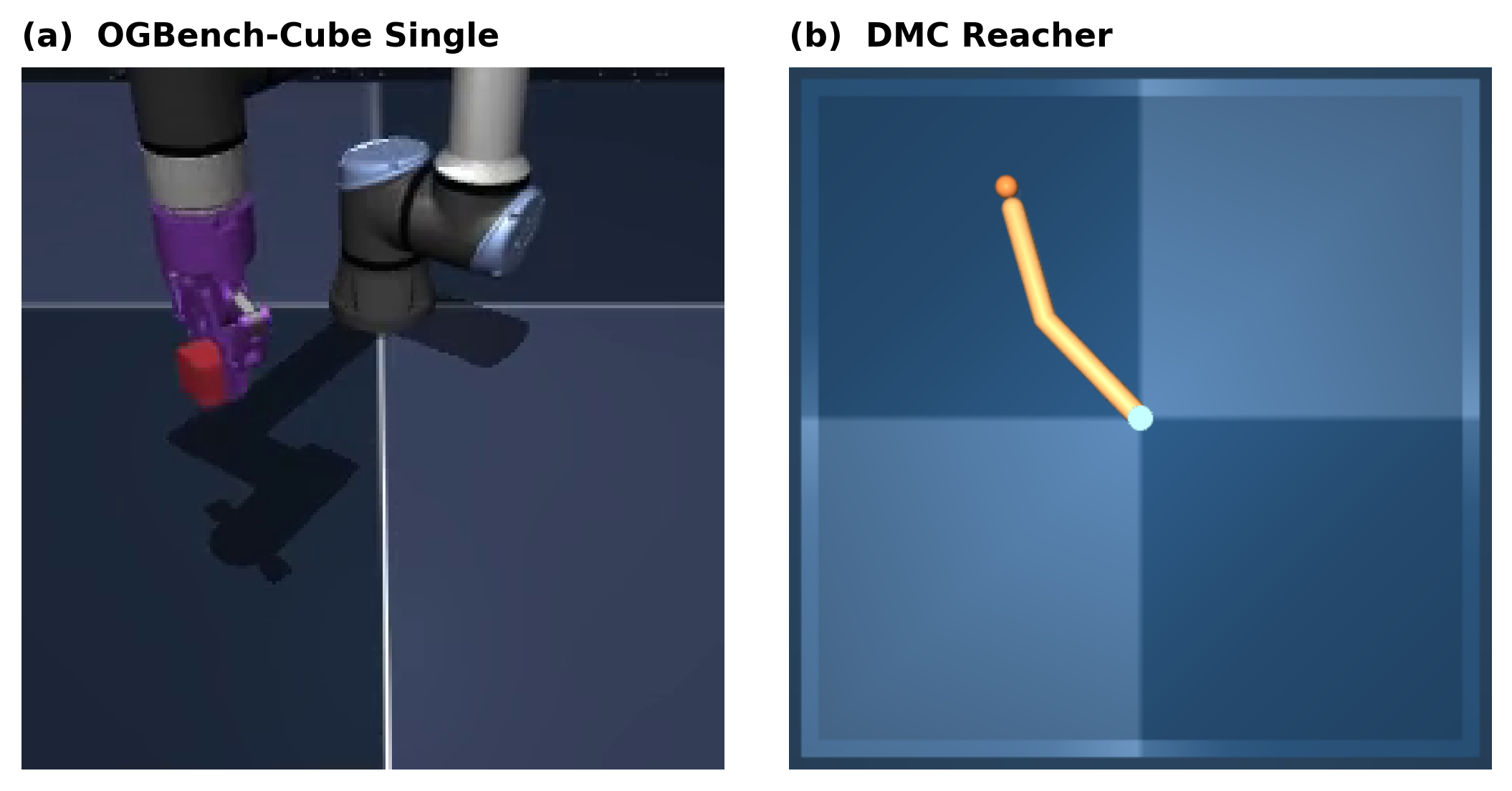}
  \caption{Representative illustration of the datasets (a) OGBench-Cube Single and
  (b) DMC Reacher.}
  \label{fig:conventional-tasks}
\end{figure}

The dynamics experiments above favor the GRU predictor over Transformer. We next test whether this choice degrades performance on conventional control tasks. We compare Transformer, GRU, and SSM predictors on OGBench-Cube Single and DMC Reacher (Figure~\ref{fig:conventional-tasks}), changing only the predictor architecture. All models use the same ViT-tiny encoder, H10/K5 objective, five-step action blocks and 20-epoch training schedule. We evaluate the same latent-space CEM planner on 50 fixed held-out cases per model. 

\begin{table*}[!htbp]
  \centering
  \small
  \caption{Predictor comparison on conventional control tasks. Success is measured over 50 fixed held-out cases, with episode-bootstrap 95\% confidence intervals. Latent metrics use all 127,000 held-out H10/K5 clips. }
  \label{tab:conventional-tasks}
  \resizebox{\textwidth}{!}{%
  \begin{tabular}{llrrrrr}
    \toprule
    Dataset & Predictor & Success $\uparrow$
      & K5 latent MSE $\downarrow$ & Mean K1--K5 MSE $\downarrow$ & Planning latency (s) $\downarrow$ \\
    \midrule
    OGBench-Cube & Transformer & 68\% (54--80) & 0.00590 & 0.00445 & 0.195 \\
    & GRU         & \textbf{70\% (56--82)} & 0.00561 & 0.00430 & \textbf{0.111} \\
    & SSM         & 68\% (54--80) & \textbf{0.00497} & \textbf{0.00378} & 0.279 \\
    \midrule
    DMC Reacher & Transformer  & \textbf{100\% (100--100)} & 0.00226 & 0.00182 & 0.568 \\
    & GRU          & 98\% (94--100) & 0.00323 & 0.00289 & \textbf{0.316} \\
    & SSM          & 98\% (94--100) & \textbf{0.00159} & \textbf{0.00132} & 1.115 \\
    \bottomrule
  \end{tabular}%
  }
\end{table*}

We observe that SG-JEPA (GRU/SSM) predictors preserves downstream control success on both tasks(Table~\ref{tab:conventional-tasks}). It also has the lowest planning latency and the highest effective rank on both datasets. Thus, the conventional benchmarks show no material loss in control success from using a GRU/SSM predictor. Together with its advantage on the dynamics benchmarks, this result supports our use of the GRU as the default predictor.

\section{2D datasets full results}
\label{sec:2d_full_results}

\subsection{Gravity-conditioned rollouts}
In Section~\ref{sec:main_results} we present overall comparison on 2D triangle and square datasets. In this section we elaborate on the detailed breakdown, first by per-held-out-test-gravity evaluation metrics, as in Fig.~\ref{fig:2d-per-gravity-results}; then results by rollout horizon, as in Table~\ref{tab:official-2d-shape-rollout-short-long}.

\begin{figure}[!htbp]
    \centering
    \includegraphics[width=0.9\linewidth]{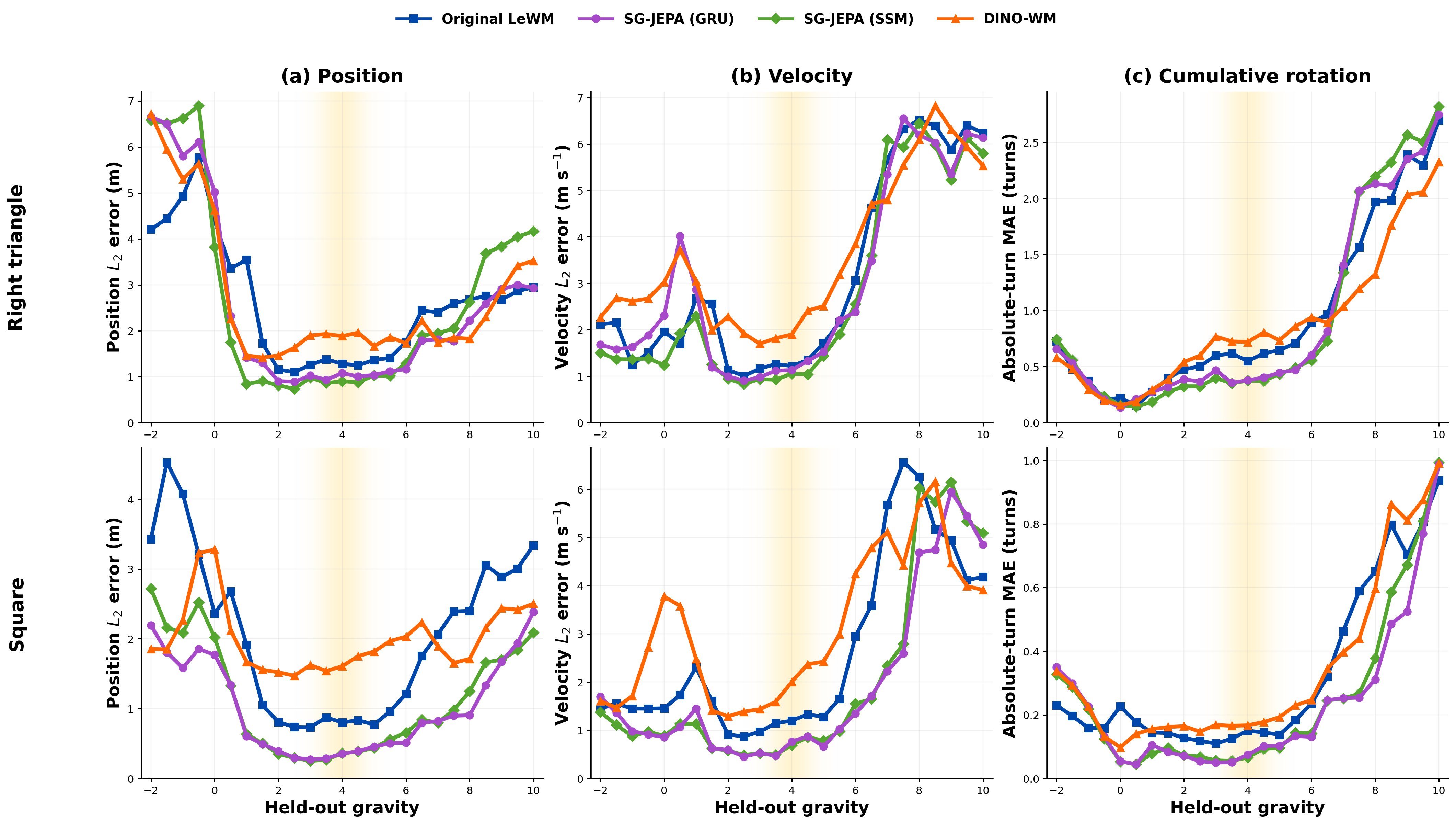}
    \caption{\textbf{Per-gravity evaluation at
    \(\mathfrak{h}_{44}\).} Rows show right-triangle and square
    episodes; columns report planar position \(L_2\) error, planar
    velocity \(L_2\) error and cumulative-rotation MAE. The fading
    yellow background follows the relative training-gravity density
    \(g\sim N(4,0.5^2)\). }
    \label{fig:2d-per-gravity-results}
\end{figure}

\begin{table*}[!htbp]
    \centering
    \small
    \setlength{\tabcolsep}{5.2pt}
    \renewcommand{\arraystretch}{1.10}
    \begin{tabular}{@{}llrrrr@{}}
        \toprule
        Metric & Method
        & \(\mathfrak{h}_{5}\) & \(\mathfrak{h}_{16}\)
        & \(\mathfrak{h}_{32}\) & \(\mathfrak{h}_{44}\) \\
        \midrule
        \multicolumn{6}{@{}l}{\textit{Right triangle}} \\
        Position \(L_2\) (m) & Original LeWM
        & 1.105 & \underline{1.943} & \textbf{2.325} & \underline{2.625} \\
        & SG-JEPA (GRU)
        & \underline{0.968} & 2.167 & 2.486 & \textbf{2.529} \\
        & SG-JEPA (SSM)
        & 1.251 & 2.455 & 2.628 & 2.665 \\
        & DINO-WM
        & \textbf{0.532} & \textbf{1.510} & \underline{2.329} & 2.764 \\
        \addlinespace[2pt]
        Velocity \(L_2\) (m/s) & Original LeWM
        & 2.324 & \textbf{1.598} & \textbf{2.499} & 3.121 \\
        & SG-JEPA (GRU)
        & \underline{2.258} & 1.920 & 2.837 & \underline{3.001} \\
        & SG-JEPA (SSM)
        & 2.343 & \underline{1.667} & \underline{2.592} & \textbf{2.765} \\
        & DINO-WM
        & \textbf{1.897} & 2.420 & 3.275 & 3.574 \\
        \addlinespace[2pt]
        Cumulative rotation (turns) & Original LeWM
        & 0.065 & 0.269 & 0.640 & 0.946 \\
        & SG-JEPA (GRU)
        & \underline{0.061} & 0.268 & \underline{0.617} & \underline{0.904} \\
        & SG-JEPA (SSM)
        & \textbf{0.059} & \underline{0.264} & 0.618 & 0.911 \\
        & DINO-WM
        & 0.062 & \textbf{0.255} & \textbf{0.609} & \textbf{0.875} \\
        \addlinespace[3pt]
        \midrule
        \multicolumn{6}{@{}l}{\textit{Square}} \\
        Position \(L_2\) (m) & Original LeWM
        & 0.930 & 1.934 & 2.042 & 2.074 \\
        & SG-JEPA (GRU)
        & \textbf{0.374} & \textbf{0.658} & \textbf{0.862} & \textbf{1.034} \\
        & SG-JEPA (SSM)
        & \underline{0.395} & \underline{0.950} & \underline{1.094} & \underline{1.148} \\
        & DINO-WM
        & 0.454 & 1.117 & 1.570 & 2.006 \\
        \addlinespace[2pt]
        Velocity \(L_2\) (m/s) & Original LeWM
        & 1.791 & 1.545 & 2.735 & 2.631 \\
        & SG-JEPA (GRU)
        & \underline{1.158} & \textbf{0.896} & \textbf{1.455} & \textbf{1.915} \\
        & SG-JEPA (SSM)
        & \textbf{1.051} & \underline{1.007} & \underline{1.661} & \underline{2.007} \\
        & DINO-WM
        & 1.793 & 1.921 & 3.011 & 3.082 \\
        \addlinespace[2pt]
        Cumulative rotation (turns) & Original LeWM
        & 0.029 & 0.104 & 0.215 & 0.321 \\
        & SG-JEPA (GRU)
        & \underline{0.018} & \underline{0.066} & \textbf{0.143} & \textbf{0.236} \\
        & SG-JEPA (SSM)
        & \textbf{0.015} & \textbf{0.061} & \underline{0.146} & \underline{0.249} \\
        & DINO-WM
        & 0.027 & 0.104 & 0.223 & 0.341 \\
        \bottomrule
    \end{tabular}
    \caption{\textbf{Rollout results}.  Forecast steps
    \(\mathfrak{h}_{5},\mathfrak{h}_{16},\mathfrak{h}_{32}\), and
    \(\mathfrak{h}_{44}\) correspond to 0.3125, 1.0, 2.0, and 2.75 seconds.  \textbf{bold} and \underline{underlining} mark the best and second-best value within each shape, metric and step. }
    \label{tab:official-2d-shape-rollout-short-long}
\end{table*}

\paragraph{Results} SG-JEPA gives the strongest overall performance, with its clearest advantage appearing as rollout errors accumulate.  On the square, the GRU and SSM variants lead every metric at every reported step.  SSM is slightly stronger at the shortest steps, while GRU is more accurate over longer recursive rollouts.  Figure~\ref{fig:2d-per-gravity-results} shows that these gains extend across most of the held-out gravity grid, rather than being confined to the
center of the training distribution.  The few reversals occur mainly for velocity at the highest gravity values.

The triangle is more competitive at short and intermediate steps, where the baselines lead several translation entries.  Their advantage does not persist: at \(\mathfrak{h}_{44}\), SG-JEPA (GRU) has the lowest position error and SG-JEPA (SSM) has the lowest velocity error.  DINO-WM retains the lowest cumulative-rotation error on the triangle, the only metric not led by
SG-JEPA at the longest rollout.  Across the two shapes, SG-JEPA therefore leads five of the six \(\mathfrak{h}_{44}\) shape-metric comparisons, supporting its advantage for long-horizon dynamics prediction.

\subsection{Shape generalization}
\label{app:house-shape-generalization}

In this separate ablation we study the model's ability of shape generalization. To do this we conduct the following experiment: We train SG-JEPA on both 2D right triangle and square datasets. And we ask whether it can generalize dynamics from these component shapes to the composite "house" shape (A "house" joins a right triangle and a square into one rigid body), which is unseen during training. We perform two tests, one trained with triangle + square ($T{+}S$), the other with triangle + square + pentagon ($T{+}S{+}P$).  We then compare the performance against SG-JEPA trained with only "house" dataset on downstream probe evaluation. Moreover we trained with both Tiny ViT ($D=192$) and Small ViT ($D=384$) encoders for these cases to rule out the possibility of model capacity limit when dealing with multiple datasets. All six world models follow the planar SG-JEPA recipe in
Appendix Section~\ref{app:model-details}.

\begin{table*}[!htbp]
    \centering
    \caption{House rollout error at $\mathfrak{h}_{44}$, averaged over the
    seven gravity values in the band $g\in[2.5,5.5]$.}
    \label{tab:house-shape-generalization}
    \small
    \setlength{\tabcolsep}{4pt}
    \begin{tabular}{llrrr}
        \toprule
        Encoder & World-model training data & Position $L_2$ (m) & Velocity $L_2$ (m/s) & Cumulative rotation (turns) \\
        \midrule
        Tiny  & $T{+}S$       & 2.681 & 2.846 & 0.821 \\
        Tiny  & $T{+}S{+}P$   & 2.733 & 2.792 & 0.832 \\
        Tiny  & House-trained & \textbf{1.723} & \textbf{1.444} & \textbf{0.193} \\
        \midrule
        Small & $T{+}S$       & 2.489 & 2.725 & 0.787 \\
        Small & $T{+}S{+}P$   & 2.556 & 2.762 & 0.833 \\
        Small & House-trained & \textbf{1.781} & \textbf{1.607} & \textbf{0.234} \\
        \bottomrule
    \end{tabular}
\end{table*}

\paragraph{Results} In Table~\ref{tab:house-shape-generalization} and Fig.~\ref{fig:house-shape-generalization} we present the results. Overall, SG-JEPA transfers a substantial part of the house's translational dynamics from its component shapes.  Position transfers most consistently, velocity is less stable, and rotation remains the hardest quantity to predict.  Direct house training is still better near the training distribution.  As shown in Table~\ref{tab:house-shape-generalization}, within $g\in[2.5,5.5]$, Small $T{+}S$ has $1.40\times$ the position error and $1.70\times$ the velocity error of the house-trained reference.  Adding pentagons brings no consistent improvement, so the observed transfer already
comes from the triangle and square components rather than from greater shape
diversity.

The main highlight is the OOD gravity result in
Figure~\ref{fig:house-shape-generalization}.  Across much of both gravity
tails, the $T{+}S$ model closely follows the house-trained reference on
position despite never seeing a house during world-model training.  With the
Small encoder, it matches or outperforms the house-trained model at several
extreme gravity values. 

\begin{figure*}[!htbp]
    \centering
    \includegraphics[width=\textwidth]{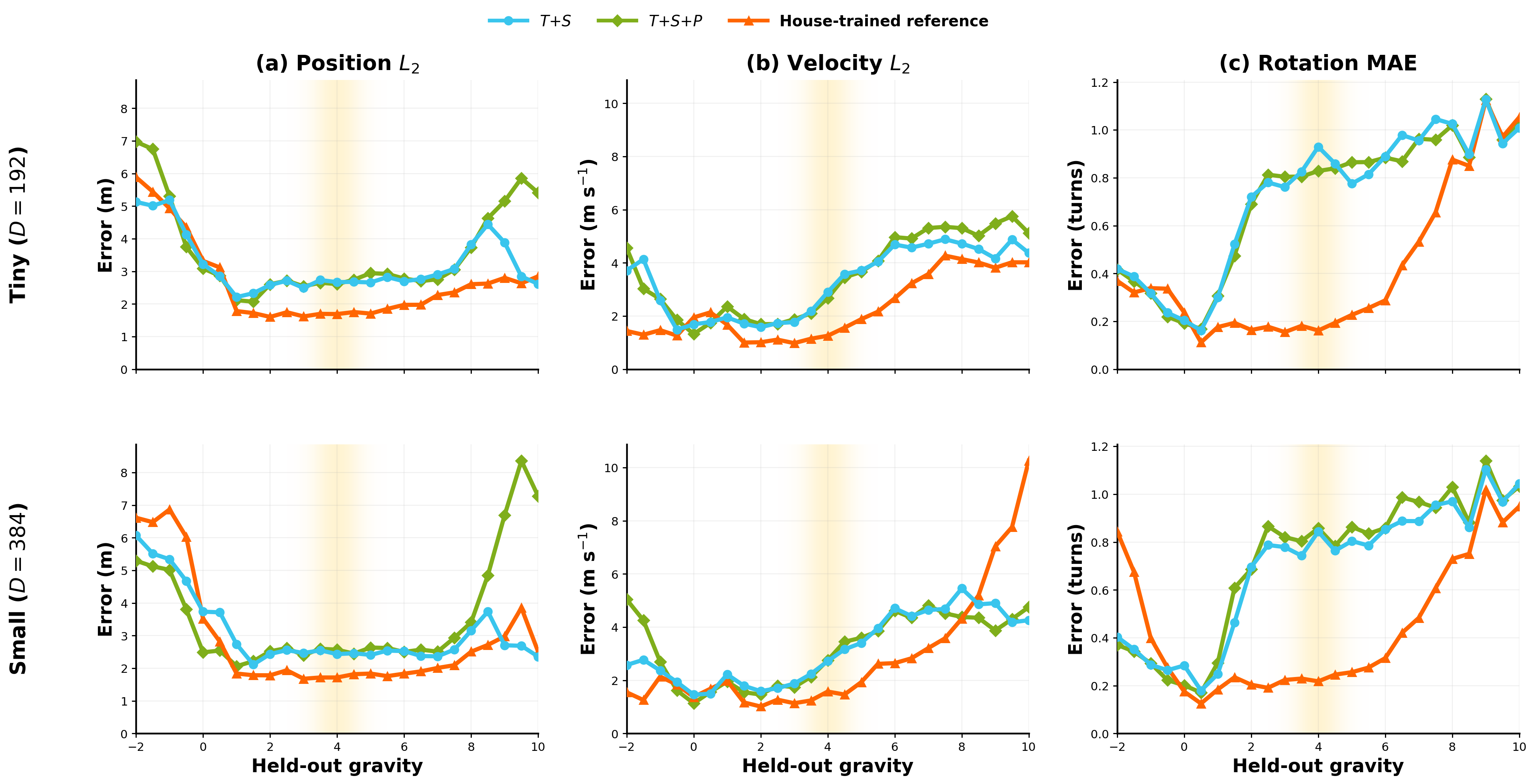}
    \caption{Per-gravity house rollout error at $\mathfrak{h}_{44}$.  Rows show
    the Tiny ($D=192$) and Small ($D=384$) encoders; columns show position,
    velocity, and cumulative-rotation error.}
    \label{fig:house-shape-generalization}
\end{figure*}

The true-latent control helps localize the remaining gap.  On Small $T{+}S$,
probing encoded ground-truth house frames gives error ratios of 0.90 for position, 1.23 for velocity, and 1.12 for cumulative rotation relative to the house-trained reference.  After autoregressive rollout, the ratios rise to 1.40, 1.70, and 3.36. Thus most of the rotation gap develops during recursive prediction, where small angular-velocity errors accumulate and the new contact geometry matters.

\section{3D datasets full results}
\subsection{Approach-Ball results}
\label{app:approach-ball-comparison}

In main text, Fig.~\ref{fig:main-3d}(a)(b) we report Approach-Ball position error over forecast steps and held-out gravity values.  Here we show the complete results with
velocity error and a compact summary of both physical quantities. Note that for this dataset we do not have ball rotation or friction, so the only physical quantities are position and velocity.  

\begin{table}[!htbp]
  \centering
  \footnotesize
  \setlength{\tabcolsep}{4pt}
  \caption{Approach-Ball physical-state prediction error on 1,600 held-out
  episodes.  The mean is taken over \(\mathfrak{h}_1\)--\(\mathfrak{h}_{44}\), and
  \(\mathfrak{h}_{44}\) is 2.75\,s at 16\,fps.  Errors are reported in
  physical units.  Lower is better; bold marks the best result.}
  \label{tab:approach-ball-aggregate}
  \begin{tabular}{@{}lcccc@{}}
    \toprule
    & \multicolumn{2}{c}{Position $L_2$ (m)} &
      \multicolumn{2}{c}{Velocity $L_2$ (m\,s$^{-1}$)} \\
    \cmidrule(lr){2-3}\cmidrule(l){4-5}
    Method & Mean & $\mathfrak{h}_{44}$ & Mean & $\mathfrak{h}_{44}$ \\
    \midrule
    Original LeWM    & 0.0986 & 0.1229 & 1.1361 & 1.1400 \\
    SG-JEPA (GRU)   & \textbf{0.0491} & \textbf{0.0764} & \textbf{0.5391} & 0.7829 \\
    SG-JEPA (SSM)   & 0.0495 & 0.0785 & 0.5664 & 0.8861 \\
    DINO-WM          & 0.0705 & 0.1152 & 0.5672 & \textbf{0.7411} \\
    \bottomrule
  \end{tabular}
\end{table}

\paragraph{Results} Position gives the clearest result.  The full state readout reinforces the finding in the main text: SG-JEPA (GRU) and SG-JEPA (SSM) have the two lowest position errors, both on average and at the final step, and reduce mean position error by roughly 30\% relative to DINO-WM.

Velocity does not show the same uniform advantage however.  GRU has a slightly lower
rollout-average error than DINO-WM, while SSM and DINO-WM are nearly tied. DINO-WM is more accurate at \(\mathfrak{h}_{44}\).  As shown in Figure~\ref{fig:approach-ball-velocity}(a), GRU or SSM is lower over much of the intermediate rollout, but DINO-WM closes the gap and becomes stronger near the end.

The gravity breakdown in Figure~\ref{fig:approach-ball-velocity}(b) also
separates the extrapolation regimes.  GRU is strongest across most of the
central and moderate-gravity range, DINO-WM at the high-gravity tail, and
Original LeWM at the few lowest values.  Taken together, these results show a
clear SG-JEPA advantage for position. For velocity, SG-JEPA is competitive with, but not uniformly better than, DINO-WM.  Each world model uses one training seed, so these episode averages do not measure variation across training runs.

\begin{figure*}[!htbp]
  \centering
  \includegraphics[width=0.9\textwidth]{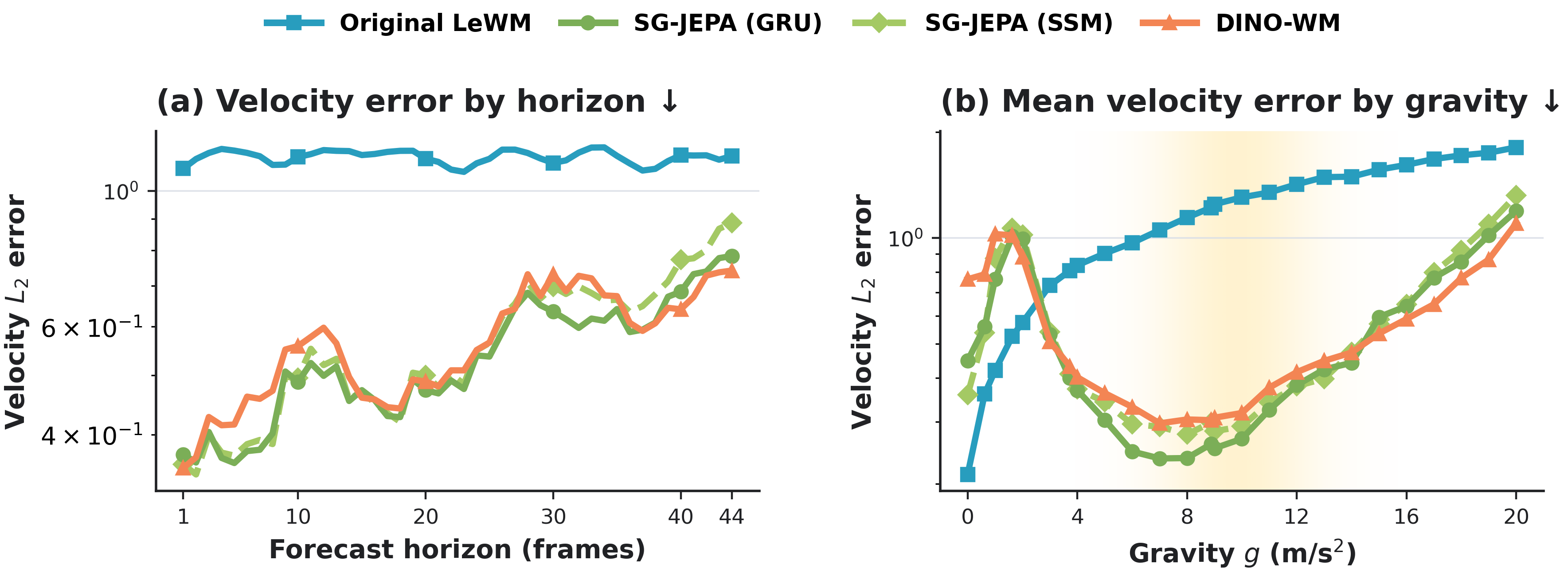}
  \caption{\textbf{Approach-Ball velocity prediction.}
  The evaluation and plotting setup matches the position results in
  Figure~\ref{fig:main-3d}(a,b).
  (a) Mean ball-velocity \(L_2\) error at each recursive forecast step.
  (b) Mean error over \(\mathfrak{h}_1\)--\(\mathfrak{h}_{44}\) at each of
  the 25 held-out gravity values.  The fading yellow
  background shows the training distribution
  \(g\sim\mathcal{N}(9.8,2.0^2)\).}
  \label{fig:approach-ball-velocity}
\end{figure*}

\subsection{Arm-Catcher-Ball control results}
\label{app:arm-catcher-ball-control}

In main text Figure~\ref{fig:main-3d}(c,d) we present the success rate comparison for Arm Catcher Ball for SG-JEPA (GRU) and DINO-WM. In this section we report additional results for SG-JEPA (SSM) and Original LeWM (Transformer predictor) for completeness. Diffusion policies are trained for each checkpoint respectively. At inference, the policy conditions on a 20-frame latent history, predicts 16 Cartesian actions, executes eight, and replans.  Sampling uses the complete 100-step cosine DDPM. Success requires the ball to enter the catcher and be latched before the episode ends.  We evaluate the same test set at 23 gravity values under five paired simulator-rollout seeds. 

\begin{figure*}[!htbp]
  \centering
  \includegraphics[width=0.9\textwidth]{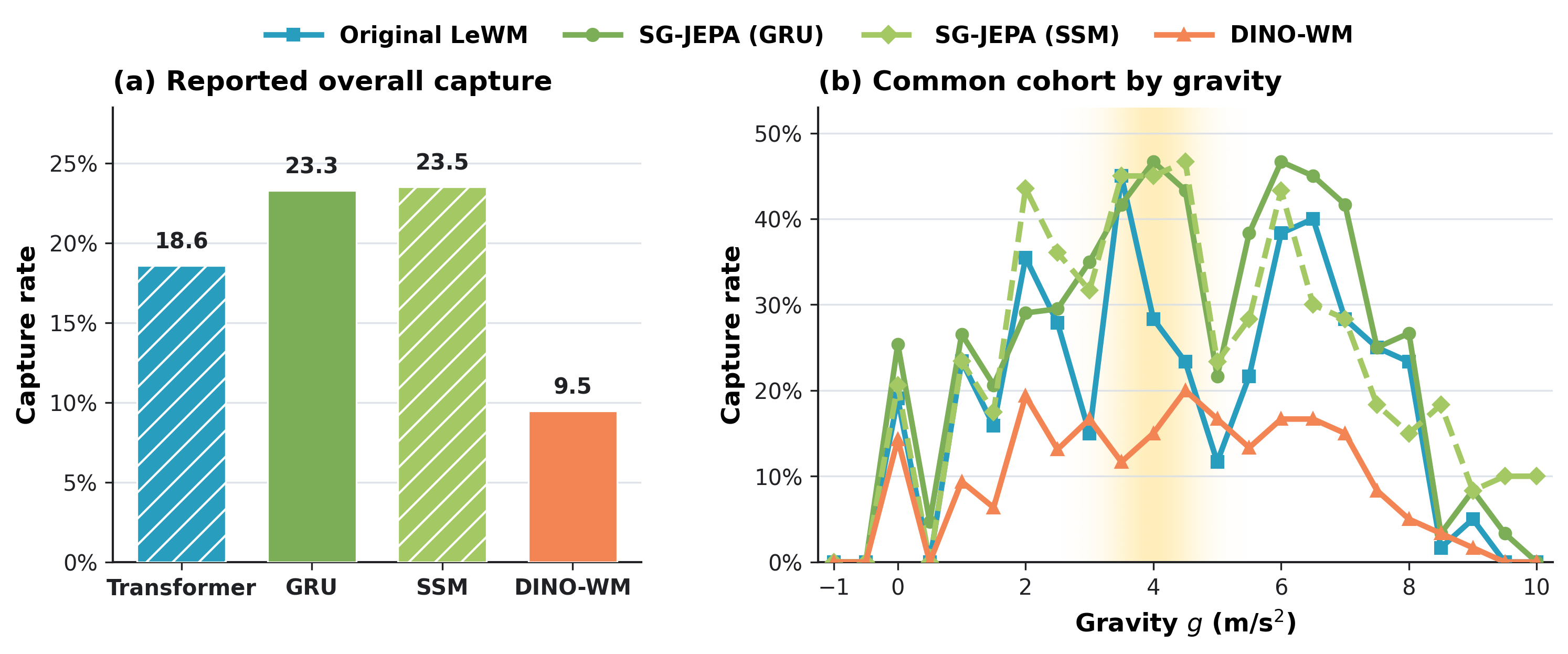}
  \caption{\textbf{Arm-Catcher-Ball predictor-family comparison.}
  (a) Overall capture rate. Results are averaged over 5 rollout executions. (b) Comparison of four methods by gravity. The fading yellow background shows the training distribution \(g\sim\mathcal{N}(4,0.5^2)\).}
  \label{fig:arm-catcher-ball-predictors}
\end{figure*}

\paragraph{Results} In Fig.~\ref{fig:arm-catcher-ball-predictors} we observe that SG-JEPA more than doubles the overall capture rate, improving on DINO-WM by $13.8\%$.  The gain is consistent across all five paired rollouts, ranging from $12.9\%$ to $14.3\%$.  It is also not confined to the center of the training distribution.  Fig.~\ref{fig:main-3d}(d) shows a positive mean advantage at every gravity, with a significant gap from \(g=0\) through \(g=9\).  At the training gravity, SG-JEPA reaches 42.3\% compared with 16.0\% for
DINO-WM.  Most remaining failures occur at the extreme gravity values or run to the
full horizon without a capture. We also compare Original LeWM and both SG-JEPA predictors. Original LeWM uses its one-step Transformer objective with \(\lambda_{\mathrm{SIG}}=0.09\), while the GRU and SSM variants use the five-step SG-JEPA objective with \(\lambda_{\mathrm{SIG}}=0.18\). Both SG-JEPA variants
remain ahead of the baselines overall.

Overall, Arm Catcher Ball is a phase-sensitive interception task: successful control requires the policy to preserve the ball’s velocity and contact phase, not merely its current position. As gravity increases, shorter bounce intervals amplify small timing errors and can shift the predicted trajectory to the wrong contact phase. SG-JEPA’s consistent advantage across a broad gravity range therefore indicates that its representation retains more control-relevant dynamical information under changing physics. Performance nevertheless degrades under negative and extreme gravity values, where the motion departs qualitatively from the training regime.

\subsection{Franka paddle-to-basket results}
\label{app:basket-results}

\begin{table*}[!htbp]
  \centering
  \small
  \setlength{\tabcolsep}{3.5pt}
  \caption{\textbf{Four-method comparison.}   Relaxed success records valid
  post-strike basket contact, while strict entry requires the ball to enter
  the basket.  ``Strict $\mid$ blade'' conditions strict entry on valid
  paddle-blade contact.  All values are percentages.}
  \label{tab:basket-overall}
  \begin{tabular}{lrrrr}
    \toprule
    Representation & Strict entry & Relaxed success & Blade contact & Strict $\mid$ blade \\
    \midrule
    SG-JEPA (GRU) & \textbf{30.36} & 64.72 & 93.70 & \textbf{32.40} \\
    Original LeWM & 28.80 & 62.00 & 90.46 & 31.84 \\
    SG-JEPA (SSM) & 30.00 & 61.22 & 93.54 & 32.07 \\
    DINO-WM & 27.60 & \textbf{67.30} & \textbf{95.86} & 28.79 \\
    \bottomrule
  \end{tabular}%
\end{table*}

In main-text, Figu~\ref{fig:main-3d}(c,e) compares SG-JEPA (GRU) and DINO-WM using five paired simulator-rollout seeds. Here we add results for Original LeWM and SG-JEPA (SSM) in Table~\ref{tab:basket-overall} for completeness.

\paragraph{Results} Both SG-JEPA pipelines give the highest strict-entry rates in the four-method comparison.  DINO-WM hits the ball slightly more often and has the highest
relaxed-success rate, but converts fewer hits into strict entries.  The advantage of SG-JEPA therefore appears after contact, where the policy must produce the appropriate outgoing position and velocity rather than merely intercept the ball.  GRU and SSM are effectively tied in this fixed rollout.

In Fig.~\ref{fig:basket-appendix} we show the per-gravity breakdown for paddle hit and final basket entry success rates respectively.  Paddle-hit rates remain high beyond the
two weakest gravity values, while strict entry varies substantially, reinforcing that post-impact targeting is the harder stage.  The entry ordering also changes sharply outside the center of the training distribution.  The two largest reversals persist under all five rollout seeds: at $g=13.9$~m/s$^2$, DINO-WM averages 61.2\% strict entry and SG-JEPA (GRU) 1.8\%; at $g=16.0$~m/s$^2$, SG-JEPA averages 58.6\% and DINO-WM 0.8\%. The five runs use the same episode IDs, action scaling, and fixed checkpoint hashes, with the rollout seed as the only change.  These are therefore reproducible task-specific reversals, although two gravity values are not enough to support a broader mechanistic conclusion.

\begin{figure*}[!htbp]
  \centering
  \includegraphics[width=\textwidth]{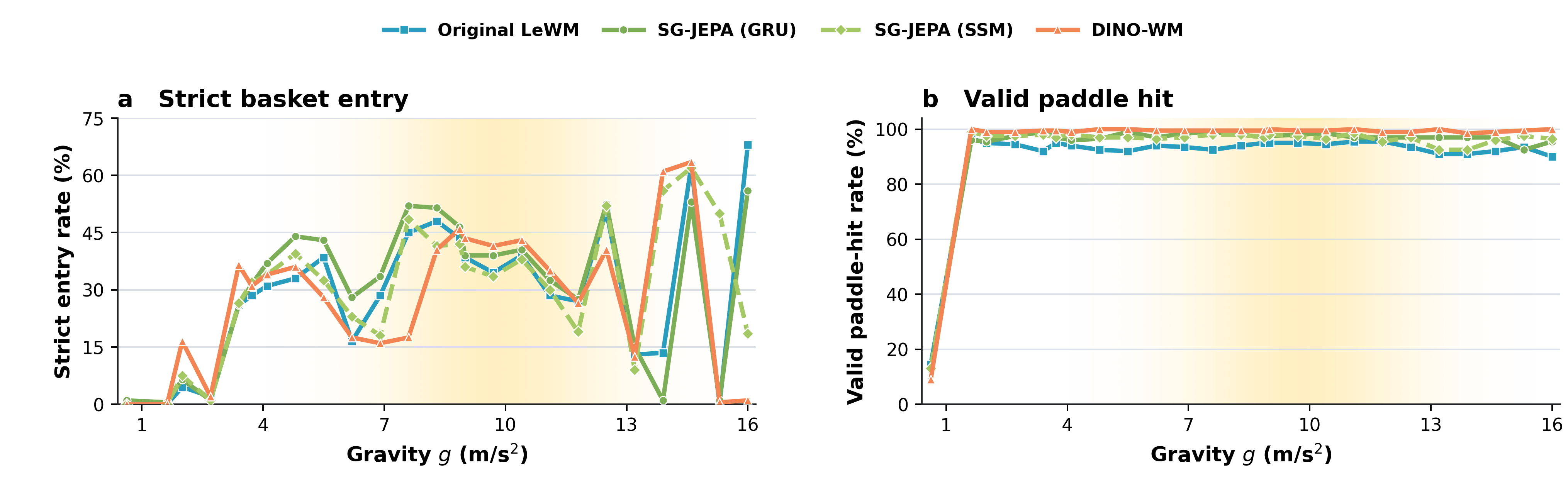}
  \caption{\textbf{Gravity-conditioned Franka Basket outcomes in the
  single-seed four-method comparison.}  (a) Strict basket entry and (b) valid
  paddle-blade contact.  Each point contains 200 paired episodes.  The fading
  yellow background follows the training-gravity density
  $g\sim\mathcal{N}(9.8,2.0^2)$.  All control and evaluation settings are
  matched across methods.}
  \label{fig:basket-appendix}
\end{figure*}

The high paddle-hit rates show that interception is not the main bottleneck. Success instead depends on controlling the ball’s post-impact state: small errors in contact timing, paddle orientation, or incoming velocity can change the outgoing impulse enough to miss the basket. Gravity further changes the required trajectory: weak gravity prolongs flight and can produce overshoot, whereas strong gravity demands a larger and more precisely directed impulse. SG-JEPA’s higher entry rate therefore suggests that its representation better preserves the velocity and contact-phase information needed for gravity-dependent impact control.

\subsection{Arm-Paddle-Ball results}
\label{app:arm-paddle-ball-control-details}

Arm Paddle Ball requires repeated contact control: the policy must keep the ball off the floor and produce at least one qualified bounce.  Main-text Fig.~\ref{fig:main-3d}(c,f) compares the final policy configurations. Here we evaluate the model's capability of understanding the ball dynamics by training probes and evaluated on held-out test set. 

\begin{figure*}[!htbp]
    \centering
    \includegraphics[width=0.9\textwidth]{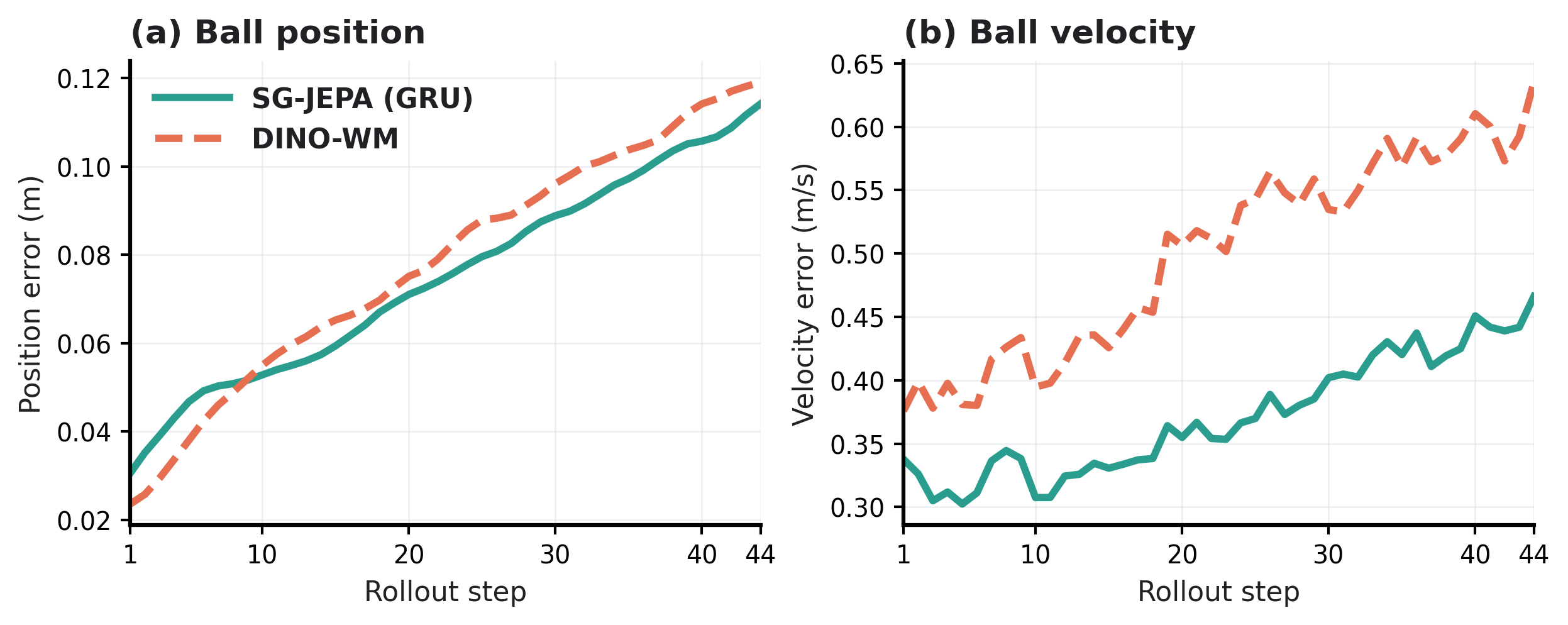}
    \caption{\textbf{Open-loop Arm-Paddle-Ball prediction.}  Mean Euclidean error for (a) ball position and (b) ball velocity over 5,000 held-out episodes.  SG-JEPA has lower position error from $\mathfrak{h}_9$ onward and lower velocity error at every forecast step.}
    \label{fig:arm-paddle-ball-probe-rollouts}
\end{figure*}

\paragraph{Prediction results.}
SG-JEPA has lower position error at 36 of 44 forecast steps and lower velocity error at all 44.  At $\mathfrak{h}_{44}$, position error falls from 0.119 to 0.114\,m and velocity error from 0.635 to 0.466\,m/s. The larger velocity gap is relevant to sustained bouncing, where small phase errors carry into the next contact. This probe comparison is evidence for more accurate open-loop dynamics.

\section{Sparse-Gravity Post-Training Ablation}
\label{app:sparse-gravity-posttraining}

In this ablation we study whether post-training on a subset of the test set with out-of-distribution gravity values improves long-horizon dynamics prediction between an beyond these values. We evaluate 2D right triangle and square datasets and four methods (Original LeWM, SG-JEPA (GRU), SG-JEPA (SSM) and DINO-WM). 

The target post-training set contains 3,200 episodes: 800 episodes at each of the four support gravity values $\mathcal{G}_{\mathrm{sup}}=\{0,2,6,8\}$.  The \emph{target-only} arm uses these 3,200 episodes.  The \emph{mixed} arm adds 5,000 episodes sampled from the original source-training distribution, for 8,200 episodes in total. So for each test value it's roughly $85:15\%$ split. These data are excluded from later evaluation to avoid leakage. Each model is warm-started from its corresponding pretrained source checkpoint and optimized for 20 epochs with a fresh optimizer. We then report rollout errors at horizon 44.  The held-out evaluation partition used here contains 2,500 episodes, with 100 episodes at every gravity in
$\{-2,-1.5,\ldots,9.5,10\}$. For condition $c$, metric $m$, and gravity $g$, let $E_{c,m}(g)$ be the mean error and let $E_{0,m}(g)$ denote the corresponding pretrained error.  The aggregate improvement is
\begin{equation}
    \Delta_c = 100\left[
    1 - \frac{1}{3|\mathcal{G}_{\mathrm{int}}|}
    \sum_{m=1}^{3}\sum_{g\in\mathcal{G}_{\mathrm{int}}}
    \frac{E_{c,m}(g)}{E_{0,m}(g)}
    \right],
    \label{eq:sparse-gravity-improvement}
\end{equation}
where $\mathcal{G}_{\mathrm{int}}=\{0,0.5,\ldots,8\}\setminus\mathcal{G}_{\mathrm{sup}}$ contains the 13 held-out interpolation gravity values. Thus, a positive value indicates a reduction in error relative to the same model's pretrained baseline. We also check whether post-training preserves performance near the source distribution.  A model passes this \emph{source-range retention} test if, for each metric, its mean error over $g\in\{2.5,3,\ldots,5.5\}$ is no more than 5\% above its pretrained error.

\paragraph{Results} Table~\ref{tab:sparse-gravity-epoch20} and
Figures~\ref{fig:sparse-gravity-triangle}--\ref{fig:sparse-gravity-square}
support four main observations.
\begin{itemize}[
    leftmargin=*,
    topsep=2pt,
    itemsep=2pt,
    parsep=0pt,
    partopsep=0pt
]
    \item Mixed post-training reduces aggregate error relative to the
    pretrained model in all eight shape and model combinations.  It also
    outperforms target-only post-training in every combination.  The mean
    reduction increases from 6.85\% to 14.05\%, a gain of 7.21 percentage
    points.
    \item SG-JEPA (GRU) improves by 20.84\% on the triangle and 13.60\% on
    the square.  The largest mixed improvement for each shape is 20.84\% for
    triangle SG-JEPA (GRU) and 20.81\% for square Original LeWM.
    \item The per-gravity curves show that adaptation does more than improve
    the four observed support gravity values.  Both post-training conditions reduce
    error at many of the unobserved gravity values between them, and the largest
    separations from the pretrained curves generally occur toward the low- and
    high-gravity ends.  The effect is therefore consistent with interpolation
    across the adapted gravity range.
    \item The gains are not uniform across all metrics.  Target-only DINO-WM
    slightly worsens triangle rotation by 0.22\%, while Mixed DINO-WM worsens
    square velocity by 4.13\%.  Around the central source range, where the
    pretrained curves are already lower, the changes are smaller and more
    method dependent.  Four of the eight Mixed configurations pass the
    source-range retention test; none of the Target-only configurations pass
    it.
\end{itemize}

The consistent advantage of Mixed over Target-only suggests that a modest
amount of data at a few target gravity values can improve prediction at the
unobserved gravity values between them.  Source replay also gives stronger aggregate
performance than using the target trajectories alone.  In settings where the
governing dynamics change, this offers an alternative to training a new world
model from scratch: collect a small target set and post-train with source
replay.

\begin{table*}[!htbp]
    \centering
    \small
    \setlength{\tabcolsep}{5pt}
    \begin{tabular}{lllrrrrc}
        \toprule
        Shape & Predictor & Arm & Overall & Position & Velocity & Rotation &
        \shortstack{Source\\retained} \\
        \midrule
        Triangle & Transformer & Target-only & 4.22 & 5.30 & 5.65 & 1.70 & no \\
         & Transformer & Mixed       & 13.80 & 17.29 & 13.64 & 10.47 & no \\
         & GRU         & Target-only & 17.05 & 17.04 & 20.83 & 13.29 & no \\
         & GRU         & Mixed       & 20.84 & 19.69 & 21.63 & 21.20 & yes \\
         & SSM         & Target-only & 6.61 & 2.73 & 11.29 & 5.80 & no \\
         & SSM         & Mixed       & 14.53 & 12.93 & 15.93 & 14.74 & yes \\
         & DINO-WM     & Target-only & 1.79 & 2.92 & 2.66 & $-0.22$ & no \\
         & DINO-WM     & Mixed       & 6.12 & 5.38 & 7.10 & 5.89 & yes \\
        \midrule
        Square & Transformer & Target-only & 10.41 & 11.46 & 8.21 & 11.58 & no \\
         & Transformer & Mixed       & 20.81 & 25.43 & 18.48 & 18.51 & no \\
         & GRU         & Target-only & 5.39 & 7.99 & 6.61 & 1.56 & no \\
         & GRU         & Mixed       & 13.60 & 9.09 & 9.60 & 22.12 & yes \\
         & SSM         & Target-only & 3.71 & 6.96 & 2.51 & 1.66 & no \\
         & SSM         & Mixed       & 12.15 & 17.84 & 10.95 & 7.66 & no \\
         & DINO-WM     & Target-only & 5.59 & 15.85 & 0.05 & 0.86 & no \\
         & DINO-WM     & Mixed       & 10.55 & 21.43 & $-4.13$ & 14.35 & no \\
        \midrule
        \multicolumn{3}{l}{Target-only mean} & 6.85 & 8.78 & 7.22 & 4.53 & --- \\
        \multicolumn{3}{l}{Mixed mean}       & 14.05 & 16.14 & 11.65 & 14.37 & --- \\
        \bottomrule
    \end{tabular}
    \caption{Improvement (\%) on the 13 held-out interpolation gravity values, relative to each predictor's pretrained baseline.  ``Overall''
    is $\Delta_c$ from Eq.~(\ref{eq:sparse-gravity-improvement}); ``Source retained''
    reports the source-range retention test defined above.}
    \label{tab:sparse-gravity-epoch20}
\end{table*}

Figures~\ref{fig:sparse-gravity-triangle} and
\ref{fig:sparse-gravity-square} provide the complete per-gravity breakdown.

\begin{figure*}[!htbp]
    \centering
    \includegraphics[width=\textwidth]{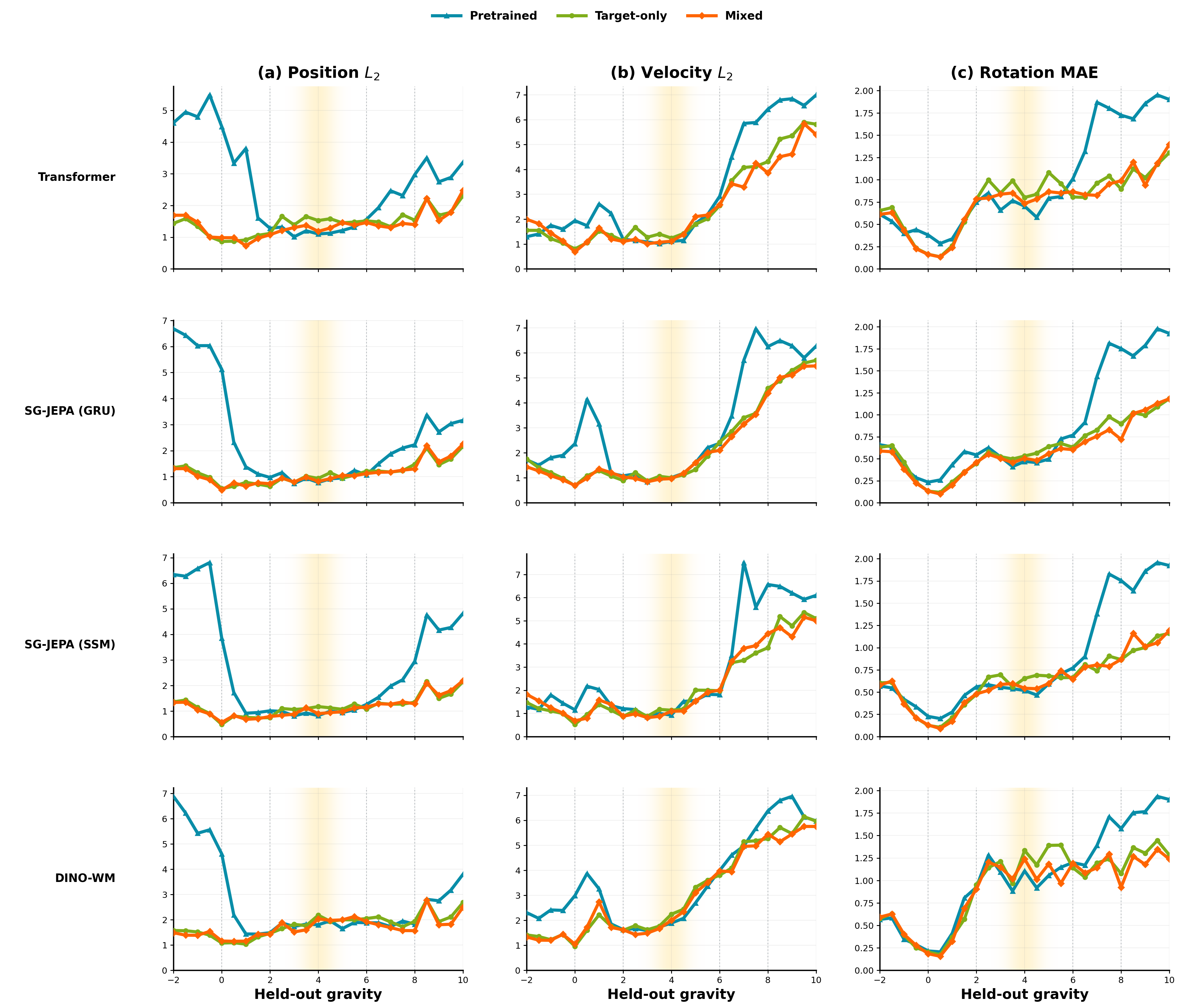}
    \caption{Right-triangle horizon-44 errors.  Rows correspond to
    the four predictors and columns to position, velocity, and cumulative
    rotation.  Curves show the pretrained baseline, target-only post-training,
    and mixed post-training over all 25 evaluation gravity values.}
    \label{fig:sparse-gravity-triangle}
\end{figure*}

\begin{figure*}[!htbp]
    \centering
    \includegraphics[width=\textwidth]{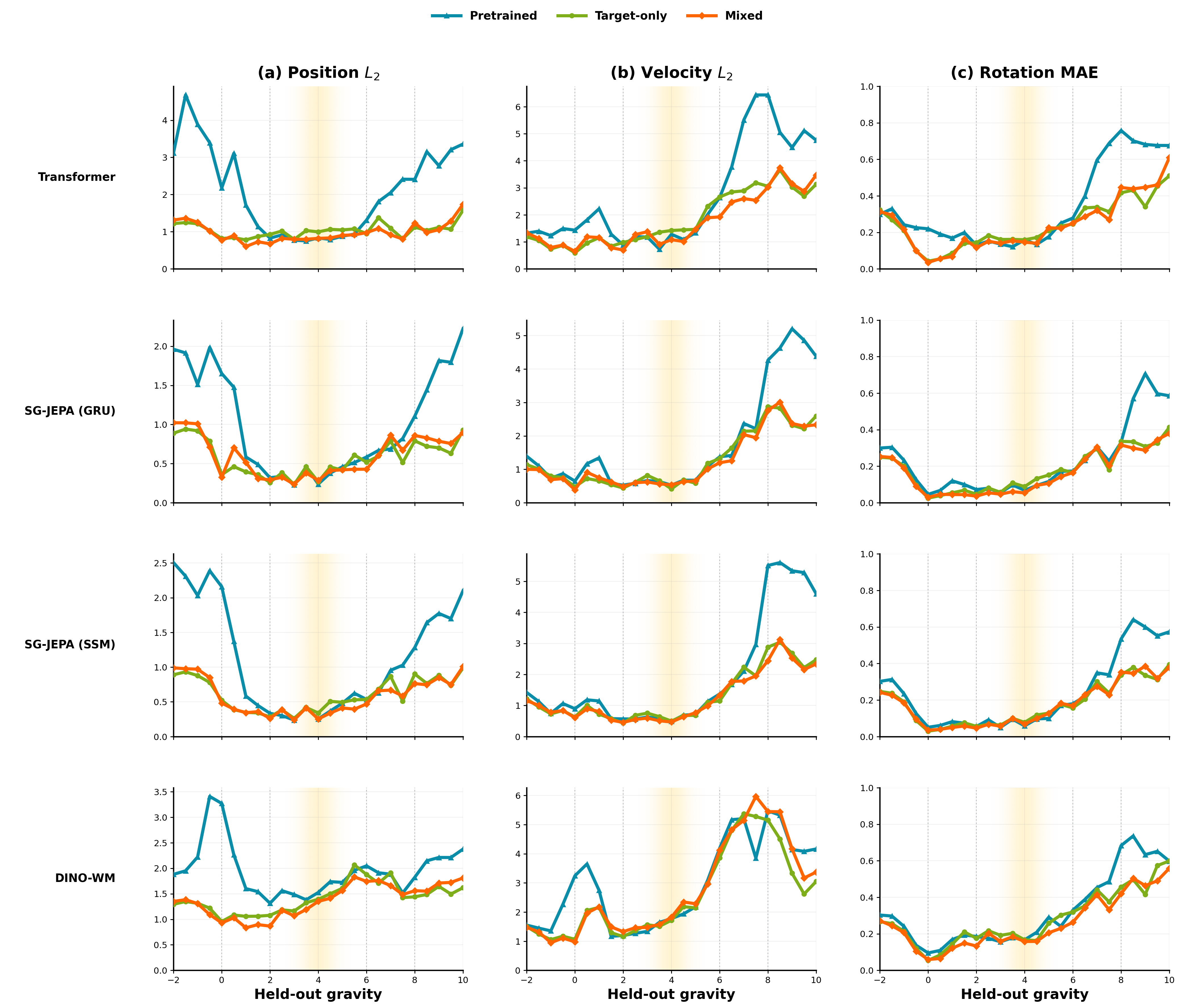}
    \caption{Square horizon-44 errors at epoch 20, using the same layout and
    conventions as Fig.~\ref{fig:sparse-gravity-triangle}.}
    \label{fig:sparse-gravity-square}
\end{figure*}

\section{Theory of prediction under changing physical laws}
\label{app:theory}

This appendix develops the theory used in Section~\ref{sec:dynamics-advantage}.  The analysis separates three questions. First, what makes a latent state locally predictive when gravity changes? Second, how does a local error enter a recursive rollout?  Third, why can a multi-step objective prefer a different representation from a one-step objective, e.g. one-step favours DINO-WM but after rollout SG-JEPA has advantage? We answer these questions with a linear feature model. The model is simple enough to analyze exactly, but it is not an identification of the neural encoder or the GRU with linear operators.

We first summarize the main results.
\begin{itemize}[leftmargin=1.5em,itemsep=0.25em,topsep=0.2em]
\item The one-step latent error separates into predictor error on learned features, information discarded by the representation that still affects the next state, and a conditionally zero-mean residual (Definition~\ref{def:appendix-local-defect} and Proposition~\ref{prop:appendix-teacher-forcing}).
\item If the true and learned dynamics share the same dependence on gravity, average training error bounds the systematic one-step test error, scaled by gravity coverage (Theorem~\ref{thm:appendix-local-ood-bound}). For affine free flight, the factor is $1+(g_\star-\mu_{\rm tr})^2/\sigma_{\rm tr}^2$.
\item Rollout recursion shows how local error and transition residual are transformed by later predicted dynamics.  The corresponding physical-state bound also retains readout error and the transition residual (Theorems~\ref{thm:appendix-exact-recursion} and~\ref{thm:appendix-central-bound}).
\item A multi-step loss applies a different quadratic geometry to one-step transition error.  Two representations can therefore exchange rank when the training horizon changes (Theorem~\ref{thm:appendix-ranking-reversal}).
\end{itemize}

\subsection{Setup: the model}
\label{app:theory-setup}

The model mirrors the training objective and fixes the notation used in the
proofs.

\paragraph{Implementation-aligned objective.}
Let $e_\vartheta$ be the shared trainable encoder and projector, and denote the latent vector $z_t=e_\vartheta(o_t)\in\mathbb R^{d_z}$ for observation $o_t$.  Let $c_t=q(u_t,g)$ be the action-conditioning of the action $u_t$ and gravity $g$.  The predictor receives a context of $H\geq1$ latent states.  Starting from a true context, it is applied recursively for $K\geq1$ training steps:
\begin{align}
\widehat z_{t+k}&=f\!\left(
z^{\rm roll}_{t+k-H:t+k-1},
c_{t+k-H:t+k-1}
\right),
\qquad k=1,\ldots,K, \\
z^{\rm roll}_{t+j}
&=
\begin{cases}
z_{t+j}, & j\leq 0,\\
\widehat z_{t+j}, & j\geq 1.
\end{cases}
\end{align}
For normalized weights
\begin{equation}
w_k=\frac{\gamma^{k-1}}{\sum_{j=1}^{K}\gamma^{j-1}},
\qquad
\gamma>0,
\qquad
w_k\geq0,
\qquad
\sum_{k=1}^{K}w_k=1,
\end{equation}
the latent rollout loss is
\begin{equation}
\mathcal L_K(\vartheta,f)
=
\sum_{k=1}^{K}w_k\,
\mathbb E\!\left[
\left\lVert\widehat z_{t+k}-z_{t+k}\right\rVert_2^2
\right].
\label{eq:appendix-rollout-objective}
\end{equation}
The expectation is over the training clips, including initial states, gravity values, action sequences, and stochastic residuals when present. The predictor helps determine which representations receive low training loss.  We express this dependence through the profiled representation risk
\begin{equation}
\mathcal J_{K,H,\mathcal F}(\vartheta)
=
\inf_{f\in\mathcal F_H}\mathcal L_K(\vartheta,f)
+\lambda_{\rm SIG}\mathcal R_{\rm SIG}(\vartheta).
\label{eq:appendix-profiled-risk}
\end{equation}
Here $\mathcal F_H$ is the predictor family for an $H$-step context. Unless marked as empirical, $\mathcal J_{K,H,\mathcal F}$ denotes a population risk; $\mathcal R_{\rm SIG}$ is the expected implemented batch statistic, with the expectation taken over data batches and random directions.  The idealized population characteristic-function calculation is defined separately in Section~\ref{app:theory-sigreg}.

For the core argument, let scalar gravity $g\in\mathbb R$ be fixed within an episode. We assume that the feature state $\phi_t\in\mathbb R^{d_\phi}$ follows the linear transition
\begin{equation}
\phi_{t+1}=T(g)\phi_t+\xi_{t+1}
\qquad
\mathbb E[\xi_{t+1}\mid\phi_t,g]=0.
\label{eq:appendix-feature-dynamics}
\end{equation}
When actions are present explicitly, the conditioning set in this equation also includes the known action input. The matrix $T(g)\in\mathbb R^{d_\phi\times d_\phi}$ gives the transition under gravity $g$, and $\xi_{t+1}$ is the remaining conditionally zero-mean residual.  Systematic approximation error does not belong in $\xi_{t+1}$; it must enter a separate deterministic term, such as the approximate-basis remainder introduced below. Section~\ref{app:theory-history} gives the exact finite-history form with actions. We model the learned representation as
\begin{equation}
z_t=W\phi_t,
\qquad
W\in\mathbb R^{d_z\times d_\phi},
\qquad
d_z\leq d_\phi,
\qquad
WW^\top=I_{d_z}.
\label{eq:appendix-linear-encoder}
\end{equation}
Row orthonormality fixes the latent coordinate scale inside the model. The learned one-step predictor is represented by $\widehat A(g)\in\mathbb R^{d_z\times d_z}$.  For the physical-state bounds, we assume that the physical quantity of interest is linear in the chosen feature coordinates:
\begin{equation}
s_t=C_{\rm phys}\phi_t,
\label{eq:appendix-physical-readout}
\end{equation}
where $C_{\rm phys}\in\mathbb R^{d_s\times d_\phi}$.  Let $r:\mathbb R^{d_z}\to\mathbb R^{d_s}$ be a frozen probe.  When the probe is linear, we write $r(z)=Rz$ with $R\in\mathbb R^{d_s\times d_z}$.  The experiments use learned probes, some of which read a short latent history.  For a history probe, the input to $r$ is the corresponding stacked latent window.  The exact selector for a linear history readout is given in Section~\ref{app:theory-history}.  The probe identities below hold for any frozen probe with a compatible input, while the matrix bounds use the linear specialization. $s_t$ is expressed in the fixed physical coordinates in which error is evaluated. We use $H$ only for context length, $K$ only for the training rollout length, and $\mathfrak{h}$ for a generic evaluation horizon.  Vector norms are Euclidean.  Matrix norms are Frobenius norms unless a subscript $2$ marks the spectral norm.

\paragraph{Fixed-gravity composition.}
For a fixed gravity and sampling interval, define
\begin{align}
S_g(0)&=I_{d_\phi},
&
S_g(\mathfrak h)&=T(g)^{\mathfrak h},\
\widehat S_g(0)&=I_{d_z},
&
\widehat S_g(\mathfrak h)&=\widehat A(g)^{\mathfrak h}.
\label{eq:appendix-semigroup-family}
\end{align}
For $k,\ell\in\mathbb N_0$,
\begin{equation}
S_g(k+\ell)=S_g(\ell)S_g(k),
\qquad
\widehat S_g(k+\ell)=\widehat S_g(\ell)\widehat S_g(k).
\label{eq:appendix-semigroup-law}
\end{equation}
Each gravity therefore indexes a pair of discrete evolution semigroups; gravity
is an index, not the composition variable. In the residual-free model these
operators describe the realized evolution. With conditionally zero-mean
residuals, the true powers describe conditional-mean propagation, while
realized trajectories retain the residual terms derived below. The rollout
loss trains the accuracy of the first $K$ learned iterates rather than the
algebraic composition identity itself.

\subsection{Local prediction error under a fixed gravity}
\label{app:theory-local-defect}

A latent state can fail locally for two reasons.  It may discard information needed for the next transition, or the predictor may evolve the retained information incorrectly.  The operators below separate these terms. Define the orthogonal projector onto the row space of $W$ and the two induced
operators
\begin{align}
P_W=W^\top W,\quad A_W(g)=WT(g)W^\top, \quad C_W(g)=WT(g)(I-P_W).
\label{eq:appendix-projected-operators}
\end{align}
Since $P_W\phi_t=W^\top z_t$,
\begin{align}
WT(g)\phi_t
&=WT(g)P_W\phi_t+WT(g)(I-P_W)\phi_t\\
&=A_W(g)z_t+C_W(g)\phi_t.
\label{eq:appendix-projector-decomposition}
\end{align}
Equivalently,
$C_W(g)=WT(g)-A_W(g)W$.  Encoding
Eq.~\ref{eq:appendix-feature-dynamics} therefore gives
\begin{equation}
z_{t+1}
=A_W(g)z_t+C_W(g)\phi_t+W\xi_{t+1}.
\label{eq:appendix-true-latent-transition}
\end{equation}

\begin{definition}[Local law-conditioned error]
\label{def:appendix-local-defect}
Let $E_A(g)=A_W(g)-\widehat A(g)$. The state-dependent conditional-mean error injected when the predictor starts from the true latent state is
\begin{equation}
\delta_t(g)=E_A(g)z_t+C_W(g)\phi_t.
\label{eq:appendix-local-defect}
\end{equation}
The term $E_A(g)z_t$ is predictor error on the state. The term $C_W(g)\phi_t$ is the closure error realized at state $\phi_t$: discarded features still affect the next latent.  The operator $C_W(g)$ can be nonzero even on trajectories for which this realized vector happens to vanish.
\end{definition}

We call the representation \emph{predictively closed} at gravity $g$ when
$C_W(g)=0$.  In that case, the conditional mean of the next latent depends
only on the current latent.  Residual variation can still make individual
transitions unpredictable.

In the residual-free linear model, predictive closure together with an exact
latent operator, $\widehat A(g)=A_W(g)$, gives
$WT(g)=\widehat A(g)W$. Hence
\begin{equation}
W S_g(\mathfrak h)
=\widehat S_g(\mathfrak h)W
\qquad
\text{for every }\mathfrak h\geq0.
\label{eq:appendix-exact-intertwining}
\end{equation}
With transition residuals, this intertwining identity applies to the
conditional-mean evolution; individual trajectories still contain the
propagated $W\xi_{t+1}$ terms.

\begin{proposition}[What teacher forcing measures]
\label{prop:appendix-teacher-forcing}
Let
$\widehat z_{t+1}^{\rm TF}=\widehat A(g)z_t$.  Then
\begin{align}
z_{t+1}-\widehat z_{t+1}^{\rm TF}
&=\delta_t(g)+W\xi_{t+1},
\label{eq:appendix-tf-latent}\\
Rz_{t+1}-R\widehat z_{t+1}^{\rm TF}
&=R\delta_t(g)+RW\xi_{t+1},
\label{eq:appendix-tf-decoded}\\
s_{t+1}-R\widehat z_{t+1}^{\rm TF}
&=(C_{\rm phys}-RW)\phi_{t+1}
+R\delta_t(g)+RW\xi_{t+1}.
\label{eq:appendix-tf-physical}
\end{align}
\end{proposition}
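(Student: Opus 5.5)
The proof is a direct substitution argument built on the encoded transition \eqref{eq:appendix-true-latent-transition}, which was derived just before the statement from the projector decomposition \eqref{eq:appendix-projector-decomposition}. We establish the three identities in order, each from the previous one by applying a fixed linear map and, for the last, adding and subtracting one term.

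\emph{Latent identity.} Start from \eqref{eq:appendix-true-latent-transition}, $z_{t+1}=A_W(g)z_t+C_W(g)\phi_t+W\xi_{t+1}$. Subtract $\widehat z^{\rm TF}_{t+1}=\widehat A(g)z_t$ to obtain
\[
z_{t+1}-\widehat z^{\rm TF}_{t+1}=[A_W(g)-\widehat A(g)]z_t+C_W(g)\phi_t+W\xi_{t+1}.
\]
By Definition~\ref{def:appendix-local-defect} the first two terms are exactly $E_A(g)z_t+C_W(g)\phi_t=\delta_t(g)$, which proves \eqref{eq:appendix-tf-latent}.

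\emph{Decoded identity.} Left-multiply \eqref{eq:appendix-tf-latent} by the linear probe $R$; linearity gives \eqref{eq:appendix-tf-decoded} immediately.

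\emph{Physical identity.} Use the readout assumption \eqref{eq:appendix-physical-readout}, $s_{t+1}=C_{\rm phys}\phi_{t+1}$. Add and subtract $Rz_{t+1}=RW\phi_{t+1}$:
\[
s_{t+1}-R\widehat z^{\rm TF}_{t+1}=(C_{\rm phys}-RW)\phi_{t+1}+\bigl(Rz_{t+1}-R\widehat z^{\rm TF}_{t+1}\bigr).
\]
Substituting \eqref{eq:appendix-tf-decoded} for the bracket yields \eqref{eq:appendix-tf-physical}.

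No step is a genuine obstacle. The only point needing care is the bookkeeping for probes. The first identity holds for any predictor input, but \eqref{eq:appendix-tf-decoded} and \eqref{eq:appendix-tf-physical} require the linear specialization $r(z)=Rz$; for a nonlinear or history probe only the latent identity transfers verbatim. We should also note the interpretation. Since $\mathbb E[\xi_{t+1}\mid\phi_t,g]=0$ and $\delta_t(g)$ is $\phi_t$-measurable, the conditional mean of the teacher-forced error is $\delta_t(g)$. This makes precise that teacher forcing measures the local defect plus zero-mean noise, with an additional readout-misspecification term $(C_{\rm phys}-RW)\phi_{t+1}$ at the physical level.
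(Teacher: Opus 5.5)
Your proposal is correct and follows essentially the same route as the paper's proof. You subtract $\widehat A(g)z_t$ from the encoded transition to get the latent identity, apply $R$ for the decoded identity, and add and subtract $RW\phi_{t+1}=Rz_{t+1}$ for the physical identity.
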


\begin{proof}
Subtract $\widehat A(g)z_t$ from
Eq.~\ref{eq:appendix-true-latent-transition}.  This gives
Eq.~\ref{eq:appendix-tf-latent}; applying $R$ gives
Eq.~\ref{eq:appendix-tf-decoded}.  For the physical identity, add and
subtract $RW\phi_{t+1}=Rz_{t+1}$:
\begin{align}
s_{t+1}-R\widehat z_{t+1}^{\rm TF}
&=C_{\rm phys}\phi_{t+1}-RW\phi_{t+1}
+R(z_{t+1}-\widehat z_{t+1}^{\rm TF}),
\end{align}
and substitute Eq.~\ref{eq:appendix-tf-latent}.
\end{proof}

For any frozen, possibly nonlinear probe $r$, the decoded-latent comparison is
$r(z_{t+1})-r(\widehat z_{t+1}^{\rm TF})$.  The readout residual
$s_{t+1}-r(z_{t+1})$ cancels because the same probe is applied to both latents.
If $r$ is $L_r$-Lipschitz on the line segment joining the two latents, then
\begin{equation}
\left\lVert
r(z_{t+1})-r(\widehat z_{t+1}^{\rm TF})
\right\rVert_2
\leq
L_r\left\lVert\delta_t(g)+W\xi_{t+1}\right\rVert_2.
\end{equation}

The teacher-forced diagnostic in Figure~\ref{fig:dynamics-advantage} uses this
same-probe comparison, so the true-state probe floor cancels for both linear and
nonlinear probes.  The remaining error still combines the two terms in
$\delta_t(g)$ with the transition residual.  Under the conditional-zero-mean
assumption, the expected cross term also vanishes after any fixed linear probe:
\begin{equation}
\mathbb E\!\left[
\delta_t(g)^\top R^\top RW\xi_{t+1}\mid g
\right]
=0.
\end{equation}
This follows from the tower property because $\delta_t(g)$ is measurable with
respect to $(\phi_t,g)$ and
$\mathbb E[\xi_{t+1}\mid\phi_t,g]=0$.
There is no corresponding cancellation guarantee for a nonlinear probe, and
the result says nothing about the readout-residual cross term in
Eq.~\ref{eq:appendix-physical-mse-decomposition}.  The diagnostic
cannot, by itself, assign the difference to
$E_A(g)z_t$ or $C_W(g)\phi_t$ separately.

\subsection{How gravity coverage controls test error}
\label{app:theory-law-transfer}

The semigroup law composes evolution over $\mathfrak h$ at fixed $g$.
Generalization across gravity is a separate question: it depends on how the
one-step operator varies with $g$ and which law directions training covers.
Low error on the training gravity values alone gives no bound at an unseen gravity,
so we state the required shared structure explicitly.

\paragraph{Shared law basis}
Suppose a column vector $\psi(g)=(\psi_1(g),\ldots,\psi_m(g))^\top$ describes how gravity enters the
transition, with finite second moments under the training distribution, and
\begin{align}
T(g)=\sum_{k=1}^{m}\psi_k(g)T_k,\quad & A_W(g)=\sum_{k=1}^{m}\psi_k(g)A_k,\\
C_W(g)=\sum_{k=1}^{m}\psi_k(g)C_k,\quad & \widehat A(g)=\sum_{k=1}^{m}\psi_k(g)\widehat A_k.
\label{eq:appendix-shared-law-basis}
\end{align}
The first expansion implies the next two with $A_k=WT_kW^\top$ and
$C_k=WT_k(I-P_W)$. The expansion for $\widehat A(g)$ is an additional
structural assumption on the learned predictor. Equation~\eqref{eq:appendix-shared-law-basis}
is therefore an exact-model assumption for the result below, not a consequence
of providing $g$ as an input. The neural predictor is not constrained to
satisfy it exactly.

Let $P_{\rm tr}$ be the training distribution over gravity and define
\begin{equation}
M_\psi
=\mathbb E_{g\sim P_{\rm tr}}[\psi(g)\psi(g)^\top].
\label{eq:appendix-law-moment}
\end{equation}
We assume $M_\psi\succ0$, so the training distribution covers every direction
in the chosen basis.  The \emph{law-coverage factor} at a test gravity
$g_\star$ is
\begin{equation}
\mathcal L_{\rm law}(g_\star)
=\psi(g_\star)^\top M_\psi^{-1}\psi(g_\star).
\label{eq:appendix-law-coverage}
\end{equation}
A large value means that the test law lies in a weakly covered direction of
the training design.

\begin{lemma}[Transfer for a matrix-valued law family]
\label{lem:appendix-matrix-transfer}
Let
$E(g)=\sum_{k=1}^{m}\psi_k(g)E_k$ be any matrix-valued residual and set $\epsilon_E^2=\mathbb E_{g\sim P_{\rm tr}}\left[\lVert E(g)\rVert_F^2\right]$. If $M_\psi\succ0$, then
\begin{equation}
\lVert E(g_\star)\rVert_F^2
\leq
\mathcal L_{\rm law}(g_\star)\epsilon_E^2.
\label{eq:appendix-matrix-transfer}
\end{equation}
\end{lemma}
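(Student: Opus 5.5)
The plan is to reduce the matrix statement to a scalar Cauchy--Schwarz inequality in the $M_\psi$-weighted geometry, applied entrywise. Since $E(g)=\sum_k\psi_k(g)E_k$ is linear in $\psi(g)$, every entry $(i,j)$ of $E(g)$ can be written as $E(g)_{ij}=\psi(g)^\top b_{ij}$, where $b_{ij}=((E_1)_{ij},\dots,(E_m)_{ij})^\top\in\mathbb R^m$. The Frobenius norm then becomes $\lVert E(g)\rVert_F^2=\sum_{i,j}(\psi(g)^\top b_{ij})^2=\psi(g)^\top B\,\psi(g)$, with $B=\sum_{i,j}b_{ij}b_{ij}^\top\succeq0$. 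This is just the Gram matrix of the coefficient matrices, $B_{kl}=\operatorname{tr}(E_k^\top E_l)$.

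Taking the expectation over $P_{\rm tr}$ gives
\[
\epsilon_E^2=\mathbb E[\psi^\top B\psi]=\operatorname{tr}(BM_\psi)=\sum_{i,j}b_{ij}^\top M_\psi b_{ij}.
\]
This step is justified by the finite second moments assumed for $\psi$.

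For each entry, write $\psi(g_\star)^\top b_{ij}=(M_\psi^{-1/2}\psi(g_\star))^\top(M_\psi^{1/2}b_{ij})$. This uses $M_\psi\succ0$, so the symmetric square roots exist. Cauchy--Schwarz then yields
\[
(\psi(g_\star)^\top b_{ij})^2\le \mathcal L_{\rm law}(g_\star)\, b_{ij}^\top M_\psi b_{ij}.
\]
Summing over $(i,j)$ gives $\lVert E(g_\star)\rVert_F^2\le\mathcal L_{\rm law}(g_\star)\,\epsilon_E^2$, which is \eqref{eq:appendix-matrix-transfer}.

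There is no real obstacle here. The only point needing care is to state the trace identity $\mathbb E[\psi^\top B\psi]=\operatorname{tr}(BM_\psi)$ and that $M_\psi$ is invertible and positive. As an alternative one-line route, one can use $\psi^\top B\psi\le \lambda_{\max}(M_\psi^{-1/2}BM_\psi^{-1/2})\,\psi^\top M_\psi^{-1}\psi$ and bound $\lambda_{\max}$ by the trace $\operatorname{tr}(BM_\psi)$, since the matrix is positive semidefinite. The entrywise Cauchy--Schwarz version is what I would write.
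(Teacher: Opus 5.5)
Your entrywise Cauchy--Schwarz argument is correct and is essentially the paper's proof. Your vectors $b_{ij}$ are the rows of the paper's matrix $\mathsf E$ (whose columns are $\operatorname{vec}(E_k)$), so summing your row-wise bounds is exactly the paper's step $\lVert\mathsf E M_\psi^{1/2}\,M_\psi^{-1/2}\psi(g_\star)\rVert_2\le\lVert\mathsf E M_\psi^{1/2}\rVert_F\sqrt{\mathcal L_{\rm law}(g_\star)}$ with $\lVert\mathsf E M_\psi^{1/2}\rVert_F^2=\operatorname{tr}(BM_\psi)=\epsilon_E^2$; the paper merely passes through the spectral norm first. One slip in your aside: the matrix there should be $M_\psi^{1/2}BM_\psi^{1/2}$, whose trace is $\operatorname{tr}(BM_\psi)$, whereas with $M_\psi^{-1/2}BM_\psi^{-1/2}$ the inequality is false in general (take $m=1$ and $M_\psi=4$).
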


\begin{proof}
Let $\mathsf{E}$ be the matrix whose $k$th column is
$\operatorname{vec}(E_k)$.  Then
$\operatorname{vec}(E(g))=\mathsf{E}\psi(g)$ and
\begin{align}
\epsilon_E^2
&=\mathbb E\!\left[\psi(g)^\top \mathsf{E}^\top \mathsf{E}\psi(g)\right]\\
&=\operatorname{tr}(\mathsf{E} M_\psi \mathsf{E}^\top)
=\lVert \mathsf{E} M_\psi^{1/2}\rVert_F^2.
\label{eq:appendix-training-residual-factor}
\end{align}
At $g_\star$,
\begin{align}
\lVert E(g_\star)\rVert_F
&=\left\lVert
\mathsf{E} M_\psi^{1/2}M_\psi^{-1/2}\psi(g_\star)
\right\rVert_2\\
&\leq
\lVert \mathsf{E} M_\psi^{1/2}\rVert_2
\left\lVert M_\psi^{-1/2}\psi(g_\star)\right\rVert_2\\
&\leq
\lVert \mathsf{E} M_\psi^{1/2}\rVert_F
\sqrt{\mathcal L_{\rm law}(g_\star)}.
\end{align}
Substitute Eq.~\ref{eq:appendix-training-residual-factor} and square
both sides.
\end{proof}

Define the average training-law errors
\begin{align}
\epsilon_{\rm cl}^2
&=\mathbb E_{g\sim P_{\rm tr}}
\left[\lVert C_W(g)\rVert_F^2\right],\\
\epsilon_{\rm op}^2
&=\mathbb E_{g\sim P_{\rm tr}}
\left[\lVert A_W(g)-\widehat A(g)\rVert_F^2\right].
\label{eq:appendix-training-errors}
\end{align}
The first measures how far the representation is from predictive closure.  The
second measures how well the predictor fits the transition retained by the
representation.  

\begin{theorem}[Local error at an unseen gravity]
\label{thm:appendix-local-ood-bound}
Under the shared-basis assumptions in
Eq.~\ref{eq:appendix-shared-law-basis}, suppose
$\lVert z_t\rVert_2\leq B_z$ and
$\lVert\phi_t\rVert_2\leq B_\phi$.  Then
\begin{equation}
\lVert\delta_t(g_\star)\rVert_2
\leq
\sqrt{\mathcal L_{\rm law}(g_\star)}
\left(B_z\epsilon_{\rm op}+B_\phi\epsilon_{\rm cl}\right).
\label{eq:appendix-local-ood-bound}
\end{equation}
\end{theorem}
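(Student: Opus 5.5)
The plan is to split the local error at $g_\star$ using Definition~\ref{def:appendix-local-defect}, bound each piece with the triangle inequality, and then move each operator norm from $g_\star$ back to training-average quantities via Lemma~\ref{lem:appendix-matrix-transfer}. Concretely, write $\delta_t(g_\star)=E_A(g_\star)z_t+C_W(g_\star)\phi_t$ with $E_A(g_\star)=A_W(g_\star)-\widehat A(g_\star)$. The triangle inequality and $\lVert Mx\rVert_2\le\lVert M\rVert_2\lVert x\rVert_2\le\lVert M\rVert_F\lVert x\rVert_2$, together with the state bounds, give
\begin{equation*}
\lVert\delta_t(g_\star)\rVert_2\le B_z\lVert E_A(g_\star)\rVert_F+B_\phi\lVert C_W(g_\star)\rVert_F .
\end{equation*}

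Next, check that both residuals belong to the matrix-valued law family required by the lemma. By the shared-basis assumption in Eq.~\ref{eq:appendix-shared-law-basis}, $E_A(g)=\sum_k\psi_k(g)(A_k-\widehat A_k)$ and $C_W(g)=\sum_k\psi_k(g)C_k$. Hence Lemma~\ref{lem:appendix-matrix-transfer} applies to each with coefficient matrices $E_k=A_k-\widehat A_k$ and $E_k=C_k$ respectively. Their training second moments are exactly $\epsilon_{\rm op}^2$ and $\epsilon_{\rm cl}^2$ from Eq.~\ref{eq:appendix-training-errors}.

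The lemma then yields $\lVert E_A(g_\star)\rVert_F\le\sqrt{\mathcal L_{\rm law}(g_\star)}\,\epsilon_{\rm op}$ and $\lVert C_W(g_\star)\rVert_F\le\sqrt{\mathcal L_{\rm law}(g_\star)}\,\epsilon_{\rm cl}$. Substituting these into the display above and factoring out $\sqrt{\mathcal L_{\rm law}(g_\star)}$ gives Eq.~\ref{eq:appendix-local-ood-bound}.

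No step is a serious obstacle. The only point needing care is confirming that $E_A$ inherits the basis expansion. This holds because $A_W$ does so automatically via $A_k=WT_kW^\top$, while $\widehat A$ does so by the explicit structural assumption. The common factor $\mathcal L_{\rm law}$ then requires that both expansions use the same $\psi$ and the same $M_\psi\succ0$, which the assumption guarantees.
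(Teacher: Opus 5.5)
Your proposal is correct and matches the paper's proof essentially step for step. You decompose $\delta_t(g_\star)$ via Definition~\ref{def:appendix-local-defect}, apply the triangle inequality with $\lVert Mx\rVert_2\le\lVert M\rVert_F\lVert x\rVert_2$, and invoke Lemma~\ref{lem:appendix-matrix-transfer} separately on $C_W$ and $E_A$. You also check explicitly that $E_A$ inherits the basis expansion, via $A_k=WT_kW^\top$ and the structural assumption on $\widehat A$; the paper leaves this implicit, but it is exactly what the lemma requires.
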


\begin{proof}
Apply Lemma~\ref{lem:appendix-matrix-transfer} first to $C_W(g)$ and then to
$E_A(g)=A_W(g)-\widehat A(g)$.  This gives
\begin{align}
\lVert C_W(g_\star)\rVert_F
&\leq
\sqrt{\mathcal L_{\rm law}(g_\star)}\epsilon_{\rm cl},\\
\lVert E_A(g_\star)\rVert_F
&\leq
\sqrt{\mathcal L_{\rm law}(g_\star)}\epsilon_{\rm op}.
\end{align}
Using Definition~\ref{def:appendix-local-defect}, the triangle inequality, and
$\lVert Mx\rVert_2\leq\lVert M\rVert_F\lVert x\rVert_2$ yields
\begin{align}
\lVert\delta_t(g_\star)\rVert_2
&\leq
\lVert E_A(g_\star)\rVert_F\lVert z_t\rVert_2
+\lVert C_W(g_\star)\rVert_F\lVert\phi_t\rVert_2,
\end{align}
which gives the stated bound.
\end{proof}
The theorem is a sufficient transfer result. As a concrete example, for collision-free ballistic motion, gravity enters the transition affinely. This gives a concrete two-dimensional law basis.

\begin{corollary}[Coverage factor for affine gravity]
\label{cor:appendix-affine-coverage}
Let $\psi(g)=(1,g)^\top$.  If the training gravity has mean $\mu_{\rm tr}$ and
variance $\sigma_{\rm tr}^2>0$, then
\begin{equation}
\mathcal L_{\rm law}(g_\star)
=1+\frac{(g_\star-\mu_{\rm tr})^2}{\sigma_{\rm tr}^2}.
\label{eq:appendix-affine-coverage}
\end{equation}
\end{corollary}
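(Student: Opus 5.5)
We compute $M_\psi$ explicitly, invert it, and evaluate the quadratic form in \eqref{eq:appendix-law-coverage} at $\psi(g_\star)=(1,g_\star)^\top$.

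\textbf{Step 1: the second-moment matrix.} With $\psi(g)=(1,g)^\top$, the entries of $\psi(g)\psi(g)^\top$ are $1$, $g$, $g$, and $g^2$. Taking expectations under $P_{\rm tr}$ and using $\mathbb E[g^2]=\sigma_{\rm tr}^2+\mu_{\rm tr}^2$ gives
\begin{equation*}
M_\psi=\begin{pmatrix}1 & \mu_{\rm tr}\\ \mu_{\rm tr} & \sigma_{\rm tr}^2+\mu_{\rm tr}^2\end{pmatrix}.
\end{equation*}
Its determinant is $\sigma_{\rm tr}^2>0$. The $(1,1)$ entry is also positive, so $M_\psi\succ0$ and the standing assumption of Lemma~\ref{lem:appendix-matrix-transfer} holds. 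The finite-second-moment requirement is implied by the variance being finite.

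\textbf{Step 2: the inverse.} By the $2\times2$ adjugate formula,
\begin{equation*}
M_\psi^{-1}=\sigma_{\rm tr}^{-2}\begin{pmatrix}\sigma_{\rm tr}^2+\mu_{\rm tr}^2 & -\mu_{\rm tr}\\ -\mu_{\rm tr} & 1\end{pmatrix}.
\end{equation*}

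\textbf{Step 3: the quadratic form.} Evaluating at $(1,g_\star)$ gives
\begin{equation*}
\mathcal L_{\rm law}(g_\star)=\sigma_{\rm tr}^{-2}\left(\sigma_{\rm tr}^2+\mu_{\rm tr}^2-2\mu_{\rm tr}g_\star+g_\star^2\right).
\end{equation*}
Completing the square, $\mu_{\rm tr}^2-2\mu_{\rm tr}g_\star+g_\star^2=(g_\star-\mu_{\rm tr})^2$. This yields $1+(g_\star-\mu_{\rm tr})^2/\sigma_{\rm tr}^2$, which is \eqref{eq:appendix-affine-coverage}.

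A quicker route, if preferred, is to reparametrize by the invertible linear map $\psi\mapsto(1,(g-\mu_{\rm tr})/\sigma_{\rm tr})^\top$. This map whitens $M_\psi$ to the identity. Because $\psi^\top M_\psi^{-1}\psi$ is invariant under any invertible linear change of basis, the quadratic form then reads off directly.

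There is no real obstacle here. The only point needing care is confirming $M_\psi\succ0$ from $\sigma_{\rm tr}^2>0$, so that the inverse exists. One should also note that for the clipped training distributions, such as $\max(\mathcal N(4,0.5^2),0.1)$, $\mu_{\rm tr}$ and $\sigma_{\rm tr}$ must be the moments of the actual training law rather than the nominal Gaussian parameters.
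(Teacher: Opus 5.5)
Your proof is correct and follows the paper's argument exactly: you compute $M_\psi$, observe that its determinant is $\sigma_{\rm tr}^2$, invert via the $2\times 2$ adjugate, and expand the quadratic form at $(1,g_\star)^\top$. Your extra check that $M_\psi\succ0$ from a positive $(1,1)$ entry and positive determinant, and your remark that the clipped training law's actual moments must be used, are both consistent with the paper; the paper makes the latter remark right after its proof.
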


\begin{proof}
For this basis,
\begin{equation}
M_\psi
=
\begin{bmatrix}
1&\mu_{\rm tr}\\
\mu_{\rm tr}&\mu_{\rm tr}^2+\sigma_{\rm tr}^2
\end{bmatrix}.
\end{equation}
Its determinant is $\sigma_{\rm tr}^2$, so
\begin{equation}
M_\psi^{-1}
=\frac{1}{\sigma_{\rm tr}^2}
\begin{bmatrix}
\mu_{\rm tr}^2+\sigma_{\rm tr}^2&-\mu_{\rm tr}\\
-\mu_{\rm tr}&1
\end{bmatrix}.
\end{equation}
Direct multiplication gives
\begin{align}
\begin{bmatrix}1&g_\star\end{bmatrix}
M_\psi^{-1}
\begin{bmatrix}1\\g_\star\end{bmatrix}
&=\frac{
\mu_{\rm tr}^2+\sigma_{\rm tr}^2
-2\mu_{\rm tr}g_\star+g_\star^2
}{\sigma_{\rm tr}^2}\\
&=1+\frac{(g_\star-\mu_{\rm tr})^2}{\sigma_{\rm tr}^2}.
\end{align}
\end{proof}

The moments in Eq.~\ref{eq:appendix-affine-coverage} are the mean and
variance of the actual training distribution.  The planar experiments draw
$g=\max\{G,0.1\}$ with $G\sim\mathcal N(4,0.5^2)$, so the exact factor uses
the moments of this clipped distribution. 

\begin{proposition}[Free-flight dynamics remain affine in gravity]
\label{prop:appendix-ballistic-affine}
For time step $\Delta>0$ and homogeneous state
$\phi_t=(p_t,v_t,1)^\top$, let
\begin{equation}
T_g=
\begin{bmatrix}
1&\Delta&-\tfrac12g\Delta^2\\
0&1&-g\Delta\\
0&0&1
\end{bmatrix}.
\end{equation}
For every integer $\mathfrak{h}\geq1$,
\begin{equation}
T_g^{\mathfrak{h}}=
\begin{bmatrix}
1&\mathfrak{h}\Delta&-\tfrac12g(\mathfrak{h}\Delta)^2\\
0&1&-g\mathfrak{h}\Delta\\
0&0&1
\end{bmatrix}.
\label{eq:appendix-ballistic-h-step}
\end{equation}
Thus the transition remains affine in $g$ at every free-flight horizon.
\end{proposition}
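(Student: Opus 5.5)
I would argue by induction on $\mathfrak{h}$, exploiting the block upper-triangular structure of $T_g$. The base case $\mathfrak{h}=1$ is immediate: substituting $\mathfrak{h}=1$ into Eq.~\ref{eq:appendix-ballistic-h-step} reproduces $T_g$ exactly.

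For the inductive step, I assume the formula at $\mathfrak{h}$ and compute $T_g^{\mathfrak{h}+1}=T_g^{\mathfrak{h}}T_g$ entry by entry. The bottom row of both factors is $(0,0,1)$, so the product's bottom row is $(0,0,1)$. The $(2,\cdot)$ row is $(0,1,-g\Delta-g\mathfrak{h}\Delta)=(0,1,-g(\mathfrak{h}+1)\Delta)$. The first row has $(1,2)$ entry $\Delta+\mathfrak{h}\Delta$. Its $(1,3)$ entry is
\[
-\tfrac12 g\Delta^2-g\mathfrak{h}\Delta^2-\tfrac12 g\mathfrak{h}^2\Delta^2=-\tfrac12 g(\mathfrak{h}+1)^2\Delta^2 .
\]
This entry is the only step requiring care. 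It uses the identity $1+2\mathfrak{h}+\mathfrak{h}^2=(\mathfrak{h}+1)^2$, which encodes the kinematic $\tfrac12 g t^2$ law.

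An alternative route, which I might mention as a check, writes $T_g=I+N_g$ with $N_g$ strictly upper triangular. Then $N_g^3=0$, and the binomial expansion $T_g^{\mathfrak{h}}=I+\mathfrak{h}N_g+\binom{\mathfrak{h}}{2}N_g^2$ gives the same entries, since $N_g^2$ has a single nonzero entry $-g\Delta^2$ in position $(1,3)$.

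Affineness then follows from reading off the closed form. Every entry of $T_g^{\mathfrak{h}}$ is of the form $a+bg$ with $a,b$ depending only on $(\mathfrak{h},\Delta)$. Hence $T_g^{\mathfrak{h}}=T_0^{(\mathfrak{h})}+gT_1^{(\mathfrak{h})}$, which is the shared basis $\psi(g)=(1,g)^\top$ of Corollary~\ref{cor:appendix-affine-coverage} at every horizon. I expect no real obstacle beyond this bookkeeping.
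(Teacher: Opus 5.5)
Your proof is correct and takes the same route as the paper: induction on $\mathfrak{h}$ via $T_g^{\mathfrak{h}+1}=T_g^{\mathfrak{h}}T_g$, where the only nontrivial entry is the position term $-\tfrac12 g\Delta^2-g\mathfrak{h}\Delta^2-\tfrac12 g\mathfrak{h}^2\Delta^2=-\tfrac12 g(\mathfrak{h}+1)^2\Delta^2$. Your nilpotent check $T_g=I+N_g$ with $N_g^3=0$ is a valid closed-form alternative that the paper does not use, but the induction alone suffices.
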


\begin{proof}
The formula holds at $\mathfrak{h}=1$.  Assume it holds at $\mathfrak{h}$.
Multiplying $T_g^{\mathfrak{h}}$ by
$T_g$ gives the next velocity coefficient
$-g(\mathfrak{h}+1)\Delta$.  The homogeneous contribution to position is
\begin{align}
-\tfrac12g(\mathfrak{h}\Delta)^2-\mathfrak{h}g\Delta^2-\tfrac12g\Delta^2
&=-\tfrac12g(\mathfrak{h}+1)^2\Delta^2.
\end{align}
The remaining entries also match
Eq.~\ref{eq:appendix-ballistic-h-step} with $\mathfrak{h}$ replaced by
$\mathfrak{h}+1$.
Induction completes the proof.
\end{proof}

This proposition applies only while the trajectory remains in free flight.
It is a statement about the displayed physical-coordinate feature state; an
arbitrary learned feature map need not inherit the same affine dependence.
In more than one spatial dimension, the same calculation applies to the
gravity-aligned position and velocity components, while tangential components
have no gravity forcing in this idealized free-flight model.
Contact can change the transition branch, as discussed in
Section~\ref{app:theory-events}.

\subsubsection{When the coverage result does not apply}

The coverage theorem needs both a covered basis and a predictor that uses that
basis.

\begin{proposition}[Missing law directions prevent a uniform bound]
\label{prop:appendix-no-coverage}
Suppose $M_\psi$ is singular and there is an
$a\in\ker(M_\psi)$ such that $a^\top\psi(g_\star)\neq0$.  Then a residual
family can have zero average training error and nonzero error at $g_\star$.
\end{proposition}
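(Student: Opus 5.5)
The plan is an explicit construction of a residual family in the form of Lemma~\ref{lem:appendix-matrix-transfer}: write $E(g)=\sum_{k=1}^m\psi_k(g)E_k$ and choose the coefficient matrices $E_k$ along the uncovered direction $a$. Fix any nonzero matrix $B$ of the relevant shape, for example $B=e_1e_1^\top\in\mathbb R^{d_z\times d_z}$ or the same shape as $C_W$. Set $E_k=a_kB$, so that
\begin{equation*}
E(g)=\bigl(a^\top\psi(g)\bigr)B .
\end{equation*}
This is a legitimate member of the law family, since it is linear in $\psi(g)$ with fixed coefficients.

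The first step is to compute the training error. Using the vectorization identity from the proof of Lemma~\ref{lem:appendix-matrix-transfer}, $\mathsf E=\operatorname{vec}(B)a^\top$, which gives
\begin{equation*}
\epsilon_E^2=\operatorname{tr}(\mathsf E M_\psi\mathsf E^\top)=\lVert B\rVert_F^2\,a^\top M_\psi a .
\end{equation*}
Since $a\in\ker(M_\psi)$, this is zero. Equivalently, $\mathbb E[(a^\top\psi(g))^2]=0$, so $a^\top\psi(g)=0$ for $P_{\rm tr}$-almost every $g$. The residual therefore vanishes almost surely on the training distribution, not merely on average.

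The second step evaluates at the test gravity: $\lVert E(g_\star)\rVert_F=\lvert a^\top\psi(g_\star)\rvert\,\lVert B\rVert_F>0$ by hypothesis. Scaling $B$ by any $c>0$ leaves the training error at zero while making the test error arbitrarily large. This rules out any bound of the form $\lVert E(g_\star)\rVert_F\le\kappa\,\epsilon_E$, which is what "prevents a uniform bound" means.

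To connect this to the local error $\delta_t$, the construction can be realized as a predictor error: take $\widehat A(g)=A_W(g)-E(g)$. This predictor respects the shared basis in Eq.~\ref{eq:appendix-shared-law-basis}, and it gives $\epsilon_{\rm op}=0$ while $E_A(g_\star)\neq0$. Then choose a state $z_t$ outside $\ker B$, for instance $z_t=e_1$ realized as $W\phi_t$ with $\phi_t=W^\top e_1$, so that $\delta_t(g_\star)\neq0$. This is also where I expect the only real obstacle: showing the scalar $a^\top\psi(g)$ vanishes almost surely rather than just on average, and confirming that the statement needs only a residual family, not a specific physical $T$. Both are settled by the second-moment computation above.
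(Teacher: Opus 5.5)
Your construction $E(g)=(a^\top\psi(g))B$ is the same as the paper's $E(g)=(a^\top\psi(g))E_0$, and both rest on the fact that $a^\top M_\psi a=\mathbb E[(a^\top\psi(g))^2]=0$: this makes the residual vanish on the training distribution, while $a^\top\psi(g_\star)\neq0$ keeps it nonzero at $g_\star$, so the proposal is correct and takes essentially the paper's approach. Your extra realization as a predictor error $\widehat A(g)=A_W(g)-E(g)$ is a valid addition. You left one step implicit there: $\delta_t(g_\star)\neq0$ holds because $\phi_t=W^\top e_1$ lies in the row space of $W$, so the closure term $C_W(g_\star)\phi_t$ vanishes and cannot cancel $E_A(g_\star)z_t$.
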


\begin{proof}
Because
$a^\top M_\psi a=\mathbb E[(a^\top\psi(g))^2]=0$, we have
$a^\top\psi(g)=0$ almost surely under $P_{\rm tr}$.  Choose any nonzero matrix
$E_0$ and set
$E(g)=(a^\top\psi(g))E_0$.  This residual vanishes almost surely on the
training distribution, but it is nonzero at $g_\star$.
\end{proof}

A singular $M_\psi$ is not automatically fatal.  If $\psi(g_\star)$ lies in
the covered range of $M_\psi$, the same type of statement can use the inverse
restricted to that range.  The failure above occurs when the test law has a
component in a direction that training never observes.

\begin{corollary}[Transfer from a finite gravity design]
\label{cor:appendix-finite-design}
Let $g_1,\ldots,g_n$ be observed gravity values and let
$\Psi\in\mathbb R^{n\times m}$ have rows $\psi(g_i)^\top$.  For a residual
$E(g)=\sum_k\psi_k(g)E_k$, suppose $\Psi$ has full column rank.  Then
\begin{equation}
\lVert E(g_\star)\rVert_F^2
\leq
\psi(g_\star)^\top(\Psi^\top\Psi)^{-1}\psi(g_\star)
\sum_{i=1}^{n}\lVert E(g_i)\rVert_F^2.
\label{eq:appendix-finite-design}
\end{equation}
\end{corollary}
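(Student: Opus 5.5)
The plan is to repeat the proof of Lemma~\ref{lem:appendix-matrix-transfer}, with the empirical design matrix $\Psi^\top\Psi$ playing the role of the population moment $M_\psi$. Let $\mathsf{E}$ be the matrix whose $k$th column is $\operatorname{vec}(E_k)$, so that $\operatorname{vec}(E(g))=\mathsf{E}\psi(g)$ for every $g$. The observed residuals then satisfy
\begin{equation*}
\sum_{i=1}^{n}\lVert E(g_i)\rVert_F^2
=\sum_{i=1}^{n}\psi(g_i)^\top\mathsf{E}^\top\mathsf{E}\psi(g_i)
=\operatorname{tr}\!\left(\mathsf{E}\,\Psi^\top\Psi\,\mathsf{E}^\top\right)
=\lVert\mathsf{E}(\Psi^\top\Psi)^{1/2}\rVert_F^2,
\end{equation*}
using $\Psi^\top\Psi=\sum_i\psi(g_i)\psi(g_i)^\top$. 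Full column rank of $\Psi$ makes $G:=\Psi^\top\Psi$ positive definite, so $G^{1/2}$ and $G^{-1/2}$ exist.

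At the test gravity, I insert $G^{1/2}G^{-1/2}$ and apply submultiplicativity:
\begin{equation*}
\lVert E(g_\star)\rVert_F=\lVert\mathsf{E}G^{1/2}G^{-1/2}\psi(g_\star)\rVert_2
\le\lVert\mathsf{E}G^{1/2}\rVert_2\,\lVert G^{-1/2}\psi(g_\star)\rVert_2
\le\lVert\mathsf{E}G^{1/2}\rVert_F\sqrt{\psi(g_\star)^\top G^{-1}\psi(g_\star)}.
\end{equation*}
Squaring this and substituting the first identity gives Eq.~\ref{eq:appendix-finite-design}.

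An equivalent route avoids recomputing anything. Apply Lemma~\ref{lem:appendix-matrix-transfer} with $P_{\rm tr}$ taken to be the empirical distribution $\tfrac1n\sum_i\delta_{g_i}$. Then $M_\psi=G/n$, which is positive definite by the rank assumption, and $\epsilon_E^2=\tfrac1n\sum_i\lVert E(g_i)\rVert_F^2$. The two factors of $n$ cancel in $\mathcal L_{\rm law}(g_\star)\,\epsilon_E^2$, which yields the same bound. I would present this reduction as the proof and mention the direct computation as a check.

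No step is a real obstacle. The only points to verify are that full column rank is exactly what makes $G$ invertible, and that the Frobenius norm of a matrix equals the Euclidean norm of its vectorization, which holds by definition.
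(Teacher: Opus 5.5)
Your proof is correct, and both routes you give work. The direct computation is the proof of Lemma~\ref{lem:appendix-matrix-transfer} with $G=\Psi^\top\Psi$ in place of $M_\psi$. The reduction through the empirical measure $\tfrac1n\sum_i\delta_{g_i}$ is also valid, because the factor of $n$ in $M_\psi=G/n$ cancels the one in $\epsilon_E^2=\tfrac1n\sum_i\lVert E(g_i)\rVert_F^2$.

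The paper argues differently. It stacks the observed residuals into $Y$, with rows $\operatorname{vec}(E(g_i))^\top$, and notes that $Y=\Psi D$ for the stacked coefficients $D$. Full column rank then recovers $D=(\Psi^\top\Psi)^{-1}\Psi^\top Y$. This gives $\operatorname{vec}(E(g_\star))^\top=c^\top Y$ with weights $c=\Psi(\Psi^\top\Psi)^{-1}\psi(g_\star)$. Cauchy--Schwarz and $\lVert c\rVert_2^2=\psi(g_\star)^\top(\Psi^\top\Psi)^{-1}\psi(g_\star)$ finish the proof.

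Your reduction is more economical. It also shows that the statement really is a corollary: the finite-design factor is just $\mathcal L_{\rm law}(g_\star)/n$ for the empirical design. The paper's route is self-contained and avoids matrix square roots. It also writes the unseen residual explicitly as an interpolation $E(g_\star)=\sum_i c_iE(g_i)$ of the observed ones, with regression-leverage weights. That form makes sharpness easy to see: take $Y=cv^\top$, which lies in the column space of $\Psi$.
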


\begin{proof}
Let $Y\in\mathbb R^{n\times q}$ have row
$\operatorname{vec}(E(g_i))^\top$, where $q$ is the number of entries in each
residual matrix.  If $D\in\mathbb R^{m\times q}$ stacks the vectorized
coefficient matrices, then $Y=\Psi D$.  Full column rank gives
$D=(\Psi^\top\Psi)^{-1}\Psi^\top Y$.  Hence
\begin{equation}
\operatorname{vec}(E(g_\star))^\top
=\psi(g_\star)^\top(\Psi^\top\Psi)^{-1}\Psi^\top Y.
\end{equation}
Cauchy-Schwarz bounds its squared norm by the squared norm of the row vector
times $\lVert Y\rVert_F^2$.  The row-vector norm is
$\psi(g_\star)^\top(\Psi^\top\Psi)^{-1}\psi(g_\star)$, which proves the
claim.
\end{proof}

\begin{theorem}[Conditioning alone does not guarantee extrapolation]
\label{thm:appendix-conditioning-limit}
Let $g_1,\ldots,g_n$ be distinct observed gravity values and let $g_\star$ be
different from all of them.  For any analytic matrix-valued function
$\widehat A_0(g)$ and any matrix $Q$, there is an analytic matrix-valued function
$\widehat A_1(g)$ such that
\begin{equation}
\widehat A_1(g_i)=\widehat A_0(g_i)
\quad\text{for every }i,
\qquad
\widehat A_1(g_\star)=\widehat A_0(g_\star)+Q.
\end{equation}
\end{theorem}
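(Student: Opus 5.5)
The construction is an additive scalar-bump perturbation. I would set
\[
\widehat A_1(g)=\widehat A_0(g)+\beta(g)\,Q,
\]
where $\beta:\mathbb R\to\mathbb R$ is an analytic scalar function with $\beta(g_i)=0$ for every $i$ and $\beta(g_\star)=1$. Because sums and products of analytic functions are analytic, and a constant matrix times an analytic scalar is analytic entrywise, $\widehat A_1$ is analytic whenever $\beta$ is. The two required identities then hold by direct substitution: $\widehat A_1(g_i)=\widehat A_0(g_i)+0\cdot Q$ and $\widehat A_1(g_\star)=\widehat A_0(g_\star)+Q$.

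For $\beta$ I would take the Lagrange-type polynomial
\[
\beta(g)=\prod_{i=1}^{n}\frac{g-g_i}{g_\star-g_i}.
\]
The denominators are nonzero because $g_\star\neq g_i$ for all $i$. This $\beta$ vanishes at each $g_i$ and equals $1$ at $g_\star$. Being a polynomial, it is entire, so $\widehat A_1$ is analytic on all of $\mathbb R$, and even on $\mathbb C$ if $\widehat A_0$ is. The distinctness of the $g_i$ is not actually needed here; only $g_\star\notin\{g_i\}$ matters.

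There is no substantive obstacle; the only care point is the interpretation. If one wants $\widehat A_1$ to remain in a prescribed finite law basis $\psi$, the statement is not asserted there, and that case is exactly what Proposition~\ref{prop:appendix-no-coverage} addresses. I would close with a remark tying the result back to the paper's argument. Finitely many training gravity values cannot distinguish $\widehat A_0$ from $\widehat A_1$, even within the analytic class, so zero training error at the observed gravities implies nothing at $g_\star$ without the shared-basis assumption of Theorem~\ref{thm:appendix-local-ood-bound}. By the same token, if $\widehat A_0(g_\star)=A_W(g_\star)$, the choice $Q$ of any size produces an arbitrarily large local error at $g_\star$.
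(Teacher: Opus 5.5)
Your proposal is correct and is the same construction as the paper's proof, which adds $q(g)Q$ to $\widehat A_0(g)$ with $q(g)=\prod_{i}(g-g_i)/\prod_{i}(g_\star-g_i)$, a polynomial that vanishes at every $g_i$ and equals $1$ at $g_\star$. Your side remark that only $g_\star\notin\{g_i\}$ is needed, and not distinctness of the $g_i$, is also correct.
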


\begin{proof}
Define
\begin{equation}
q(g)=
\frac{\prod_{i=1}^{n}(g-g_i)}
{\prod_{i=1}^{n}(g_\star-g_i)}.
\end{equation}
The denominator is nonzero, $q(g_i)=0$, and $q(g_\star)=1$.  The function
$\widehat A_1(g)=\widehat A_0(g)+q(g)Q$ is analytic and has the required values.
\end{proof}

Even an analytic conditional model can agree at every observed gravity and
differ arbitrarily at an unseen one.  Supplying gravity is therefore not, by
itself, an extrapolation guarantee.

If a residual has the approximate form
\begin{equation}
E(g)=\sum_{k=1}^{m}\psi_k(g)E_k+\rho_E(g),
\end{equation}
write
\begin{equation}
E_{\rm basis}(g)=\sum_{k=1}^{m}\psi_k(g)E_k,
\qquad
\epsilon_{E,{\rm basis}}^2
=\mathbb E_{g\sim P_{\rm tr}}
\lVert E_{\rm basis}(g)\rVert_F^2.
\end{equation}
Lemma~\ref{lem:appendix-matrix-transfer} applies to $E_{\rm basis}$ and gives
\begin{equation}
\lVert E(g_\star)\rVert_F
\leq
\sqrt{\mathcal L_{\rm law}(g_\star)}\epsilon_{E,{\rm basis}}
+\lVert\rho_E(g_\star)\rVert_F.
\label{eq:appendix-approximate-basis-bound}
\end{equation}
Here $\epsilon_{E,{\rm basis}}$ is not automatically bounded by the observed
training residual: the basis component and $\rho_E$ may cancel at the training
gravity values.  For $E_A$, call the remainder $\rho_{\rm op}$; for $C_W$, call it
$\rho_{\rm cl}$.  Applying the formula to both residuals adds
$B_z\lVert\rho_{\rm op}(g_\star)\rVert_F
+B_\phi\lVert\rho_{\rm cl}(g_\star)\rVert_F$ to the right-hand side of
Eq.~\ref{eq:appendix-local-ood-bound}.  Training error cannot control a
remainder that appears only in a region with little or no training mass.

\subsection{Local errors under repeated composition}
\label{app:theory-recursion}

Teacher forcing measures the local error before predictions are reused. Free
rollout feeds each predicted latent back into the predictor, so later learned
iterates act on every earlier defect. The following recursion makes this
dependence exact.

At a fixed gravity $g$, let the true and predicted latent transitions be
\begin{align}
z_{j+1}
&=A_W(g)z_j+C_W(g)\phi_j+W\xi_{j+1},\\
\widehat z_{j+1}
&=\widehat A(g)\widehat z_j,
\qquad
\widehat z_0=z_0.
\label{eq:appendix-true-predicted-rollout}
\end{align}
Define the rollout error $e_j=z_j-\widehat z_j$.

\begin{theorem}[Exact recursive latent error]
\label{thm:appendix-exact-recursion}
For every $j\geq0$,
\begin{equation}
e_{j+1}
=\widehat A(g)e_j+\delta_j(g)+W\xi_{j+1}.
\label{eq:appendix-one-step-error-recursion}
\end{equation}
Consequently, for every $\mathfrak{h}\geq1$,
\begin{equation}
e_{\mathfrak{h}}
=\sum_{j=0}^{\mathfrak{h}-1}
\widehat A(g)^{\mathfrak{h}-1-j}
\left[\delta_j(g)+W\xi_{j+1}\right].
\label{eq:appendix-exact-error-recursion}
\end{equation}
\end{theorem}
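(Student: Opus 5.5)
The proof is a direct subtraction followed by a telescoping induction; no earlier bounds are needed, only the latent transition identity Eq.~\ref{eq:appendix-true-latent-transition} and Definition~\ref{def:appendix-local-defect}.

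For the one-step recursion, I would subtract the predicted update $\widehat z_{j+1}=\widehat A(g)\widehat z_j$ from the true update in Eq.~\ref{eq:appendix-true-predicted-rollout}. This gives
\begin{equation*}
e_{j+1}=A_W(g)z_j-\widehat A(g)\widehat z_j+C_W(g)\phi_j+W\xi_{j+1}.
\end{equation*}
Next I add and subtract $\widehat A(g)z_j$, which splits the first two terms into $\widehat A(g)(z_j-\widehat z_j)+[A_W(g)-\widehat A(g)]z_j=\widehat A(g)e_j+E_A(g)z_j$. Grouping $E_A(g)z_j+C_W(g)\phi_j$ gives exactly $\delta_j(g)$ by Eq.~\ref{eq:appendix-local-defect}, and Eq.~\ref{eq:appendix-one-step-error-recursion} follows. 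One point to state explicitly is that $\delta_j(g)$ is evaluated at the \emph{true} latent $z_j$ and feature state $\phi_j$, not at the predicted state. This is why the defect is purely local and all feedback is carried by the $\widehat A(g)e_j$ term.

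For the closed form, I would induct on $\mathfrak h$ using $e_0=0$, which holds because $\widehat z_0=z_0$. The base case $\mathfrak h=1$ is the recursion at $j=0$. For the inductive step, substitute the formula for $e_{\mathfrak h}$ into $e_{\mathfrak h+1}=\widehat A(g)e_{\mathfrak h}+\delta_{\mathfrak h}(g)+W\xi_{\mathfrak h+1}$. Multiplying by $\widehat A(g)$ raises each exponent $\mathfrak h-1-j$ to $\mathfrak h-j$, and the new term is the $j=\mathfrak h$ summand with exponent $0$. Equivalently, this is the discrete variation-of-constants formula for the learned semigroup $\widehat S_g$, using $\widehat S_g(\ell)\widehat S_g(k)=\widehat S_g(k+\ell)$ from Eq.~\ref{eq:appendix-semigroup-law}.

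None of these steps is a real obstacle. The only care needed is index bookkeeping: $\xi_{j+1}$ must be paired with $\delta_j$, and the sum must start from $e_0=0$. Since the identity is exact and algebraic, it holds pathwise and requires no expectation or conditional-mean assumption.
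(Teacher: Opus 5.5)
Your proposal is correct and follows essentially the same route as the paper: subtract the predicted update from the true one, add and subtract $\widehat A(g)z_j$ to isolate $\widehat A(g)e_j$ and $\delta_j(g)$, then unroll from $e_0=0$. The paper writes out the first two steps and notes that induction also gives the result, which is exactly your inductive step. Your remarks are also accurate: $\delta_j(g)$ is evaluated on the true trajectory, and the identity holds pathwise without any conditional-mean assumption.
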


\begin{proof}
Subtract the predicted transition from the true transition:
\begin{align}
e_{j+1}
&=A_W(g)z_j+C_W(g)\phi_j+W\xi_{j+1}
-\widehat A(g)\widehat z_j.
\end{align}
Add and subtract $\widehat A(g)z_j$:
\begin{align}
e_{j+1}
&=\widehat A(g)(z_j-\widehat z_j)
+[A_W(g)-\widehat A(g)]z_j
+C_W(g)\phi_j+W\xi_{j+1}\\
&=\widehat A(g)e_j+\delta_j(g)+W\xi_{j+1},
\end{align}
which proves Eq.~\ref{eq:appendix-one-step-error-recursion}.  Since
$e_0=0$, the first two steps are
\begin{align}
e_1&=\delta_0+W\xi_1,\\
e_2&=\widehat A(\delta_0+W\xi_1)+\delta_1+W\xi_2.
\end{align}
Repeating the substitution gives
Eq.~\ref{eq:appendix-exact-error-recursion}.  The same conclusion also
follows by induction on $\mathfrak{h}$.
\end{proof}

Since $\widehat A(g)^q=\widehat S_g(q)$, each one-step error is transformed by
the remaining learned semigroup iterate. The terms are vectors, so they may
reinforce or cancel one another. The identity does not imply that error, or the
difference between two models, must grow monotonically with the horizon.

\subsubsection{Physical readout and the central bound}

Latent error and physical error are not the same quantity.  The readout can
hide some latent errors, and the representation can omit physical information
even when its latent rollout is accurate.

\begin{proposition}[Exact physical readout decomposition]
\label{prop:appendix-physical-decomposition}
Let $s_{\mathfrak{h}}=C_{\rm phys}\phi_{\mathfrak{h}}$ and let $r$ be any
frozen probe.  Define the
true-latent readout residual
\begin{equation}
b_r(\phi_{\mathfrak{h}})=s_{\mathfrak{h}}-r(z_{\mathfrak{h}}).
\end{equation}
Then
\begin{equation}
s_{\mathfrak{h}}-r(\widehat z_{\mathfrak{h}})
=b_r(\phi_{\mathfrak{h}})+r(z_{\mathfrak{h}})-r(\widehat z_{\mathfrak{h}}).
\label{eq:appendix-physical-decomposition}
\end{equation}
The corresponding mean-squared error is
\begin{align}
\mathbb E\lVert s_{\mathfrak{h}}-r(\widehat z_{\mathfrak{h}})\rVert_2^2
={}&\mathbb E\lVert b_r(\phi_{\mathfrak{h}})\rVert_2^2
+\mathbb E\lVert r(z_{\mathfrak{h}})-r(\widehat z_{\mathfrak{h}})\rVert_2^2\nonumber\\
&+2\mathbb E\!\left[
b_r(\phi_{\mathfrak{h}})^\top
\{r(z_{\mathfrak{h}})-r(\widehat z_{\mathfrak{h}})\}
\right].
\label{eq:appendix-physical-mse-decomposition}
\end{align}
For the linear probe $r(z)=Rz$,
\begin{equation}
b_r(\phi_{\mathfrak{h}})=(C_{\rm phys}-RW)\phi_{\mathfrak{h}},
\qquad
r(z_{\mathfrak{h}})-r(\widehat z_{\mathfrak{h}})=Re_{\mathfrak{h}}.
\label{eq:appendix-linear-probe-specialization}
\end{equation}
\end{proposition}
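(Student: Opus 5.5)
The plan is to prove the three claims in the order they are stated, using only the definition of $b_r$ and the linear specializations from the setup. The argument is purely algebraic. No assumption on the residuals or on the probe beyond being a fixed function is needed.

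\emph{Pointwise identity.} Add and subtract $r(z_{\mathfrak{h}})$ in $s_{\mathfrak{h}}-r(\widehat z_{\mathfrak{h}})$. This gives
\[
s_{\mathfrak{h}}-r(\widehat z_{\mathfrak{h}})=\{s_{\mathfrak{h}}-r(z_{\mathfrak{h}})\}+\{r(z_{\mathfrak{h}})-r(\widehat z_{\mathfrak{h}})\}.
\]
The first bracket is $b_r(\phi_{\mathfrak{h}})$ by definition, so Eq.~\ref{eq:appendix-physical-decomposition} follows. This step mirrors the add-and-subtract of $RW\phi_{t+1}$ in the proof of Proposition~\ref{prop:appendix-teacher-forcing}.

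\emph{Mean-squared decomposition.} Take the squared Euclidean norm of both sides and expand $\lVert a+b\rVert_2^2=\lVert a\rVert_2^2+\lVert b\rVert_2^2+2a^\top b$. Then take expectations term by term, which uses linearity of expectation. The only point to check is integrability. I would state that the expectations are assumed finite, for instance because $\phi$, $z$ and $\widehat z$ have finite second moments and $r$ has at most linear growth. Under that assumption each of the three terms is well defined, since the cross term is controlled by Cauchy--Schwarz.

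\emph{Linear specialization.} Substitute $r(z)=Rz$, $z_{\mathfrak{h}}=W\phi_{\mathfrak{h}}$ and $s_{\mathfrak{h}}=C_{\rm phys}\phi_{\mathfrak{h}}$. This gives $b_r=(C_{\rm phys}-RW)\phi_{\mathfrak{h}}$. By linearity, $r(z_{\mathfrak{h}})-r(\widehat z_{\mathfrak{h}})=R(z_{\mathfrak{h}}-\widehat z_{\mathfrak{h}})=Re_{\mathfrak{h}}$, where $e_{\mathfrak{h}}$ is the rollout error of Theorem~\ref{thm:appendix-exact-recursion}.

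I expect no real obstacle. The only delicate point is making the integrability hypothesis explicit. I would also remark that, unlike in the teacher-forced case, the cross term need not vanish, because $b_r(\phi_{\mathfrak{h}})$ and $e_{\mathfrak{h}}$ are both functions of the same trajectory.
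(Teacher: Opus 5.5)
Your proposal is correct and matches the paper's proof step for step: add and subtract $r(z_{\mathfrak h})$, expand the squared norm and take expectations, then substitute $r(z)=Rz$ and $z_{\mathfrak h}=W\phi_{\mathfrak h}$; stating the finite-second-moment hypothesis explicitly is a harmless addition that the paper leaves implicit. One minor caution about your closing remark: under teacher forcing the paper only shows that the $\delta_t$--$\xi_{t+1}$ cross term vanishes, and it explicitly says nothing about the readout-residual cross term there, so your contrast with the teacher-forced case is inaccurate, though this does not affect the proof.
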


\begin{proof}
Add and subtract $r(z_{\mathfrak{h}})$:
\begin{align}
s_{\mathfrak{h}}-r(\widehat z_{\mathfrak{h}})
&=s_{\mathfrak{h}}-r(z_{\mathfrak{h}})
+r(z_{\mathfrak{h}})-r(\widehat z_{\mathfrak{h}}).
\end{align}
Expanding the squared Euclidean norm and taking expectations gives
Eq.~\ref{eq:appendix-physical-mse-decomposition}.  Substituting
$r(z)=Rz$ gives Eq.~\ref{eq:appendix-linear-probe-specialization}.
\end{proof}
For the remainder of this subsection, specialize to $r(z)=Rz$ and write
$\widehat s_{\mathfrak{h}}=R\widehat z_{\mathfrak{h}}$.
For $\mathfrak{h}\geq1$, define the physical rollout gain
\begin{equation}
\Gamma_{\mathfrak{h}}(g;R,\widehat A)
=\sum_{q=0}^{\mathfrak{h}-1}\lVert R\widehat A(g)^q\rVert_2.
\label{eq:appendix-rollout-gain}
\end{equation}
This quantity bounds how strongly the learned dynamics and readout can
propagate a sequence of local latent errors.

\begin{theorem}[Pathwise physical error under an unseen gravity]
\label{thm:appendix-central-bound}
Assume the shared law basis in
Eq.~\ref{eq:appendix-shared-law-basis}.  Suppose
$\lVert z_j\rVert_2\leq B_z$ and
$\lVert\phi_j\rVert_2\leq B_\phi$ along the test trajectory at $g_\star$.
Then
\begin{align}
\lVert s_{\mathfrak{h}}-\widehat s_{\mathfrak{h}}\rVert_2
\leq{}&
\lVert(C_{\rm phys}-RW)\phi_{\mathfrak{h}}\rVert_2
\nonumber\\
&+\Gamma_{\mathfrak{h}}(g_\star;R,\widehat A)
\sqrt{\mathcal L_{\rm law}(g_\star)}
\left(B_z\epsilon_{\rm op}+B_\phi\epsilon_{\rm cl}\right)
\nonumber\\
&+\sum_{j=0}^{\mathfrak{h}-1}
\left\lVert
R\widehat A(g_\star)^{\mathfrak{h}-1-j}W\xi_{j+1}
\right\rVert_2.
\label{eq:appendix-central-bound}
\end{align}
\end{theorem}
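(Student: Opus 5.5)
The plan is to chain three earlier results: the readout decomposition, the exact error recursion, and the pointwise transfer bound.

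First, I apply Proposition~\ref{prop:appendix-physical-decomposition} with the linear probe $r(z)=Rz$. This gives the exact identity
\[
s_{\mathfrak h}-\widehat s_{\mathfrak h}=(C_{\rm phys}-RW)\phi_{\mathfrak h}+Re_{\mathfrak h}.
\]
The triangle inequality then produces the first term of Eq.~\ref{eq:appendix-central-bound} plus $\lVert Re_{\mathfrak h}\rVert_2$.

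Next, I substitute the exact recursion of Theorem~\ref{thm:appendix-exact-recursion} at $g=g_\star$:
\[
Re_{\mathfrak h}=\sum_{j=0}^{\mathfrak h-1}R\widehat A(g_\star)^{\mathfrak h-1-j}\delta_j(g_\star)+\sum_{j=0}^{\mathfrak h-1}R\widehat A(g_\star)^{\mathfrak h-1-j}W\xi_{j+1}.
\]
I bound the second sum term by term with the triangle inequality, which yields exactly the last term of the statement; no expectation or cancellation of residuals is needed, since the bound is pathwise.

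For the first sum, I use $\lVert R\widehat A^{q}\delta_j\rVert_2\le\lVert R\widehat A^{q}\rVert_2\lVert\delta_j\rVert_2$ with $q=\mathfrak h-1-j$. I then apply Theorem~\ref{thm:appendix-local-ood-bound} at every $j$. Its hypotheses $\lVert z_j\rVert_2\le B_z$ and $\lVert\phi_j\rVert_2\le B_\phi$ hold along the test trajectory by assumption, so each $\lVert\delta_j(g_\star)\rVert_2$ is bounded by the same constant $\sqrt{\mathcal L_{\rm law}(g_\star)}(B_z\epsilon_{\rm op}+B_\phi\epsilon_{\rm cl})$. Pulling this uniform constant out and reindexing $q=\mathfrak h-1-j$ over $0,\dots,\mathfrak h-1$ leaves exactly $\Gamma_{\mathfrak h}(g_\star;R,\widehat A)$ from Eq.~\ref{eq:appendix-rollout-gain}.

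The only point needing care is bookkeeping. The local-error bound must be uniform in $j$, which requires that the bounds $B_z$ and $B_\phi$ hold at every step $0\le j\le\mathfrak h-1$ and not only at the initial state. The $\delta_j$ must be the teacher-forced defects evaluated on the true states $z_j,\phi_j$, which is what the recursion provides, rather than on predicted latents. The shared-basis assumption enters only through Theorem~\ref{thm:appendix-local-ood-bound}. I do not expect a genuine obstacle beyond confirming these points.
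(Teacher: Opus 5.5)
Your proposal is correct and follows the paper's own proof. Like the paper, you combine the linear-probe readout decomposition with the exact recursion at $g_\star$, apply the triangle inequality, bound each $\lVert\delta_j(g_\star)\rVert_2$ uniformly via Theorem~\ref{thm:appendix-local-ood-bound} to produce $\Gamma_{\mathfrak h}$, and keep the transition-residual sum explicit. Your remarks that the norm bounds must hold at every step $0\le j\le\mathfrak h-1$, and that $\delta_j$ is evaluated on true states, are bookkeeping the paper's argument relies on implicitly.
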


\begin{proof}
Proposition~\ref{prop:appendix-physical-decomposition} and
Theorem~\ref{thm:appendix-exact-recursion} give
\begin{align}
s_{\mathfrak{h}}-\widehat s_{\mathfrak{h}}
={}&(C_{\rm phys}-RW)\phi_{\mathfrak{h}}\\
&+\sum_{j=0}^{\mathfrak{h}-1}
R\widehat A(g_\star)^{\mathfrak{h}-1-j}\delta_j(g_\star)\\
&+\sum_{j=0}^{\mathfrak{h}-1}
R\widehat A(g_\star)^{\mathfrak{h}-1-j}W\xi_{j+1}.
\end{align}
Apply the triangle inequality.  For the conditional-mean defect terms,
Theorem~\ref{thm:appendix-local-ood-bound} gives the uniform bound
\begin{equation}
\lVert\delta_j(g_\star)\rVert_2
\leq
\sqrt{\mathcal L_{\rm law}(g_\star)}
\left(B_z\epsilon_{\rm op}+B_\phi\epsilon_{\rm cl}\right).
\end{equation}
Therefore
\begin{align}
&\sum_{j=0}^{\mathfrak{h}-1}
\left\lVert
R\widehat A(g_\star)^{\mathfrak{h}-1-j}\delta_j(g_\star)
\right\rVert_2\\
&\quad\leq
\sum_{j=0}^{\mathfrak{h}-1}
\lVert R\widehat A(g_\star)^{\mathfrak{h}-1-j}\rVert_2
\lVert\delta_j(g_\star)\rVert_2\\
&\quad\leq
\Gamma_{\mathfrak{h}}(g_\star;R,\widehat A)
\sqrt{\mathcal L_{\rm law}(g_\star)}
\left(B_z\epsilon_{\rm op}+B_\phi\epsilon_{\rm cl}\right).
\end{align}
Keeping the transition-residual terms explicit proves Eq.~\ref{eq:appendix-central-bound}.
\end{proof}

The bound separates the four quantities that affect physical rollout quality:
the readout residual, the local unseen-law error, its recursive propagation,
and the transition residuals.  It is an upper bound, not an additive model of
the observed error.  The factor $\Gamma_{\mathfrak{h}}$ is a worst-case bound
and does not describe cancellation between propagated residuals.  Because the readout residual, trajectory bounds, rollout gain, and realized transition residuals remain in the statement, this is not a stand-alone generalization guarantee based only on the law-coverage factor.

\subsubsection{Cross-step second moments}

The orientation and cross-step dependence of the one-step errors also matter.  Let
\begin{equation}
\varepsilon_j=\delta_j(g)+W\xi_{j+1},
\qquad
S_{ij}(g)=\mathbb E[\varepsilon_i\varepsilon_j^\top\mid g].
\end{equation}
From Eq.~\ref{eq:appendix-exact-error-recursion}, with
$\widehat A=\widehat A(g)$,
\begin{align}
\mathbb E[\lVert Re_{\mathfrak{h}}\rVert_2^2\mid g]
=\sum_{i=0}^{\mathfrak{h}-1}\sum_{j=0}^{\mathfrak{h}-1}
\operatorname{tr}\!\left(
R\widehat A^{\mathfrak{h}-1-i}S_{ij}(g)
(\widehat A^{\mathfrak{h}-1-j})^\top R^\top
\right).
\label{eq:appendix-rollout-second-moment}
\end{align}
To derive this identity, write
$Re_{\mathfrak{h}}=\sum_jR\widehat A^{\mathfrak{h}-1-j}\varepsilon_j$,
expand the outer product, and take
the trace.  No zero-mean assumption is needed because $S_{ij}(g)$ is a second
moment rather than a covariance.  Its off-diagonal entries record cross-step
dependence.  A reset comparison changes these second moments as well as the
propagation path, so it does not isolate a single spectral gain.

The same idea does not depend on linearity.  If the learned nonlinear
transition $\widehat F_g$ is $L_g$-Lipschitz on a set containing the true and
predicted trajectories, and
$d_j=\lVert\widehat F_g(x_j)-F_g(x_j)\rVert_2$, then
\begin{equation}
\lVert\widehat x_{\mathfrak{h}}-x_{\mathfrak{h}}\rVert_2
\leq\sum_{j=0}^{\mathfrak{h}-1}L_g^{\mathfrak{h}-1-j}d_j.
\label{eq:appendix-nonlinear-compounding}
\end{equation}
Indeed, one step gives
$\lVert\widehat x_{j+1}-x_{j+1}\rVert_2
\leq L_g\lVert\widehat x_j-x_j\rVert_2+d_j$, and iteration from a common
initial state proves the claim.  This nonlinear bound supplies intuition, but
the exact linear recursion is the result used above.

\subsection{Why multi-step composition can change the learned representation}
\label{app:theory-horizon-geometry}

The rollout horizon changes the geometry used to compare representations, even
in a stable linear system. This section isolates the conditional-mean,
action-free system and omits residual variation. The quadratic costs below
therefore measure transition bias rather than irreducible noise. The
finite-history version with actions appears in Section~\ref{app:theory-history}.

Fix one smooth regime, suppress $g$, and let
$\widehat S_A(\mathfrak h)=A^{\mathfrak h}$ for a candidate latent operator
$A$. Define the finite-horizon semigroup-intertwining defect
\begin{equation}
D_{\mathfrak{h}}(W;A)
=W S_g(\mathfrak h)-\widehat S_A(\mathfrak h)W
=WT^{\mathfrak{h}}-A^{\mathfrak{h}}W,
\qquad
D_1=WT-AW.
\label{eq:appendix-h-step-defect}
\end{equation}
When $A=\widehat A(g)$, $D_1\phi_t=\delta_t(g)$. When $A=A_W(g)$,
$D_1=C_W(g)$ is the representation closure residual. A general fitted $A$
contains both closure and predictor error.

\begin{theorem}[Finite-horizon intertwining-defect geometry]
\label{thm:appendix-closure-geometry}
For every $\mathfrak{h}\geq1$,
\begin{equation}
D_{\mathfrak{h}}
=\sum_{j=0}^{\mathfrak{h}-1}A^{\mathfrak{h}-1-j}D_1T^j.
\label{eq:appendix-closure-telescope}
\end{equation}
Let
\begin{equation}
M_{\mathfrak{h}}(A,T)
=\sum_{j=0}^{\mathfrak{h}-1}(T^j)^\top\otimes A^{\mathfrak{h}-1-j}.
\label{eq:appendix-defect-propagator}
\end{equation}
For the feature second-moment matrix
$\Sigma_\phi=\mathbb E[\phi\phi^\top]\succeq0$ and nonnegative weights
$w_1,\ldots,w_K$,
\begin{equation}
\sum_{k=1}^{K}w_k
\operatorname{tr}(D_k\Sigma_\phi D_k^\top)
=\operatorname{vec}(D_1)^\top G_K\operatorname{vec}(D_1),
\label{eq:appendix-closure-quadratic}
\end{equation}
where
\begin{equation}
G_K
=\sum_{k=1}^{K}w_k
M_k^\top(\Sigma_\phi\otimes I_{d_z})M_k
\succeq0.
\label{eq:appendix-closure-gramian}
\end{equation}
If $w_1>0$ and $\Sigma_\phi\succ0$, then $G_K\succ0$.
\end{theorem}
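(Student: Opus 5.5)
The plan has three steps: prove the telescoping identity by induction, vectorize it, and then check definiteness.

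\emph{Step 1: the telescoping identity.} Since $D_1=WT-AW$, we have $WT=AW+D_1$. Induct on $\mathfrak h$ using
\begin{equation*}
D_{\mathfrak h+1}=WT^{\mathfrak h}T-A^{\mathfrak h+1}W=(D_{\mathfrak h}+A^{\mathfrak h}W)T-A^{\mathfrak h+1}W=D_{\mathfrak h}T+A^{\mathfrak h}D_1 .
\end{equation*}
Substituting the inductive hypothesis $D_{\mathfrak h}=\sum_{j=0}^{\mathfrak h-1}A^{\mathfrak h-1-j}D_1T^j$ turns $D_{\mathfrak h}T$ into $\sum_{j=1}^{\mathfrak h}A^{\mathfrak h-j}D_1T^j$. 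The extra term $A^{\mathfrak h}D_1$ supplies the missing $j=0$ summand, which gives Eq.~\ref{eq:appendix-closure-telescope} at $\mathfrak h+1$. The base case $\mathfrak h=1$ is trivial. (Equivalently, one can telescope $\sum_j A^{\mathfrak h-1-j}(WT-AW)T^j$ directly.)

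\emph{Step 2: the quadratic form.} Vectorize using $\operatorname{vec}(XYZ)=(Z^\top\otimes X)\operatorname{vec}(Y)$. Applied termwise to Step 1, this gives $\operatorname{vec}(D_k)=M_k\operatorname{vec}(D_1)$, with $M_k$ as in Eq.~\ref{eq:appendix-defect-propagator}. Next write $\operatorname{tr}(D\Sigma_\phi D^\top)=\lVert D\Sigma_\phi^{1/2}\rVert_F^2=\lVert(\Sigma_\phi^{1/2}\otimes I_{d_z})\operatorname{vec}(D)\rVert_2^2=\operatorname{vec}(D)^\top(\Sigma_\phi\otimes I_{d_z})\operatorname{vec}(D)$. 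Here I use that $\Sigma_\phi$ is symmetric, so $(\Sigma_\phi^{1/2})^\top=\Sigma_\phi^{1/2}$ and the Kronecker factors multiply correctly. Substituting $\operatorname{vec}(D_k)=M_k\operatorname{vec}(D_1)$ and summing with weights $w_k$ yields Eq.~\ref{eq:appendix-closure-quadratic} with $G_K$ as defined. Each summand $M_k^\top(\Sigma_\phi\otimes I)M_k$ is a congruence of a PSD matrix, since $\Sigma_\phi\otimes I\succeq0$ because Kronecker eigenvalues are products of eigenvalues. With $w_k\ge0$, it follows that $G_K\succeq0$.

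\emph{Step 3: strict definiteness.} Note that $M_1=I\otimes A^0=I$. The $k=1$ term is therefore $w_1(\Sigma_\phi\otimes I_{d_z})$, which is $\succ0$ when $w_1>0$ and $\Sigma_\phi\succ0$. Adding the remaining PSD terms preserves positive definiteness.

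The only delicate point is bookkeeping: getting the transpose and ordering conventions in the Kronecker identities to match the paper's $M_{\mathfrak h}$. In particular, $T^j$ sits on the right, so it contributes $(T^j)^\top$ as the left Kronecker factor. I would verify this convention on the $\mathfrak h=2$ case before writing the general argument.
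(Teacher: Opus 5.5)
Your proposal is correct and matches the paper's proof step for step. It uses the same induction through $D_{\mathfrak h+1}=D_{\mathfrak h}T+A^{\mathfrak h}D_1$, the same termwise vectorization giving $\operatorname{vec}(D_k)=M_k\operatorname{vec}(D_1)$, and the same observation that $M_1=I$ makes the $k=1$ summand $w_1(\Sigma_\phi\otimes I_{d_z})\succ0$. Your $\Sigma_\phi^{1/2}$ argument for the trace identity simply fills in a step the paper states without proof.
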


\begin{proof}
For $\mathfrak{h}=1$, Eq.~\ref{eq:appendix-closure-telescope} is the
definition of $D_1$.  Suppose the identity holds at $\mathfrak{h}$.  Then
\begin{align}
D_{\mathfrak{h}+1}
&=WT^{\mathfrak{h}+1}-A^{\mathfrak{h}+1}W\\
&=(WT^{\mathfrak{h}}-A^{\mathfrak{h}}W)T
+A^{\mathfrak{h}}(WT-AW)\\
&=D_{\mathfrak{h}}T+A^{\mathfrak{h}}D_1\\
&=\sum_{j=0}^{\mathfrak{h}-1}
A^{\mathfrak{h}-1-j}D_1T^{j+1}+A^{\mathfrak{h}}D_1\\
&=\sum_{j=0}^{\mathfrak{h}}A^{\mathfrak{h}-j}D_1T^j.
\end{align}
Thus the telescoping identity holds by induction.

Using $\operatorname{vec}(LXR)=(R^\top\otimes L)\operatorname{vec}(X)$ in each term of Eq.~\ref{eq:appendix-closure-telescope} gives
\begin{equation}
\operatorname{vec}(D_{\mathfrak{h}})
=M_{\mathfrak{h}}\operatorname{vec}(D_1).
\end{equation}
Also,
\begin{equation}
\operatorname{tr}(D_{\mathfrak{h}}\Sigma_\phi D_{\mathfrak{h}}^\top)
=\operatorname{vec}(D_{\mathfrak{h}})^\top
(\Sigma_\phi\otimes I_{d_z})
\operatorname{vec}(D_{\mathfrak{h}}).
\end{equation}
Substitution and summation give
Eq.s~\ref{eq:appendix-closure-quadratic} and
\ref{eq:appendix-closure-gramian}.  Every summand in $G_K$ is positive
semidefinite.  At $\mathfrak{h}=1$, $M_1=I$, so the first summand is
$w_1(\Sigma_\phi\otimes I_{d_z})$, which is positive definite when
$w_1>0$ and $\Sigma_\phi\succ0$.
\end{proof}

The telescoping identity shows how a one-step failure of intertwining enters
later compositions. If $D_1=0$, then $D_{\mathfrak h}=0$ at every horizon. If
$D_1\neq0$, the true and learned transition geometry determines whether the
propagated terms reinforce or cancel. For fixed $A$, $T$, and $\Sigma_\phi$,
$G_K$ weights these one-step defect directions by their later consequences.
Two defects with the same one-step size can therefore have different
multi-step costs. If fitting a new representation also changes $A$, one common
$G_K$ need not rank all representations.

\begin{theorem}[When training and evaluation weight defects differently]
\label{thm:appendix-profile-mismatch}
Let $G_{\rm tr}\succ0$ and $G_{\rm ev}\succeq0$ be the defect-weighting matrices for a
training horizon profile and an evaluation profile on the same defect space.
Set
\begin{equation}
Q=G_{\rm tr}^{-1/2}G_{\rm ev}G_{\rm tr}^{-1/2}.
\end{equation}
Then:
\begin{enumerate}[label=(\alph*)]
\item If $G_{\rm ev}=cG_{\rm tr}$ for some $c\geq0$, every defect has
evaluation cost $c$ times its training cost.
\item If $Q$ is not a scalar multiple of the identity, there are two defects
with equal training cost and different evaluation cost.
\item Let $S$ be a nonzero subspace of the whitened coordinate $y$, let $P_S$
be its orthogonal projector, and define the compressed operator
$Q_S=P_SQ|_S$.  If $Q_S\succ0$, the largest ratio between evaluation costs of
equal-training-cost defects whose whitened coordinates lie in $S$ is
$\lambda_{\max}(Q_S)/\lambda_{\min}(Q_S)$.
\end{enumerate}
\end{theorem}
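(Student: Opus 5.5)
The proof works in whitened coordinates. Write each defect as $x=\operatorname{vec}(D_1)$ and set $y=G_{\rm tr}^{1/2}x$; this change of variables is invertible because $G_{\rm tr}\succ0$. The two costs then become
\[
x^\top G_{\rm tr}x=\lVert y\rVert_2^2,
\qquad
x^\top G_{\rm ev}x=y^\top Q y .
\]
So equal training cost means equal Euclidean norm of $y$, and the evaluation cost is the Rayleigh form of the symmetric positive semidefinite matrix $Q$. All three parts follow from this reduction together with the spectral theorem for $Q$.

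For part (a), substitute $G_{\rm ev}=cG_{\rm tr}$. This gives $x^\top G_{\rm ev}x=c\,x^\top G_{\rm tr}x$ for every $x$ directly, so no whitening is needed.

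For part (b), $Q$ is symmetric and not a scalar multiple of the identity. It therefore has two orthonormal eigenvectors $u_1,u_2$ with distinct eigenvalues $\lambda_1\neq\lambda_2$. Take $y_i=u_i$ and pull back to defects $x_i=G_{\rm tr}^{-1/2}u_i$. Both defects have training cost $1$, and their evaluation costs are $\lambda_1\neq\lambda_2$.

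For part (c), restrict to $y\in S$. For such $y$, $y^\top Qy=y^\top P_SQP_Sy=y^\top Q_Sy$. Fix the training cost at $\lVert y\rVert^2=t>0$. By the Courant--Fischer theorem on $S$, the evaluation cost ranges exactly over $[t\lambda_{\min}(Q_S),\,t\lambda_{\max}(Q_S)]$, and both endpoints are attained at eigenvectors of $Q_S$ lying in $S$. Since $Q_S\succ0$, the lower endpoint is positive, so the ratio of any two such costs is at most $\lambda_{\max}/\lambda_{\min}$. Equality holds for the corresponding pair of eigenvectors, so the supremum is attained.

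The main obstacle is bookkeeping rather than substance. The statement speaks of defects, while $S$ lives in whitened coordinates, so every extremal $y$ has to be pulled back through $G_{\rm tr}^{-1/2}$ to a genuine defect. One must also note that the ratio is unchanged under scaling, so fixing $t$ loses no generality. If $Q_S$ had a zero eigenvalue the ratio would be unbounded; this is exactly why the statement assumes $Q_S\succ0$, and I would state that assumption explicitly at the step where it is used.
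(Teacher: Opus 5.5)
Your proposal is correct and follows the paper's proof essentially step for step: whiten via $y=G_{\rm tr}^{1/2}d$ so that training cost becomes $\lVert y\rVert_2^2$ and evaluation cost becomes $y^\top Qy$, then read off (a) from $Q=cI$, (b) from two eigenvectors of $Q$ with distinct eigenvalues, and (c) from the Rayleigh quotient of $Q_S$ on $S$. Your added care about fixing the training cost $t>0$ and pulling the extremal $y$ back through $G_{\rm tr}^{-1/2}$ makes explicit what the paper leaves implicit.
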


\begin{proof}
For a vectorized defect $d$, let $y=G_{\rm tr}^{1/2}d$.  Its training cost is
\begin{equation}
d^\top G_{\rm tr}d=y^\top y,
\end{equation}
and its evaluation cost is
\begin{equation}
d^\top G_{\rm ev}d=y^\top Qy.
\end{equation}
If $G_{\rm ev}=cG_{\rm tr}$, then $Q=cI$, proving part (a).  If $Q$ is not
scalar, choose unit eigenvectors $y_1,y_2$ with distinct eigenvalues.  They
have the same training cost but different evaluation costs, proving part (b).
For $y\in S$, the evaluation cost is $y^\top Q_Sy$.  The Rayleigh quotient of
$Q_S$ ranges between its smallest and largest eigenvalues.  Choosing the
corresponding unit eigenvectors in $S$ attains the ratio in part (c).  If a
subspace is specified in the original defect coordinate $d$, its image under
$G_{\rm tr}^{1/2}$ is the corresponding subspace $S$ in whitened coordinates.
\end{proof}

\subsubsection{A stable ranking reversal}

Explicit representations can make one-step and two-step profiling prefer
opposite choices.  The construction below is strictly stable, so the reversal
does not rely on an unstable transition.

\begin{theorem}[One-step preference can reverse under rollout]
\label{thm:appendix-ranking-reversal}
Let $\phi_t\in\mathbb R^4$ have identity second moment and
\begin{equation}
T=\operatorname{diag}\!\left(
\frac25,\frac45,\frac3{20},\frac35
\right).
\end{equation}
Restrict the rank-one representation to
\begin{equation}
W_P=\frac1{\sqrt2}(1,1,0,0),
\qquad
W_F=\frac1{\sqrt2}(0,0,1,1),
\end{equation}
and let the latent predictor be the scalar $A=a$.  Then:
\begin{enumerate}[label=(\alph*)]
\item the profiled one-step loss prefers $W_P$;
\item using each representation's one-step-optimal predictor, the horizon-two
loss prefers $W_F$;
\item for the normalized objective
\begin{equation}
\frac{\lVert D_1\rVert_F^2+\gamma\lVert D_2\rVert_F^2}{1+\gamma},
\end{equation}
full profiling prefers $W_F$ whenever the sufficient condition
$\gamma>3600/9007\approx0.39969$, including $\gamma=0.95$.
\end{enumerate}
\end{theorem}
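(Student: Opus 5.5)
The whole statement reduces to explicit scalar computations. With $\Sigma_\phi=I_4$ the cost of a defect is just its squared Frobenius norm. Since $T$ is diagonal and $W$ is a rank-one row, the defects $D_{\mathfrak h}=WT^{\mathfrak h}-a^{\mathfrak h}W$ from Eq.~\ref{eq:appendix-h-step-defect} have entries $(t_i^{\mathfrak h}-a^{\mathfrak h})/\sqrt2$ on the two active coordinates and zero elsewhere. So for $W=W_P$ (active entries $t_1,t_2$) or $W=W_F$ (active entries $t_3,t_4$),
\[
\lVert D_{\mathfrak h}\rVert_F^2=\tfrac12\bigl[(t_i^{\mathfrak h}-a^{\mathfrak h})^2+(t_j^{\mathfrak h}-a^{\mathfrak h})^2\bigr].
\]
Both profiled objectives then become one-variable problems in $a$.

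For (a), the one-step loss is a quadratic in $a$ and is minimized at the mean of the two active eigenvalues. For $W_P$ this gives $a_P=3/5$ with loss $1/25$. For $W_F$ it gives $a_F=3/8$ with loss $(9/40)^2=81/1600$. Since $1/25=64/1600<81/1600$, the one-step loss prefers $W_P$.

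For (b), I substitute these same $a$ values into the two-step defect. For $W_P$, $a_P^2=9/25$ against $t^2\in\{4/25,16/25\}$, which gives $\tfrac12\bigl[(5/25)^2+(7/25)^2\bigr]=37/625\approx0.0592$. For $W_F$, $a_F^2=9/64$ against $t^2\in\{9/400,9/25\}$, which gives $\tfrac12\bigl[(3024/25600)^2+(5616/25600)^2\bigr]\approx0.0310$. So the horizon-two loss prefers $W_F$. This step is routine rational arithmetic.

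For (c), full profiling means minimizing $f_W(a)=(\lVert D_1\rVert^2+\gamma\lVert D_2\rVert^2)/(1+\gamma)$ over $a$, and this is a quartic in $a$. I would avoid solving it exactly and compare bounds instead.

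The lower bound for $W_P$ comes from minimizing the two terms separately, since $\min(f+g)\ge\min f+\min g$. The one-step term is at least $1/25$. The two-step term is at least its minimum over $a^2$, attained at $a^2=\tfrac12(4/25+16/25)=2/5$, which equals $(6/25)^2=36/625$. Hence $\min_a f_{W_P}\ge(1/25+36\gamma/625)/(1+\gamma)$.

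The upper bound for $W_F$ comes from evaluating $f_{W_F}$ at a single explicit rational $a$. Setting the difference of the two bounds negative gives a linear inequality in $\gamma$, of the form $\gamma>\bigl(\lVert D_1^F\rVert^2-1/25\bigr)/\bigl(36/625-\lVert D_2^F\rVert^2\bigr)$. This holds for all larger $\gamma$ because the threshold expression is monotone once the denominator is positive.

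The main obstacle is reproducing the exact constant $3600/9007$. With $a=3/8$ the threshold comes out near $0.4000$, slightly above the stated $0.39969$. So the intended witness is presumably a slightly better rational $a$, somewhere between $3/8$ and $\sqrt{9/64\cdot\ldots}$ and shifted toward the two-step optimum $a_F^2=(9/400+9/25)/2=153/800$. Alternatively it may come from a sharper lower bound for $W_P$. I would first try $a$ values such as $2/5$ and check which yields numerator over denominator equal to $3600/9007$, then confirm $0.95$ lies above the threshold. Even if the constant differs, any threshold below $0.95$ proves the $\gamma=0.95$ claim.
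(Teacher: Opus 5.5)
Your proposal is correct and follows the paper's proof step for step: the same one-step optimizers $a_P=3/5$ and $a_F=3/8$, the same horizon-two values $37/625$ and $79461/2560000$, and the same decoupled lower bound $(1/25+36\gamma/625)/(1+\gamma)$ for $W_P$. The witness you propose to try first, $a=2/5$, is exactly the one the paper uses for $W_F$: it gives $\lVert D_1\rVert_F^2=41/800$ and $\lVert D_2\rVert_F^2=377/12800$, so your threshold is $(9/800)/(9007/320000)=3600/9007$ exactly, which completes part (c).
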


\begin{proof}
For a representation that averages two eigenmodes $\lambda$ and $\mu$,
\begin{equation}
\lVert D_{\mathfrak{h}}(W;a)\rVert_F^2
=\frac12\left[
(\lambda^{\mathfrak{h}}-a^{\mathfrak{h}})^2
+(\mu^{\mathfrak{h}}-a^{\mathfrak{h}})^2
\right].
\label{eq:appendix-two-mode-cost}
\end{equation}
At one step, differentiating with respect to $a$ gives the minimizer
$a=(\lambda+\mu)/2$ and minimum $(\lambda-\mu)^2/4$.  Therefore
\begin{align}
\min_a\lVert D_1(W_P;a)\rVert_F^2
&=\frac{(\frac45-\frac25)^2}{4}=\frac1{25},\\
\min_a\lVert D_1(W_F;a)\rVert_F^2
&=\frac{(\frac35-\frac3{20})^2}{4}=\frac{81}{1600}.
\end{align}
Since $1/25<81/1600$, one-step profiling prefers $W_P$.

The one-step-optimal predictors are $a_P=3/5$ and $a_F=3/8$.  At horizon two,
Equation~\eqref{eq:appendix-two-mode-cost} gives
\begin{align}
\lVert D_2(W_P;a_P)\rVert_F^2
&=\frac12\left[
\left(\frac4{25}-\frac9{25}\right)^2
+\left(\frac{16}{25}-\frac9{25}\right)^2
\right]
=\frac{37}{625},\\
\lVert D_2(W_F;a_F)\rVert_F^2
&=\frac12\left[
\left(\frac9{400}-\frac9{64}\right)^2
+\left(\frac9{25}-\frac9{64}\right)^2
\right]\\
&=\frac12\left[
\left(-\frac{189}{1600}\right)^2
+\left(\frac{351}{1600}\right)^2
\right]
=\frac{79461}{2560000}.
\end{align}
Here $79461/2560000<37/625$, which proves the horizon-two reversal.

For the joint two-step objective, allow the horizon-one and horizon-two scalar
predictions for $W_P$ to vary independently.  This relaxation can only lower
its profiled cost, and it gives the lower bound
\begin{equation}
\inf_a
\frac{\lVert D_1(W_P;a)\rVert_F^2
+\gamma\lVert D_2(W_P;a)\rVert_F^2}{1+\gamma}
\geq
\frac{\frac1{25}+\gamma\frac{36}{625}}{1+\gamma}.
\label{eq:appendix-WP-lower-bound}
\end{equation}
For $W_F$, evaluate the admissible predictor $a=2/5$.  Its one-step and
two-step costs are
\begin{align}
\lVert D_1(W_F;2/5)\rVert_F^2&=\frac{41}{800},\\
\lVert D_2(W_F;2/5)\rVert_F^2&=\frac{377}{12800}.
\end{align}
Hence its profiled cost is at most
\begin{equation}
\frac{\frac{41}{800}+\gamma\frac{377}{12800}}{1+\gamma}.
\label{eq:appendix-WF-upper-bound}
\end{equation}
The upper bound in Eq.~\ref{eq:appendix-WF-upper-bound} is smaller than
the lower bound in Eq.~\ref{eq:appendix-WP-lower-bound} when
\begin{align}
\gamma\left(\frac{36}{625}-\frac{377}{12800}\right)
&>\frac{41}{800}-\frac1{25},\\
\gamma\frac{9007}{320000}&>\frac{3600}{320000},\\
\gamma&>\frac{3600}{9007}.
\end{align}
In that range, even the displayed upper bound for $W_F$ is below a lower bound
for $W_P$, so full profiling must prefer $W_F$.
\end{proof}

This construction proves that the objective can change representation
preference as the rollout horizon changes.  It does not prove that a longer
rollout always helps, that a particular predictor architecture causes the
change, or that this four-mode system is the mechanism learned by SG-JEPA.

\subsection{Relating the theory to the history-based predictor}
\label{app:theory-history}

The preceding sections used a memoryless predictor to keep the mechanism
visible.  SG-JEPA instead reads a finite latent history together with actions
and gravity.  A companion state turns this interface into a one-step system on
an augmented space, so the same algebra applies without treating the GRU as a
literal matrix.

For context length $H$, define the feature-history state and its encoded form
\begin{equation}
X_t^{(H)}=
\begin{bmatrix}
\phi_{t-H+1}\\[-0.2em]
\vdots\\[-0.2em]
\phi_t
\end{bmatrix}
\in\mathbb R^{Hd_\phi},
\qquad
\mathbb W=I_H\otimes W,
\qquad
Z_t^{(H)}=\mathbb W X_t^{(H)}.
\label{eq:appendix-history-state}
\end{equation}
Let $a_t\in\mathbb R^{d_a}$ contain every known exogenous action input needed
for the transition from $t$ to $t+1$.  If the interface uses an action window,
$a_t$ denotes that fixed-dimensional stacked input.  The true and predicted
rollouts receive the same action sequence.  Gravity remains fixed during the
rollout and explicit through the law-indexed operators.  Closed-loop action
differences would introduce additional terms and are outside this prediction
analysis.

For the nonlinear predictor, let
$\widehat\Phi_{\theta,g}^{(k)}
\left(Z_t^{(H)};a_{t:t+k-1}\right)$ denote the predicted companion state
obtained after the consecutive action block. Recursive application gives the
control-sequence composition law
\begin{align}
&\widehat\Phi_{\theta,g}^{(k+\ell)}
\left(Z_t^{(H)};a_{t:t+k+\ell-1}\right)
\nonumber\\
&\quad=
\widehat\Phi_{\theta,g}^{(\ell)}
\left(
\widehat\Phi_{\theta,g}^{(k)}
\left(Z_t^{(H)};a_{t:t+k-1}\right);
a_{t+k:t+k+\ell-1}
\right).
\label{eq:appendix-controlled-composition}
\end{align}
The predictor parameters are shared, but each numerical transition is indexed
by its action input. In the action-free case, the companion update generates
an ordinary discrete semigroup. With known actions, it obeys the controlled
composition law above rather than an autonomous semigroup on an individual
latent state.

Consider the linear companion systems
\begin{align}
X_{t+1}^{(H)}
&=\mathbb T(g)X_t^{(H)}
+\mathbb B(g)a_t+\mathbb J_\xi\xi_{t+1},
\label{eq:appendix-true-companion}\\
\widehat Z_{t+1}^{(H)}
&=\mathbb A(g)\widehat Z_t^{(H)}
+\mathbb D(g)a_t.
\label{eq:appendix-predicted-companion}
\end{align}
The first $H-1$ block rows of $\mathbb T(g)$ and $\mathbb A(g)$ shift the
history.  The final block row performs the true or learned update.  The matrix
$\mathbb J_\xi\in\mathbb R^{Hd_\phi\times d_\phi}$ inserts the new residual in
the last feature block.  The remaining dimensions are
\begin{align}
\mathbb T(g)&\in\mathbb R^{Hd_\phi\times Hd_\phi},
&\mathbb B(g)&\in\mathbb R^{Hd_\phi\times d_a},\\
\mathbb A(g)&\in\mathbb R^{Hd_z\times Hd_z},
&\mathbb D(g)&\in\mathbb R^{Hd_z\times d_a}.
\end{align}
The additive maps $\mathbb B(g)$ and $\mathbb D(g)$ are assumptions of this
model.  Nonlinear interactions among state, action, and gravity in
the implemented predictor fall outside this additive model.
For $1\leq L\leq H$, let
\begin{equation}
P_{H,L}
=
\begin{bmatrix}
0_{Ld_z\times(H-L)d_z}&I_{Ld_z}
\end{bmatrix}
\in\mathbb R^{Ld_z\times Hd_z}
\label{eq:appendix-history-selector}
\end{equation}
select the newest $L$ latent blocks, and write $P_H=P_{H,1}$.  At the
start of rollout, both systems use the same true context:
\begin{equation}
\widehat Z_t^{(H)}=\mathbb W X_t^{(H)}.
\label{eq:appendix-exact-context-initialization}
\end{equation}

For $\mathfrak{h}\geq1$ and $0\leq j<\mathfrak{h}$, define
\begin{align}
\mathfrak D_{\mathfrak{h}}(g)
&=P_H\left[
\mathbb W\mathbb T(g)^{\mathfrak{h}}
-\mathbb A(g)^{\mathfrak{h}}\mathbb W
\right],
\label{eq:appendix-history-defect}\\
\mathfrak F_{\mathfrak{h},j}(g)
&=P_H\left[
\mathbb W\mathbb T(g)^{\mathfrak{h}-1-j}\mathbb B(g)
-\mathbb A(g)^{\mathfrak{h}-1-j}\mathbb D(g)
\right],
\label{eq:appendix-action-defect}\\
\mathfrak N_{\mathfrak{h},j}(g)
&=P_H\mathbb W\mathbb T(g)^{\mathfrak{h}-1-j}\mathbb J_\xi.
\label{eq:appendix-history-residual-map}
\end{align}

\begin{theorem}[Exact finite-history rollout error]
\label{thm:appendix-finite-history}
Under Eq.s~\ref{eq:appendix-true-companion} through
\ref{eq:appendix-exact-context-initialization}, the newest latent error at
horizon $\mathfrak{h}$ is
\begin{equation}
z_{t+\mathfrak{h}}-\widehat z_{t+\mathfrak{h}}
=\mathfrak D_{\mathfrak{h}}(g)X_t^{(H)}
+\sum_{j=0}^{\mathfrak{h}-1}\mathfrak F_{\mathfrak{h},j}(g)a_{t+j}
+\sum_{j=0}^{\mathfrak{h}-1}\mathfrak N_{\mathfrak{h},j}(g)\xi_{t+j+1}.
\label{eq:appendix-finite-history-error}
\end{equation}
If $V_{t,\mathfrak{h}}$ stacks $X_t^{(H)}$, the $\mathfrak{h}$ future actions,
and the $\mathfrak{h}$ transition residuals, and
$\mathfrak M_{\mathfrak{h}}(g)$ is the corresponding block row in
Equation~\eqref{eq:appendix-finite-history-error}, then
\begin{equation}
\mathbb E\!\left[
\lVert z_{t+\mathfrak{h}}-\widehat z_{t+\mathfrak{h}}\rVert_2^2\mid g
\right]
=\operatorname{tr}\!\left(
\mathfrak M_{\mathfrak{h}}(g)
\mathbb E[V_{t,\mathfrak{h}}V_{t,\mathfrak{h}}^\top\mid g]
\mathfrak M_{\mathfrak{h}}(g)^\top
\right).
\label{eq:appendix-finite-history-second-moment}
\end{equation}
No centering or independence assumption is required.
\end{theorem}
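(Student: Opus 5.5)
The plan is to unroll both companion recursions from the common initial context and subtract. First, iterate Eq.~\ref{eq:appendix-true-companion} $\mathfrak h$ times by induction on $\mathfrak h$. This gives
\begin{equation*}
X_{t+\mathfrak h}^{(H)}=\mathbb T(g)^{\mathfrak h}X_t^{(H)}+\sum_{j=0}^{\mathfrak h-1}\mathbb T(g)^{\mathfrak h-1-j}\left[\mathbb B(g)a_{t+j}+\mathbb J_\xi\xi_{t+j+1}\right].
\end{equation*}
The inductive step multiplies by $\mathbb T(g)$ and adds the new input terms, in the same pattern as the proof of Theorem~\ref{thm:appendix-exact-recursion}. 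Likewise, Eq.~\ref{eq:appendix-predicted-companion} gives
\begin{equation*}
\widehat Z_{t+\mathfrak h}^{(H)}=\mathbb A(g)^{\mathfrak h}\widehat Z_t^{(H)}+\sum_{j=0}^{\mathfrak h-1}\mathbb A(g)^{\mathfrak h-1-j}\mathbb D(g)a_{t+j},
\end{equation*}
and we then substitute $\widehat Z_t^{(H)}=\mathbb W X_t^{(H)}$ from Eq.~\ref{eq:appendix-exact-context-initialization}.

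Next, encode the true history by applying $\mathbb W=I_H\otimes W$, which acts blockwise, so that $Z_{t+\mathfrak h}^{(H)}=\mathbb W X_{t+\mathfrak h}^{(H)}$. Then apply the selector $P_H=P_{H,1}$, which extracts the newest block. This yields $P_H Z_{t+\mathfrak h}^{(H)}=z_{t+\mathfrak h}$ and $P_H\widehat Z^{(H)}_{t+\mathfrak h}=\widehat z_{t+\mathfrak h}$. The one bookkeeping point to check is that the newest block of $\mathbb W X$ is exactly $W\phi_{t+\mathfrak h}$, which holds because $\mathbb W$ is block diagonal.

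Subtracting the two expressions and grouping terms gives the result. The initial-context terms combine into $P_H[\mathbb W\mathbb T^{\mathfrak h}-\mathbb A^{\mathfrak h}\mathbb W]X_t^{(H)}=\mathfrak D_{\mathfrak h}(g)X_t^{(H)}$. The action terms pair index by index into $\mathfrak F_{\mathfrak h,j}(g)a_{t+j}$. The residual terms appear only in the true rollout and give $\mathfrak N_{\mathfrak h,j}(g)\xi_{t+j+1}$. This proves Eq.~\ref{eq:appendix-finite-history-error}.

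For the second-moment identity, Eq.~\ref{eq:appendix-finite-history-error} is linear in the stacked vector, so the error equals $\mathfrak M_{\mathfrak h}(g)V_{t,\mathfrak h}$, where
\begin{equation*}
\mathfrak M_{\mathfrak h}=\begin{bmatrix}\mathfrak D_{\mathfrak h}&\mathfrak F_{\mathfrak h,0}&\cdots&\mathfrak F_{\mathfrak h,\mathfrak h-1}&\mathfrak N_{\mathfrak h,0}&\cdots&\mathfrak N_{\mathfrak h,\mathfrak h-1}\end{bmatrix},
\end{equation*}
with blocks ordered to match $V_{t,\mathfrak h}$. Then $\lVert\mathfrak M V\rVert_2^2=\operatorname{tr}(\mathfrak M VV^\top\mathfrak M^\top)$. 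Taking the conditional expectation given $g$, with $\mathfrak M_{\mathfrak h}(g)$ deterministic given $g$, and using linearity of trace and expectation gives Eq.~\ref{eq:appendix-finite-history-second-moment}. This requires only finite second moments. As in Eq.~\ref{eq:appendix-rollout-second-moment}, we use the raw second moment $\mathbb E[VV^\top\mid g]$, so no centering or independence assumption enters.

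I expect no real obstacle. The only care needed is index alignment: the action $a_{t+j}$ and residual $\xi_{t+j+1}$ entering at step $j$ must be propagated by exactly $\mathfrak h-1-j$ further updates, which the induction confirms.
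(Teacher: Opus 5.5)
Your proposal is correct and follows essentially the same route as the paper: unroll both companion recursions from the shared context $\mathbb W X_t^{(H)}$, apply the selectors $P_H\mathbb W$ and $P_H$, subtract, group the terms into $\mathfrak D_{\mathfrak h}$, $\mathfrak F_{\mathfrak h,j}$ and $\mathfrak N_{\mathfrak h,j}$, and then write the error as $\mathfrak M_{\mathfrak h}(g)V_{t,\mathfrak h}$ and apply the trace identity to the raw conditional second moment. Your remarks on index alignment, on the block-diagonal $\mathbb W$, and on needing only finite second moments are sound bookkeeping that the paper leaves implicit.
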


\begin{proof}
Repeated substitution in the true companion system gives
\begin{align}
X_{t+\mathfrak{h}}^{(H)}
={}&\mathbb T(g)^{\mathfrak{h}}X_t^{(H)}\\
&+\sum_{j=0}^{\mathfrak{h}-1}
\mathbb T(g)^{\mathfrak{h}-1-j}
\left[\mathbb B(g)a_{t+j}+\mathbb J_\xi\xi_{t+j+1}\right].
\label{eq:appendix-unrolled-true-companion}
\end{align}
The exact initial context and recursive predictor give
\begin{align}
\widehat Z_{t+\mathfrak{h}}^{(H)}
={}&\mathbb A(g)^{\mathfrak{h}}\mathbb W X_t^{(H)}\\
&+\sum_{j=0}^{\mathfrak{h}-1}
\mathbb A(g)^{\mathfrak{h}-1-j}\mathbb D(g)a_{t+j}.
\label{eq:appendix-unrolled-predicted-companion}
\end{align}
The true newest latent is $P_H\mathbb W X_{t+\mathfrak{h}}^{(H)}$, and the
predicted newest latent is $P_H\widehat Z_{t+\mathfrak{h}}^{(H)}$.  Apply these selectors to Eq.s~\ref{eq:appendix-unrolled-true-companion} and \ref{eq:appendix-unrolled-predicted-companion}, subtract, and group the context, action, and transition-residual terms.  Their coefficients are exactly $\mathfrak D_{\mathfrak{h}}$, $\mathfrak F_{\mathfrak{h},j}$, and $\mathfrak N_{\mathfrak{h},j}$.

For the second statement, write the error as
$\mathfrak M_{\mathfrak{h}}(g)V_{t,\mathfrak{h}}$.  Then
\begin{align}
\mathbb E\lVert\mathfrak M_{\mathfrak{h}}V_{t,\mathfrak{h}}\rVert_2^2
&=\mathbb E\operatorname{tr}\!\left(
\mathfrak M_{\mathfrak{h}}V_{t,\mathfrak{h}}V_{t,\mathfrak{h}}^\top
\mathfrak M_{\mathfrak{h}}^\top
\right)\\
&=\operatorname{tr}\!\left(
\mathfrak M_{\mathfrak{h}}
\mathbb E[V_{t,\mathfrak{h}}V_{t,\mathfrak{h}}^\top]
\mathfrak M_{\mathfrak{h}}^\top
\right).
\end{align}
Conditioning throughout on $g$ proves
Eq.~\ref{eq:appendix-finite-history-second-moment}.
\end{proof}

Theorem~\ref{thm:appendix-finite-history} is stated for the newest latent to
match the rollout loss.  Replacing $P_H$ by $P_{H,L}$ in
Eq.s~\ref{eq:appendix-history-defect}--\ref{eq:appendix-history-residual-map}
gives the same exact identity for the stacked window
$z_{t+\mathfrak{h}-L+1:t+\mathfrak{h}}
-\widehat z_{t+\mathfrak{h}-L+1:t+\mathfrak{h}}$.
A linear probe on that window is
then represented by a matrix multiplying $P_{H,L}$.

\begin{corollary}[Memoryless results on the companion state]
\label{cor:appendix-companion-reduction}
Suppose the augmented true and learned operators share the law basis used in
Section~\ref{app:theory-law-transfer}.  Then the local-defect, law-transfer,
and recursive-error results apply on the companion state after the substitutions
\begin{equation}
W\mapsto\mathbb W,
\qquad
T(g)\mapsto\mathbb T(g),
\qquad
\widehat A(g)\mapsto\mathbb A(g),
\end{equation}
followed by the selector $P_H$.  For the horizon-geometry result, define
\begin{align}
\mathbb D_{\mathfrak{h}}
&=\mathbb W\mathbb T^{\mathfrak{h}}
-\mathbb A^{\mathfrak{h}}\mathbb W,\nonumber\\
\mathbb M_{\mathfrak{h}}
&=\sum_{j=0}^{\mathfrak{h}-1}
(\mathbb T^j)^\top\otimes\mathbb A^{\mathfrak{h}-1-j},
\qquad
\Sigma_X=\mathbb E[X_t^{(H)}X_t^{(H)\top}].
\end{align}
The quadratic form for newest-block loss uses
\begin{equation}
G_K^{(P)}
=\sum_{k=1}^{K}w_k\mathbb M_k^\top
\left(\Sigma_X\otimes P_H^\top P_H\right)\mathbb M_k
\succeq0.
\label{eq:appendix-companion-gramian}
\end{equation}
For a linear $L$-block history probe, replace $P_H$ by $P_{H,L}$ in this
quadratic form and compose the result with the probe matrix.
This matrix need not be positive definite on every augmented defect because
$P_H$ ignores the older output blocks.  If $w_1>0$ and $\Sigma_X\succ0$, it is
positive definite after restriction to the admissible companion-defect space
\begin{equation}
\mathcal S_{\rm adm}
=\left\{
\operatorname{vec}\!\left(
\begin{bmatrix}
0_{(H-1)d_z\times Hd_\phi}\\ D
\end{bmatrix}
\right):
D\in\mathbb R^{d_z\times Hd_\phi}
\right\}.
\label{eq:appendix-admissible-companion-defects}
\end{equation}
The zero rows are the matched history shifts.  If the true and learned action maps differ,
the action residuals $\mathfrak F_{\mathfrak{h},j}(g)a_{t+j}$ remain as the explicit extra
terms in Eq.~\ref{eq:appendix-finite-history-error}.  The shared basis
must include an intercept to represent the gravity-independent shift rows of
the companion matrices.
\end{corollary}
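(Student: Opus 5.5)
The corollary transfers the memoryless results of Sections~\ref{app:theory-local-defect}--\ref{app:theory-horizon-geometry} to the companion state. The plan is to verify that nothing in those proofs used more than linearity and the algebra of the substituted operators. Three checks are needed.

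\textbf{Substitution.} Definition~\ref{def:appendix-local-defect}, Lemma~\ref{lem:appendix-matrix-transfer}, Theorem~\ref{thm:appendix-local-ood-bound} and Theorem~\ref{thm:appendix-exact-recursion} are purely algebraic statements about matrices $W,T(g),\widehat A(g)$, except for the row-orthonormality $WW^\top=I$ used to build $P_W$. Row-orthonormality transfers directly: $\mathbb W\mathbb W^\top=I_H\otimes WW^\top=I_{Hd_z}$. Hence $P_{\mathbb W}=I_H\otimes P_W$, and the projector decomposition \eqref{eq:appendix-projector-decomposition} holds verbatim with $\mathbb A_{\mathbb W}=\mathbb W\mathbb T\mathbb W^\top$ and $\mathbb C_{\mathbb W}=\mathbb W\mathbb T(I-P_{\mathbb W})$.

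For the law-transfer results, Lemma~\ref{lem:appendix-matrix-transfer} applies to any matrix family linear in $\psi(g)$. The shift rows of $\mathbb T$ and $\mathbb A$ are constant in $g$, so they must be written as coefficient times a basis function that is identically one. This is exactly why the corollary requires an intercept in $\psi$.

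The recursion then goes through by the same add-and-subtract argument. The action terms do not fit the memoryless form and are left explicit, as in Theorem~\ref{thm:appendix-finite-history}. Applying $P_H$ at the end is linear and commutes with all of these identities. Norm bounds only improve, since $\lVert P_H\rVert_2\le1$.

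\textbf{Horizon geometry.} Repeat the telescoping induction of Theorem~\ref{thm:appendix-closure-geometry} with $\mathbb D_{\mathfrak h}$, which gives $\operatorname{vec}(\mathbb D_{\mathfrak h})=\mathbb M_{\mathfrak h}\operatorname{vec}(\mathbb D_1)$. The newest-block loss is
\[
\operatorname{tr}(P_H\mathbb D_k\Sigma_X\mathbb D_k^\top P_H^\top).
\]
Using $\operatorname{vec}(P_H Y)=(I\otimes P_H)\operatorname{vec}(Y)$ and the trace-vec identity, this equals
\[
\operatorname{vec}(\mathbb D_k)^\top(\Sigma_X\otimes P_H^\top P_H)\operatorname{vec}(\mathbb D_k).
\]
Summing against the weights $w_k$ gives \eqref{eq:appendix-companion-gramian}. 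Positive semidefiniteness is immediate because each summand is a congruence of the PSD matrix $\Sigma_X\otimes P_H^\top P_H$. Replacing $P_H$ by $P_{H,L}$ and composing with a probe matrix follows from the same computation.

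\textbf{Definiteness on $\mathcal S_{\rm adm}$.} This is the step needing care. Failure on general defects is clear: a defect supported in older rows has $P_H\mathbb D_1=0$ at $k=1$, so it is invisible to the first term.

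On $\mathcal S_{\rm adm}$ I would keep only the $k=1$ term, which is a lower bound because all summands are PSD. Since $\mathbb M_1=I$, this term is
\[
w_1\operatorname{tr}(P_H\mathbb D_1\Sigma_X\mathbb D_1^\top P_H^\top)=w_1\operatorname{tr}(D\Sigma_XD^\top),
\]
because $P_H$ extracts exactly the last block row $D$. With $w_1>0$ and $\Sigma_X\succ0$, this is strictly positive unless $D=0$, i.e.\ unless the defect is zero. Hence $G_K^{(P)}$ restricted to $\mathcal S_{\rm adm}$ is positive definite.

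It remains to justify that the companion defects actually lie in $\mathcal S_{\rm adm}$. The shift rows satisfy $\mathbb W\mathbb T=\mathbb A\mathbb W$ blockwise: both sides move $W\phi_{t-i}$ up one block. So the one-step defect $\mathbb D_1$ vanishes outside the last block row, and only the final block row $D$ survives.
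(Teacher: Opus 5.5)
Your proposal is correct and follows essentially the same route as the paper's proof. It uses the same algebraic form on the augmented space, the identity \(\operatorname{vec}(\mathbb D_{\mathfrak h})=\mathbb M_{\mathfrak h}\operatorname{vec}(\mathbb D_1)\) with output metric \(P_H^\top P_H\), and strict positivity of the \(k=1\) term on \(\mathcal S_{\rm adm}\). You also supply details the paper leaves implicit: \(\mathbb W\mathbb W^\top=I_{Hd_z}\), the trace--vec computation, and the check that matched shift rows place \(\mathbb D_1\) in \(\mathcal S_{\rm adm}\).

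One small tightening concerns the ``need not be positive definite'' remark. Blindness of the \(k=1\) term alone does not make the full sum singular. Cite instead \(K=1\) with \(H\geq2\), or a learned update whose last block row reads only the newest block; in either case every term vanishes on defects supported in older rows.
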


\begin{proof}
Eq.s~\ref{eq:appendix-true-companion} and \ref{eq:appendix-predicted-companion} have the same algebraic form as the memoryless systems on their augmented spaces.  Applying the earlier results therefore produces augmented latent errors.  Multiplication by $P_H$ selects the output that the model evaluates.  The action terms vanish only when their true and learned augmented maps agree; otherwise the exact difference is
$\mathfrak F_{\mathfrak{h},j}(g)a_{t+j}$.  For horizon geometry,
$\operatorname{vec}(\mathbb D_{\mathfrak{h}})
=\mathbb M_{\mathfrak{h}}\operatorname{vec}(\mathbb D_1)$.
Selecting the newest block changes the output metric from the identity to
$P_H^\top P_H$, which gives
Eq.~\ref{eq:appendix-companion-gramian}.  Its $\mathfrak{h}=1$ term is strictly
positive for every nonzero defect in $\mathcal S_{\rm adm}$ when $w_1>0$ and
$\Sigma_X\succ0$: $P_H$ retains its only potentially nonzero block row, and
$\Sigma_X$ is positive definite.
\end{proof}

The companion construction is an exact bridge for the history-state model.  It
does not assert that the nonlinear GRU is globally linear.  It states what the
model must contain to represent history, known actions,
gravity, and recursive feedback without dropping terms.

\subsection{Regime changes and contact}
\label{app:theory-events}

A time-homogeneous hybrid map can still generate a discrete semigroup on the
full state. Contact changes the active local branch, not the underlying update
rule, so the branch matrices may vary along a trajectory. The smooth
law-transfer analysis applies within one regime, while the telescoping identity
below applies along a fixed, shared branch sequence. It does not cover errors
that change the event time or send the prediction onto a different branch. The
later hinge result concerns one continuous piecewise-affine boundary relative
to an affine feature dictionary.

Let the true transitions along a path be
$T_0,\ldots,T_{\mathfrak{h}-1}$ and the latent transitions be
$A_0,\ldots,A_{\mathfrak{h}-1}$. Define the branch-local transition defects
\begin{equation}
\Delta_j^{\rm br}=WT_j-A_jW.
\end{equation}

\begin{lemma}[Telescoping identity across changing regimes]
\label{lem:appendix-hybrid-telescope}
With an empty product equal to the identity,
\begin{align}
&WT_{\mathfrak{h}-1}\cdots T_0
-A_{\mathfrak{h}-1}\cdots A_0W
\nonumber\\
&\quad=
\sum_{j=0}^{\mathfrak{h}-1}
A_{\mathfrak{h}-1}\cdots A_{j+1}
\Delta_j^{\rm br}
T_{j-1}\cdots T_0.
\label{eq:appendix-hybrid-telescope}
\end{align}
\end{lemma}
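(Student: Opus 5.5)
The plan is to argue by induction on $\mathfrak{h}$, mirroring the proof of the fixed-regime telescoping identity in Theorem~\ref{thm:appendix-closure-geometry}, with the single operators $T$ and $A$ replaced by the branch-indexed products. Write $\mathcal T_{\mathfrak h}=T_{\mathfrak{h}-1}\cdots T_0$ and $\mathcal A_{\mathfrak h}=A_{\mathfrak{h}-1}\cdots A_0$, with $\mathcal T_0=I_{d_\phi}$ and $\mathcal A_0=I_{d_z}$ by the empty-product convention. Also set $\mathcal D_{\mathfrak h}=W\mathcal T_{\mathfrak h}-\mathcal A_{\mathfrak h}W$. The claim is then
\begin{equation*}
\mathcal D_{\mathfrak h}=\sum_{j=0}^{\mathfrak h-1}A_{\mathfrak h-1}\cdots A_{j+1}\,\Delta_j^{\rm br}\,\mathcal T_j .
\end{equation*}

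For the base case $\mathfrak h=1$, the sum has the single term $j=0$. In that term both the left $A$-product and the right $T$-product are empty, so it equals $\Delta_0^{\rm br}=WT_0-A_0W$, which is exactly $\mathcal D_1$.

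For the inductive step, I will peel off the newest branch on the left, which matches the order in which the formula is written. I will add and subtract $A_{\mathfrak h}W\mathcal T_{\mathfrak h}$:
\begin{equation*}
\mathcal D_{\mathfrak h+1}=WT_{\mathfrak h}\mathcal T_{\mathfrak h}-A_{\mathfrak h}\mathcal A_{\mathfrak h}W
=(WT_{\mathfrak h}-A_{\mathfrak h}W)\mathcal T_{\mathfrak h}+A_{\mathfrak h}(W\mathcal T_{\mathfrak h}-\mathcal A_{\mathfrak h}W)
=\Delta^{\rm br}_{\mathfrak h}\mathcal T_{\mathfrak h}+A_{\mathfrak h}\mathcal D_{\mathfrak h}.
\end{equation*}
Substituting the induction hypothesis into $A_{\mathfrak h}\mathcal D_{\mathfrak h}$ prepends $A_{\mathfrak h}$ to each left product. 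This produces the terms $j=0,\dots,\mathfrak h-1$ of the horizon-$(\mathfrak h+1)$ sum. The extra term $\Delta^{\rm br}_{\mathfrak h}\mathcal T_{\mathfrak h}$ is the $j=\mathfrak h$ term, whose left product is empty. This completes the induction.

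An alternative that avoids induction is to write $\mathcal D_{\mathfrak h}$ as a telescoping sum $\sum_{j}(\mathcal A_{>j}W\mathcal T_{j+1}-\mathcal A_{>j-1}W\mathcal T_j)$, where $\mathcal A_{>j}=A_{\mathfrak h-1}\cdots A_{j+1}$. Each summand equals $\mathcal A_{>j}(WT_j-A_jW)\mathcal T_j$, and the interior terms cancel, leaving $W\mathcal T_{\mathfrak h}-\mathcal A_{\mathfrak h}W$. I would mention this as a check.

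There is no real obstacle here. The only care needed is bookkeeping: the products are noncommutative and ordered, and the empty-product conventions at $j=0$ and $j=\mathfrak h-1$ must be applied correctly. The decomposition should also peel the newest factor on the left. Peeling the oldest factor on the right instead would give an equivalent identity whose indexing does not match the displayed form.
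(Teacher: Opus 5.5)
Your argument is correct and is essentially the paper's. The paper proves the lemma by exactly the telescoping sum you list as a check: it sets $Q_j=A_{\mathfrak{h}-1}\cdots A_jWT_{j-1}\cdots T_0$ and sums $Q_{j+1}-Q_j=A_{\mathfrak{h}-1}\cdots A_{j+1}\Delta_j^{\rm br}T_{j-1}\cdots T_0$, and your left-peeling recursion $\mathcal D_{\mathfrak h+1}=\Delta^{\rm br}_{\mathfrak h}\mathcal T_{\mathfrak h}+A_{\mathfrak h}\mathcal D_{\mathfrak h}$ is just the inductive form of that same telescope. (Your closing remark is too pessimistic: peeling $T_0$ on the right also gives exactly the displayed indexing, provided the induction hypothesis is stated for arbitrary branch sequences and applied to $T_1,\dots,T_{\mathfrak h}$ and $A_1,\dots,A_{\mathfrak h}$.)
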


\begin{proof}
For each $j=0,\ldots,\mathfrak{h}$, define
\begin{equation}
Q_j
=A_{\mathfrak{h}-1}\cdots A_jWT_{j-1}\cdots T_0,
\end{equation}
where $Q_{\mathfrak{h}}=WT_{\mathfrak{h}-1}\cdots T_0$ and
$Q_0=A_{\mathfrak{h}-1}\cdots A_0W$. Consecutive terms satisfy
\begin{align}
Q_{j+1}-Q_j
&=A_{\mathfrak{h}-1}\cdots A_{j+1}
(WT_j-A_jW)
T_{j-1}\cdots T_0\\
&=A_{\mathfrak{h}-1}\cdots A_{j+1}
\Delta_j^{\rm br}T_{j-1}\cdots T_0.
\end{align}
Summing $Q_{j+1}-Q_j$ from $j=0$ to $\mathfrak{h}-1$ cancels all intermediate $Q_j$
and yields Eq.~\ref{eq:appendix-hybrid-telescope}.
\end{proof}

The identity has the same interpretation as the fixed-regime telescope, but
the operators before and after each local defect now depend on the active
branch.

\begin{theorem}[A continuous piecewise-affine event requires a hinge]
\label{thm:appendix-event-hinge}
Let an open connected set $U\subset\mathbb R^d$ intersect both sides of the
hyperplane $a^\top x=b$, with $a\neq0$.  Suppose a continuous map
$F:U\to\mathbb R^m$ is affine on each side:
\begin{equation}
F(x)=
\begin{cases}
A_-x+c_-, & a^\top x\leq b,\\
A_+x+c_+, & a^\top x\geq b.
\end{cases}
\label{eq:appendix-piecewise-affine-event}
\end{equation}
Then there is a vector $u\in\mathbb R^m$ such that
\begin{equation}
F(x)=A_-x+c_-+u(a^\top x-b)_+,
\label{eq:appendix-hinge-form}
\end{equation}
where $(r)_+=\max\{r,0\}$.  If $u\neq0$, the hinge
$(a^\top x-b)_+$ is not in the affine feature dictionary on $U$, so affine
features alone cannot represent $F$.  Appending this hinge is sufficient.  In
addition, any feature dictionary that contains the affine functions and admits
an exact linear readout of $F$ must contain the hinge in its linear span.
\end{theorem}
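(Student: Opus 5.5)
The plan is to show first that the two affine pieces agree on the interface, so their difference is a multiple of the boundary function $\ell(x)=a^\top x-b$. Let $U_0=U\cap\{a^\top x=b\}$. Because $U$ is open and connected and meets both open half-spaces, $U_0$ is a nonempty relatively open subset of the hyperplane; this follows by joining a point on each side with a path in $U$ and applying the intermediate value theorem to $\ell$. Continuity of $F$ then gives $(A_+-A_-)x+(c_+-c_-)=0$ on $U_0$. An affine map vanishing on a nonempty relatively open subset of a hyperplane vanishes on the whole hyperplane. Write $\Delta(x)=(A_+-A_-)x+(c_+-c_-)$. Its linear part $\Delta_A = A_+ - A_-$ is zero on $a^\perp$. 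Hence $\Delta_A=ua^\top$ with $u=\Delta_A a/\|a\|^2$, and evaluating at a point of $U_0$ gives $c_+-c_-=-ub$. So $\Delta(x)=u(a^\top x-b)$. Substituting this into the case definition gives the hinge form in Eq.~\ref{eq:appendix-hinge-form}, since $(\ell)_+=\ell$ on the plus side and $0$ on the minus side.

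For non-representability, I will suppose $u\neq0$ and that $(a^\top x-b)_+$ agreed on $U$ with an affine function $\alpha^\top x+\beta$. On the open set $U\cap\{\ell<0\}$ this forces $\alpha=0$ and $\beta=0$. On $U\cap\{\ell>0\}$ it forces $\alpha=a$ and $\beta=-b$. This is a contradiction because $a\neq0$, and both open sets are nonempty. The same argument applied to each coordinate of $F$ rules out an affine $F$: if $F$ were affine, the hinge term $u\,\ell_+$ would equal an affine function, and choosing a coordinate $i$ with $u_i\neq0$ gives the contradiction. Sufficiency is immediate from the hinge form, since $F$ is a linear readout of the dictionary $\{x_1,\dots,x_d,1,(\ell)_+\}$.

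For the necessity claim, suppose a dictionary contains the affine functions and $F=\sum_k \beta_k f_k$ exactly on $U$. Then $u\,\ell_+=F-A_-x-c_-$ lies in the span coordinatewise. Taking coordinate $i$ with $u_i\neq0$ and dividing by $u_i$ puts $\ell_+$ in the span. The main obstacle is the first step, namely making sure the interface trace $U_0$ is large enough to force $\Delta$ to vanish on the whole hyperplane. The connectedness and openness argument above handles this, because a nonempty open piece of the hyperplane affinely spans it.
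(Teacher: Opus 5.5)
Your proof is correct and follows the paper's argument essentially step for step. Continuity on the nonempty, relatively open interface forces $A_+-A_-$ to annihilate $\ker(a^\top)$, which gives $A_+-A_-=ua^\top$ and $c_+-c_-=-ub$; your choice of a coordinate with $u_i\neq 0$ is the paper's $q$ with $q^\top u=1$, specialized to $q$ equal to the $i$th standard basis vector divided by $u_i$. The only differences are cosmetic: you pass through the affine span of the interface rather than differencing at $x_0\pm\tau v$, and you compare affine coefficients on both open sides rather than extending a vanishing affine function from one side.
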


\begin{proof}
Choose a point $x_0\in U$ on the boundary $a^\top x_0=b$.  Such a point exists
because $U$ is connected and intersects both open half-spaces.  Let
$v\in\ker(a^\top)$.  For sufficiently small $\tau$, both
$x_0+\tau v$ and $x_0-\tau v$ remain on the boundary and in $U$.  Continuity
of the two affine formulas on the boundary gives
\begin{equation}
(A_+-A_-)x+(c_+-c_-)=0
\end{equation}
for every boundary point $x$ in a neighborhood of $x_0$.  Subtract this
identity at $x_0+\tau v$ and $x_0$ to obtain
\begin{equation}
(A_+-A_-)v=0
\qquad\text{for every }v\in\ker(a^\top).
\end{equation}
Each row of $A_+-A_-$ therefore annihilates the $(d-1)$-dimensional space
$\ker(a^\top)$.  Its row space lies in the one-dimensional span of $a^\top$,
so some $u\in\mathbb R^m$ satisfies
\begin{equation}
A_+-A_-=ua^\top.
\label{eq:appendix-hinge-linear-change}
\end{equation}
Evaluate the boundary identity at $x_0$.  Since $a^\top x_0=b$,
\begin{equation}
0=ua^\top x_0+c_+-c_-=ub+c_+-c_-,
\end{equation}
and hence $c_+-c_-=-ub$.  On the upper half-space,
\begin{align}
A_+x+c_+
&=A_-x+c_-+u(a^\top x-b),
\end{align}
while the added term is zero on the lower half-space.  This proves
Eq.~\ref{eq:appendix-hinge-form}.

If $u\neq0$ and the hinge were affine on $U$, it would vanish as an affine
function on the nonempty open set $U\cap\{a^\top x<b\}$.  Its coefficients
would then be zero, forcing it to vanish throughout $U$.  This contradicts its
nonzero values on $U\cap\{a^\top x>b\}$.  Thus an affine dictionary cannot
represent the hinge.  Adding the displayed hinge is sufficient by
Eq.~\ref{eq:appendix-hinge-form}.  Finally, because $u\neq0$, choose
$q\in\mathbb R^m$ with $q^\top u=1$.  Eq.~\ref{eq:appendix-hinge-form}
then gives
\begin{equation}
(a^\top x-b)_+
=q^\top F(x)-q^\top(A_-x+c_-).
\end{equation}
If a dictionary contains affine functions and linearly represents $F$, the
right-hand side lies in its linear span.  The dictionary must therefore span
the hinge, which completes the proof.
\end{proof}

\begin{corollary}[Ballistic contact creates a gravity hinge]
\label{cor:appendix-ballistic-hinge}
Suppose the free-flight height at a fixed time is
$y_{\rm free}(g)=a-bg$ with $b>0$, and a floor clips the height to
\begin{equation}
y(g)=\max\{a-bg,0\}.
\end{equation}
Let $g_c=a/b$.  On any interval that crosses $g_c$,
\begin{equation}
y(g)=b(g_c-g)_+
\end{equation}
is not affine in $g$.
\end{corollary}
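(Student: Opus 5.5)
The plan is to obtain both claims directly from Theorem~\ref{thm:appendix-event-hinge}, specialized to one dimension with $d=m=1$, and then confirm them by direct computation. On the real line, the hyperplane $a^\top x=b$ becomes the single point $g=g_c$. Since $b>0$, we have $a-bg\geq 0$ exactly when $g\leq g_c$. So on any open interval $U$ containing $g_c$, the clipped height is continuous and affine on each side. It equals $a-bg$ for $g\leq g_c$ and $0$ for $g\geq g_c$, and the two pieces agree at $g_c$.

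First, I would verify the closed form by cases. If $g\leq g_c$, then $b(g_c-g)_+=b(g_c-g)=a-bg=\max\{a-bg,0\}$, because $bg_c=a$. If $g\geq g_c$, both sides are $0$. This gives $y(g)=b(g_c-g)_+$ on all of $\mathbb R$, and hence on the interval in question.

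Second, for non-affinity I would argue directly rather than through the theorem's dictionary clause; this is the only point needing care. Suppose $y(g)=\alpha+\beta g$ on an interval $U$ that contains points $g_1<g_c<g_2$. On the nonempty open piece $U\cap\{g>g_c\}$ we have $y\equiv0$, so $\alpha=\beta=0$. On the piece $U\cap\{g<g_c\}$, however, $y(g)=b(g_c-g)>0$, which is a contradiction. Equivalently, in the theorem's notation, the slope jumps from $-b$ to $0$, giving $u=b\neq0$ in Eq.~\ref{eq:appendix-hinge-form} with the hinge oriented as $(g_c-g)_+$. The theorem then states that affine features in $g$ cannot represent $y$ and that the hinge must lie in the span of any exact dictionary.

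The main subtlety is that the interval must genuinely cross $g_c$, meaning it meets both open sides. That is exactly the hypothesis of the corollary.
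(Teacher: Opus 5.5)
Your proposal is correct and matches the paper's proof: both check the closed form $b(g_c-g)_+$ branch by branch, and both rule out a single affine fit on an interval that meets both sides of $g_c$. The only difference is cosmetic. The paper argues via the slope jump from $-b$ to $0$, whereas you use that an affine function vanishing on the piece $g>g_c$ must be identically zero, which is the same non-affinity argument the paper gives inside Theorem~\ref{thm:appendix-event-hinge}.
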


\begin{proof}
For $g<g_c$, the height is $a-bg=b(g_c-g)$ and has derivative $-b$.  For
$g>g_c$, the height is zero and has derivative zero.  The derivative changes
at $g_c$, so no single affine function agrees on an interval crossing that
point.  The displayed hinge agrees with both branches.
\end{proof}

The result covers one continuous piecewise-affine boundary.  It does not cover
discontinuous, frictional, restitution-dependent, or multi-contact dynamics,
nor does it imply a literal neural hinge, a larger latent for every collision,
or better contact modeling by SG-JEPA.  It affects latent closure only if the
retained quantity responds to the slope change $u$.

\subsection{Supplementary calculation: SIGReg and its limits}
\label{app:theory-sigreg}

SIGReg enters Equation~\eqref{eq:appendix-profiled-risk}, but it is separate
from the composition and law-coverage arguments. Near Gaussian identity
covariance, its population statistic penalizes scale mismatch and unequal
directional variance, while a finite batch introduces bias. This local
calculation neither proves noncollapse nor attributes the dynamics difference
to SIGReg.

For a batch of $B\geq1$ latent samples
$z_1,\ldots,z_B\in\mathbb R^{d_z}$, a random direction
$a\sim\operatorname{Unif}(\mathbb S^{d_z-1})$, knots $t_\ell$, and weights
$\omega_\ell\geq0$, consider
\begin{equation}
\mathcal R_{\rm SIG}
=B\sum_\ell\omega_\ell
\left|
\frac1B\sum_{i=1}^{B}e^{\mathrm{i}t_\ell a^\top z_i}
-e^{-t_\ell^2/2}
\right|^2.
\label{eq:appendix-sigreg-statistic}
\end{equation}
Here $\mathrm i^2=-1$.  The batch average is the empirical characteristic
function, a Fourier summary of the distribution after projection onto $a$.
For a zero-mean Gaussian with covariance $C$, define the direction-averaged
population functional
\begin{equation}
\mathcal R_{\rm SIG}^{\rm pop}(C)
=\mathbb E_a\!\left[
B\sum_\ell\omega_\ell
\left|
\mathbb E_{z\sim\mathcal N(0,C)}
e^{\mathrm i t_\ell a^\top z}
-e^{-t_\ell^2/2}
\right|^2
\right].
\label{eq:appendix-sigreg-population-functional}
\end{equation}

\begin{proposition}[Local covariance penalty induced by SIGReg]
\label{prop:appendix-sigreg}
Assume some nonzero knot has positive weight.  If $C=I+\Delta_C\succ0$,
$\Delta_C=\Delta_C^\top$, and $\lVert\Delta_C\rVert_F$ is small, then the
functional in Eq.~\ref{eq:appendix-sigreg-population-functional}
satisfies
\begin{align}
\mathcal R_{\rm SIG}^{\rm pop}(I+\Delta_C)
={}&\kappa_{B,d_z}
\left[2\lVert\Delta_C\rVert_F^2+(\operatorname{tr}\Delta_C)^2\right]
+O(\lVert\Delta_C\rVert_F^3),
\label{eq:appendix-sigreg-population-expansion}\\
\kappa_{B,d_z}
={}&\frac{B}{d_z(d_z+2)}
\sum_\ell\omega_\ell
\frac{t_\ell^4}{4}e^{-t_\ell^2}.
\label{eq:appendix-sigreg-kappa}
\end{align}
For a finite batch of independent samples and a fixed direction with
$v=a^\top Ca$, the exact expectation at one knot is
\begin{equation}
B\left(e^{-t^2v/2}-e^{-t^2/2}\right)^2
+1-e^{-t^2v}.
\label{eq:appendix-sigreg-finite-batch}
\end{equation}
Consequently, after averaging over both the independent batch and the random
direction, for a constant $C_0$ that does not depend on $\Delta_C$,
\begin{align}
\mathbb E\mathcal R_{\rm SIG}^{\rm emp}(I+\Delta_C)
={}&C_0+\beta_{1,d_z}\operatorname{tr}\Delta_C
\nonumber\\
&+\kappa_{B,d_z}^{\rm emp}
\left[2\lVert\Delta_C\rVert_F^2+(\operatorname{tr}\Delta_C)^2\right]
+O(\lVert\Delta_C\rVert_F^3),
\label{eq:appendix-sigreg-empirical-expansion}\\
\beta_{1,d_z}
={}&\frac1{d_z}\sum_\ell\omega_\ell t_\ell^2e^{-t_\ell^2},\nonumber\\
\kappa_{B,d_z}^{\rm emp}
={}&\frac{B-2}{d_z(d_z+2)}
\sum_\ell\omega_\ell
\frac{t_\ell^4}{4}e^{-t_\ell^2}.\nonumber
\end{align}
\end{proposition}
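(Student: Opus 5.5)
The plan is to reduce everything to one-dimensional Gaussian projections and then Taylor-expand in the scalar perturbation $\varepsilon_a=a^\top\Delta_C a$. For $z\sim\mathcal N(0,C)$ and a fixed unit direction $a$, the projection $a^\top z$ is $\mathcal N(0,v)$ with $v=a^\top C a=1+\varepsilon_a$. Its characteristic function is therefore $e^{-t^2 v/2}$. The population integrand at one knot becomes
\[
B\bigl(e^{-t^2(1+\varepsilon_a)/2}-e^{-t^2/2}\bigr)^2
=Be^{-t^2}\bigl(e^{-t^2\varepsilon_a/2}-1\bigr)^2 .
\]
Since $|\varepsilon_a|\leq\lVert\Delta_C\rVert_2\leq\lVert\Delta_C\rVert_F$ uniformly in $a$, I expand $e^{-t^2\varepsilon_a/2}-1=-t^2\varepsilon_a/2+O(\varepsilon_a^2)$. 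This gives $B\frac{t^4}{4}e^{-t^2}\varepsilon_a^2+O(\lVert\Delta_C\rVert_F^3)$. The remainder is uniform in $a$ and, for a finite knot set, in $\ell$, so it survives the averaging over $a$.

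The remaining ingredient is the spherical moment identities for $a\sim\operatorname{Unif}(\mathbb S^{d_z-1})$:
\[
\mathbb E_a[\varepsilon_a]=\frac{\operatorname{tr}\Delta_C}{d_z},
\qquad
\mathbb E_a[\varepsilon_a^2]=\frac{2\lVert\Delta_C\rVert_F^2+(\operatorname{tr}\Delta_C)^2}{d_z(d_z+2)}.
\]
I will obtain these by writing $a=x/\lVert x\rVert$ with $x\sim\mathcal N(0,I)$ and using the independence of $\lVert x\rVert$ and $a$. Then $\mathbb E[(x^\top\Delta_C x)^2]=2\lVert\Delta_C\rVert_F^2+(\operatorname{tr}\Delta_C)^2$ by Isserlis' theorem, and $\mathbb E\lVert x\rVert^4=d_z(d_z+2)$; dividing gives the second identity. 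Summing over knots with weights $\omega_\ell$ then yields Eq.~\ref{eq:appendix-sigreg-population-expansion} with the stated $\kappa_{B,d_z}$.

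For the finite batch, fix $a$ and one knot. Let $m=\frac1B\sum_i e^{\mathrm i t a^\top z_i}$ with i.i.d.\ $z_i$, and write $c=e^{-t^2/2}$. The bias--variance identity $\mathbb E|m-c|^2=|\mathbb E m-c|^2+\operatorname{Var}(m)$ applies, and $\operatorname{Var}(m)=\frac1B\bigl(\mathbb E|e^{\mathrm i tX}|^2-|\mathbb E e^{\mathrm i tX}|^2\bigr)=\frac1B(1-e^{-t^2 v})$. Multiplying by $B$ gives Eq.~\ref{eq:appendix-sigreg-finite-batch} exactly.

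I then expand the variance term: $1-e^{-t^2(1+\varepsilon_a)}=1-e^{-t^2}+t^2e^{-t^2}\varepsilon_a-\frac{t^4}{2}e^{-t^2}\varepsilon_a^2+O(\varepsilon_a^3)$. Three pieces result.
\begin{itemize}
\item The constant $1-e^{-t^2}$ feeds $C_0$.
\item The linear term, averaged with $\mathbb E_a\varepsilon_a$, gives $\beta_{1,d_z}\operatorname{tr}\Delta_C$.
\item The quadratic term $-\frac{t^4}{2}e^{-t^2}\varepsilon_a^2=-2\cdot\frac{t^4}{4}e^{-t^2}\varepsilon_a^2$ combines with the bias term's $B\frac{t^4}{4}e^{-t^2}\varepsilon_a^2$. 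The net coefficient $(B-2)\frac{t^4}{4}e^{-t^2}$ explains $\kappa^{\rm emp}_{B,d_z}$.
\end{itemize}
Averaging over $a$ with the two moment identities finishes Eq.~\ref{eq:appendix-sigreg-empirical-expansion}.

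The step I expect to be most delicate is the spherical fourth-moment identity together with careful remainder bookkeeping. One must check that the $O(\varepsilon_a^3)$ terms are bounded uniformly in $a$ by $\lVert\Delta_C\rVert_F^3$, with constants depending only on the knots and weights. One must also check that the nonzero positive-weight knot assumption is used only to make $\kappa$ nontrivial, not for the validity of the expansion.
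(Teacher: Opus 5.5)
Your proposal is correct and follows the paper's proof essentially step for step: you project onto $a$, Taylor-expand in $a^\top\Delta_C a$, average with the spherical moment identities, apply the bias--variance split of the empirical characteristic function, and combine the quadratic terms into the $B-2$ coefficient. The only difference is that you derive the spherical moments (via $a=x/\lVert x\rVert$ and Isserlis) and make the uniformity of the remainder in $a$ explicit; the paper simply states these.
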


\begin{proof}
For fixed $a$, the scalar $a^\top z$ is Gaussian with variance
$v=a^\top Ca$.  Its characteristic function at $t$ is
$e^{-t^2v/2}$.  Write $v=1+\eta$, where
$\eta=a^\top\Delta_Ca$.
A Taylor expansion around $\eta=0$ gives
\begin{align}
e^{-t^2(1+\eta)/2}-e^{-t^2/2}
&=e^{-t^2/2}\left(e^{-t^2\eta/2}-1\right)\\
&=-\frac{t^2}{2}e^{-t^2/2}\eta+O(\eta^2),\\
\left(e^{-t^2(1+\eta)/2}-e^{-t^2/2}\right)^2
&=\frac{t^4}{4}e^{-t^2}\eta^2+O(|\eta|^3).
\end{align}
For a uniform direction on the unit sphere,
\begin{align}
\mathbb E_a[\eta]
&=\frac{\operatorname{tr}\Delta_C}{d_z},\\
\mathbb E_a[\eta^2]
&=\frac{2\lVert\Delta_C\rVert_F^2+(\operatorname{tr}\Delta_C)^2}
{d_z(d_z+2)}.
\label{eq:appendix-spherical-moments}
\end{align}
Multiplying by $B\omega_\ell$, summing over knots, and using the second
identity proves Eq.~\ref{eq:appendix-sigreg-population-expansion}.

For the finite batch, let
$Y_i=e^{\mathrm{i}ta^\top z_i}$.  Since $|Y_i|=1$,
\begin{align}
\mathbb E\left|
\frac1B\sum_{i=1}^{B}Y_i-e^{-t^2/2}
\right|^2
={}&\left|e^{-t^2v/2}-e^{-t^2/2}\right|^2\\
&+\frac1B\left(1-e^{-t^2v}\right).
\end{align}
Multiplication by $B$ gives
Eq.~\ref{eq:appendix-sigreg-finite-batch}.  Finally,
\begin{equation}
1-e^{-t^2(1+\eta)}
=1-e^{-t^2}+t^2e^{-t^2}\eta
-\frac{t^4}{2}e^{-t^2}\eta^2+O(|\eta|^3).
\end{equation}
Combining this with the population mismatch, then applying both identities in
Eq.~\ref{eq:appendix-spherical-moments}, gives the stated values of
$\beta_{1,d_z}$ and $\kappa_{B,d_z}^{\rm emp}$.
\end{proof}

Because at least one nonzero knot has positive weight, $\kappa_{B,d_z}>0$ and
the population functional has a strict local quadratic minimum at identity
covariance.  The expected finite-batch statistic also has a linear trace term,
so identity covariance need not minimize it.  When $B>2$,
$\kappa_{B,d_z}^{\rm emp}>0$ as well, but the linear term remains.  The
expansion supports SIGReg's local scale and isotropy role.  It does not analyze
collapse at $C=0$, connect effective rank to prediction, or explain the
representation difference.

\end{document}